\providecommand{\tabularnewline}{\\}
\definecolor{darkred}{rgb}{0.8,0,0} % Link color
\g@addto@macro \normalsize {%
 \setlength\abovedisplayskip{4pt plus 2pt minus 2pt}%
 \setlength\belowdisplayskip{4pt plus 2pt minus 2pt}%
}
\DeclareRobustCommand{\VAN}[2]{#2}
\let\orgdescriptionlabel\descriptionlabel
\renewcommand*{\descriptionlabel}[1]{%
  \let\orglabel\label
  \let\label\@gobble
  \phantomsection
  \edef\@currentlabel{#1}%
  \let\label\orglabel
  \orgdescriptionlabel{#1}%
}
\newcommand*{\textlabel}[2]{%
  \edef\@currentlabel{#1}% Set target label
  \phantomsection% Correct hyper reference link
  #1\label{#2}% Print and store label
}
\begin{document}
\setstretch{1}
\title{Awareness Logic: Kripke Lattices as a Middle Ground between Syntactic and Semantic Models}
\author{Gaia Belardinelli and Rasmus K. Rendsvig \institute{Center for Information and Bubble Studies, University of Copenhagen \\ \email{\{belardinelli,rasmus\}@hum.ku.dk}}}
\maketitle
\begin{abstract}
The literature on awareness modeling includes both syntax-free and
syntax-based frameworks. Heifetz, Meier \& Schipper (HMS) propose
a lattice model of awareness that is syntax-free. While their lattice
approach is elegant and intuitive, it precludes the simple option
of relying on formal language to induce lattices, and does not explicitly
distinguish uncertainty from unawareness. Contra this, the most prominent
syntax-based solution, the Fagin-Halpern (FH) model, accounts for
this distinction and offers a simple representation of awareness,
but lacks the intuitiveness of the lattice structure. Here, we combine
these two approaches by providing a lattice of Kripke models, induced
by atom subset inclusion, in which uncertainty and unawareness are
separate. We show our model equivalent to both HMS and FH models by
defining transformations between them which preserve satisfaction
of formulas of a language for explicit knowledge, and obtain completeness
through our and HMS' results. Lastly, we prove that the Kripke lattice
model can be shown equivalent to the FH model (when awareness is propositionally
determined) also with respect to the language of the Logic of General
Awareness, for which the FH model where originally proposed.
\end{abstract}

\section{Introduction}

Awareness has been intensively studied in logic and game theory since
its first formal treatment by Fagin and Halpern \cite{HalpernFagin88}.
In these fields, awareness is added as a complement to uncertainty
in models for knowledge and rational interaction. In short, where
uncertainty concerns an agent's ability to distinguish possible states
of the world based on its available information, awareness concerns
the agent's ability to even contemplate aspects of a state, where
such inability stems from the \emph{unawareness }of the concepts that
constitute said aspects. Thereby, models that include awareness avoid
problems of logical omniscience (at least partially) and allows modeling
game theoretic scenarios where the possibility of some action may
come as an utter surprise.

Several models of awareness have been proposed in the literature,
which either follow the \emph{semantic} (or \emph{syntax-free}) or
the \emph{syntactic} (or \emph{syntax-based}) tradition of awareness
modeling. In the semantic tradition, awareness is usually represented in Aumann-like
event structures, which are defined without appeal to atomic propositions
or other syntax. The awareness notion presented in these frameworks
inherits the syntax-free definition and is thus captured by a specific
subset of states.

An instance of this approach is given by Heifetz, Meier and Schipper
(HMS), who propose a lattice-based conceptualization of awareness
\cite{HMS2006}. The backbone of HMS' \emph{unawareness frames} is
a complete lattice of state-spaces $(\mathcal{S},\preceq)$, with
the intuition that the higher a space is, the richer the ``vocabulary''
it has to describe its states. Since the approach is syntax-free,
this intuition is not modeled using a formal language. It is represented
using $\preceq$ and a family of maps $r_{S}^{S'}$ which projects
state-space $S'$ down to $S$, with $r_{S}^{S'}(s)$ interpreted
as the representation of $s$ in the more limited vocabulary available
in $S$. Uncertainty and unawareness are captured \emph{jointly} by
a \emph{possibility correspondence $\Pi_{a}$} for each $a\in Ag$,
which maps a state weakly downwards to the set of states the agent
considers possible. If the mapped-to space is strictly less expressive,
this represents that the agent does not have full awareness of the
mapped-from state.

That HMS keep their model syntax-free is motivated in part by its
applicability in theoretical economics \cite[p. 79]{HMS2006}. We
think that their lattice-based conceptualization of awareness is both
elegant, interesting and intuitive, as it captures different levels
of awareness in a suggestive way. However, we also find its formalization
cumbersome. Exactly the choice to go fully syntax-free robs the model
of the option to rely on formal language to induce lattices and to
specify events, resulting in constructions which we find laborious
to deal with. This may, of course, be an artifact of us being accustomed
to non-syntax-free models used widely in epistemic logic.

Another artifact of our familiarity with epistemic logic models is
that we find HMS' joint definition of uncertainty and unawareness
difficult to relate to other formalizations of knowledge. When HMS
propose properties of their $\Pi_{a}$ maps, it is not clear to us
which aspects concern knowledge and which concern awareness. They
merge two dimensions which, to us, would be clearer if left separated.\footnote{As a reviewer of the short version of this paper \cite{BelardinelliRendsvig2020}
pointed out, then HMS take \emph{explicit }knowledge as foundational,
and derive awareness from it. This makes the one-dimensional representation
justified, if not even desirable. In contrast, epistemic logic models
are standardly interpreted as taking \emph{implicit }knowledge as
foundational. We think along the second line, and add awareness as
a second dimension. We are not taking a stand on whether one interpretation
is superior, but provide results to move between them.}

Moreover, while the HMS model allows agents to reason about their
unawareness, as possibility correspondences $\Pi_{a}$ provide them
with a subjective perspective, Halpern and Rêgo \cite{HalpernRego2008}
point out that the model includes no objective state, and so no outside
perspective. 

Alternatively, the literature has proposed syntactic approaches to
awareness modeling. The syntactic tradition has been initiated by
the seminal \cite{HalpernFagin88}, where Fagin and Halpern (FH) introduce
the Logic of General Awareness ($\Lambda_{LGA}$). Models for this
logic (FH models) are Kripke models $M=(W,R,V)$ augmented with an
\emph{awareness function $\mathcal{A}_{a}$}, for each agent $a\in Ag$,
that\emph{ }represents an agent $a$'s awareness at state $w$ by
assigning to $(a,w)$ a set of formulas\textemdash which is why these
models are called \emph{syntax-based}.

Since FH models represent uncertainty using the accessibility relation
$R$, as in standard epistemic logic, FH explicitly distinguish the
uncertainty and unawareness dimension. This allows for a versatile
representation of awareness, as, when the awareness function is not
otherwise restricted, an agent's awareness in a state can be any arbitrary
set of formulas. The FH approach has thus been inherited by a multitude
of models.

However, FH models lack the intuitiveness of the lattice structure,
and while Halpern and Rêgo argue that HMS models lack the objective
perspective, HMS \cite{HMS2006,Schipper2014} also argue that FH models
only present an outside perspective, as the full model must be taken
into account when assigning knowledge and awareness.\footnote{\cite{HalpernRego2008} argues that this boils down to a difference
in philosophical interpretation.}

In the present paper, we aim at combining the advantages of the HMS
and FH approaches. We propose to model awareness through a syntactically
induced lattice structure\textemdash primarily inspired by the HMS
model\textemdash where the awareness notion is captured through an
awareness map defined semantically. Roughly, we suggest to start from
a Kripke model $\mathtt{K}$ for a set of atoms $At$, spawn a lattice
containing restrictions of $\mathtt{K}$ to subsets of $At$, and
finally add maps $\pi_{a}$ on the lattice that take a world to a
copy of itself in a restricted model. This keeps the epistemic and
awareness dimensions separate: accessibility relations $R_{a}$ of
$\mathtt{K}$ encode uncertainty while maps $\pi_{a}$ encode awareness.

In this \emph{Kripke lattice model} both subjective and objective
perspectives are present: the starting Kripke model provides an outside
perspective on agents' knowledge and awareness, while the submodel
obtained by following $\pi_{a}$ presents agent $a$'s subjective
perspective. We remark further on this below.

Beyond the introduction of Kripke lattice models,\footnote{First introduced in the short version of this paper, \cite{BelardinelliRendsvig2020}.}
the main contribution of the paper is a set of technical results situating
these models with respect to the HMS and FH models. These comprise
three results about the equivalences of model classes (see Figure~\ref{fig:triangles}),
and as corollaries, two completeness results for Kripke lattice models.\vspace{-10pt}

\begin{figure}[H]
\begin{centering}
\resizebox{0.75\textwidth}{!}{
	\begin{tikzpicture}
\tikzset{world/.style={rectangle, text centered}
}

%  triangle on the left
\node[world] (a) {KL};
\node[world, right= of a] (b) {FH};
\node[world, above= of a, xshift=24pt] (c) {HMS};
\node[world, below=of a, yshift=15pt, xshift=25pt] (!) {\scalebox{1}[1]{$\mathcal{L}$-equivalence}};

\draw (a) to  node[below] {\scalebox{0.5}[0.5]{{
			Prop.s 44,45}}} (b);
\draw (b) to node[right, xshift=-1pt]  {\scalebox{0.5}[0.5]{{\begin{tabular}{l}
			Halpern R\^ego 2008
			\end{tabular} }}} (c);
\draw (c) to   node[left] {\scalebox{0.5}[0.5]{{
			Prop.s 26,27}}} (a);	

%  triangle on the right
\node[world, right=of b, xshift=100pt] (a) {KL};
\node[world, right= of a] (b) {FH};
\node[world, above= of a, xshift=24pt] (c) {HMS};
\node[world, below=of a, yshift=15pt, xshift=25pt] (!) {$\mathcal{L}^{KA}$-equivalence};

\draw (a) to  node[below] {\scalebox{0.5}[0.5]{{
			Prop.s 51,52}}}(b);
\path(b) to  node {$?$}(c);
\path (c) to node {$?$} (a);	
\end{tikzpicture}}
\par\end{centering}
\caption{\label{fig:triangles} \textbf{Known equivalence results between HMS,
FH and Kripke lattice (KL) models.} Left: \emph{$\mathcal{L}$}-equivalence
results between the model classes, two shown in this paper. Right:
$\mathcal{L}^{KA}$-equivalence between FH and KL models shown in
this paper, and the open issue of the correspondence between HMS and
the other two model classes with respect to $\mathcal{L}^{KA}$.}
\end{figure}
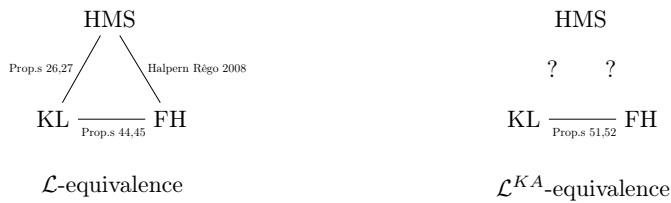

First, we show that, under three assumptions on $\pi_{a}$ and when
each $R_{a}$ is an equivalence relation, the Kripke lattice model
is $\mathcal{L}$-equivalent to the HMS model, in the sense that the
two satisfy the same formulas of the language of explicit knowledge
and awareness $\mathcal{L}$, defined below. Through this result and
the completeness of HMS logic $\Lambda_{HMS}$ with respect to the
class of HMS models, we obtain completeness of $\Lambda_{HMS}$ with
respect to the class $\boldsymbol{KLM}_{EQ}$ of Kripke lattice models
with equivalence relations.

Second, we show that $\boldsymbol{KLM}_{EQ}$ is $\mathcal{L}$-equivalent
to the class $\boldsymbol{S}$ of \emph{propositionally determined}
FH models with equivalence relations, as again the two satisfy the
same $\mathcal{L}$ formulas. %
\begin{comment}
This result and the completeness of $\Lambda_{HMS}$ with respect
to $\boldsymbol{S}$ provides us with another path to show the completeness
of $\Lambda_{HMS}$ with respect to $\boldsymbol{KLM}_{EQ}$.\linebreak{}
I know that you like this, and I agree it's cool, but since it is
not only a the level of a remark later, I don't think it warrants
mention in the introduction. Worstcase, I think its inclusion might
confuse the reader at this stage. I would remove it\textemdash or
put it in a footnote if you like\textemdash as it is not essential
to what we are right now telling the reader.
\end{comment}

Third, switching to use Kripke lattice models and FH models as semantics
for the language $\mathcal{L}^{KA}$ for implicit and explicit knowledge
and awareness\textemdash for which FH models were originally conceived\textemdash we
show that the class of Kripke lattices without restriction on the
accessibility relation and propositionally determined FH models are
$\mathcal{L}^{KA}$-equivalent. By FH's completeness result and our
model equivalence result, we show that the \emph{Logic of General
Awareness} $\Lambda_{LGA}$, which is based on $\mathcal{L}^{KA}$,
is also complete with respect to Kripke lattice models.

Jointly, these results firmly situates Kripke lattice models for awareness
with respect to the main existing models. Through detailed transformations
between the model classes, the results directly show correspondences
between the models' elements, and show that for both languages $\mathcal{L}$
and $\mathcal{L}^{KA}$, Kripke lattice models provide a rich semantic
framework, axiomatically characterizable by existing logics.

As Kripke lattice models are a novel construction, the paper's constructions
and results are new. However, the second result mentioned may also
be obtained through the first and an existing result by Halpern and
Rêgo \cite{HalpernRego2008}, that show that the class $\boldsymbol{S}$
of partitional, propositionally determined FH models is $\mathcal{L}$-equivalent
to the class of HMS models. We provide a direct proof of this result
as the involved transformation directly explicates the relationship
between Kripke lattices and FH models, used further to establish the
$\mathcal{L}^{KA}$-equivalence of these model classes.

The paper progresses as follows. Sections \ref{sec:HMS} and \ref{sec:Kripke-Lattice-Models}
present respectively the HMS model and our rendition, Kripke lattice
models. Section \ref{sec:Moving} introduces transformations between
the two models classes, and Section \ref{sec:Language} shows that
the transformations preserve formula satisfaction. Section \ref{sec:Logic}
presents a logic due to HMS \cite{HMS2008}, and shows, as a corollary
to our results, that it is complete with respect to our rendition.
Section \ref{sec:FH} introduces the FH model structure with respect
to language $\mathcal{L}$. As for the HMS model class, the next two
sections, Section \ref{sec:Moving btwn FH and Kripke lattices} and
\ref{sec:FH semantics}, presents the transformations between FH and
Kripke lattice models and show that they preserve formula satisfaction
with respect to language $\mathcal{L}$, respectively. Section \ref{sec:L^A}
presents the language $\mathcal{L}^{KA}$ on which the $\Lambda_{LGA}$
is based, and shows the equivalence of Kripke lattice and FH models
with respect to it, again using transformations. $\Lambda_{LGA}$
is introduces in Section \ref{sec:Logic-1}, where soundness and completeness
of $\Lambda_{LGA}$ over Kripke lattice models is shown. Section \ref{sec:Concluding-Remarks}
holds concluding remarks.\smallskip{}

Throughout the paper, we assume that $Ag$ is a finite, non-empty
set of agents, and that $At$ is a countable, non-empty set of atoms.

\section{\label{sec:HMS}The HMS Model}

This section presents HMS unawareness frames \cite{HMS2006}, their
syntax-free notions of knowledge and awareness, and their augmentation
with HMS valuations, producing HMS models \cite{HMS2008}. For context,
the HMS model is a multi-agent generalization of the Modica-Rustichini
model \cite{ModicaRustichini1999} which is equivalent to Halpern's
model in \cite{halpern2001alternative}, generalized by Halpern and
Rêgo to multiple agents \cite{HalpernRego2008}, resulting in a model
equivalent to the HMS model, cf. \cite{HMS2008}. See \cite{Schipper2014}
for an extensive~review.

The following definition introduces the basic structure underlying
the HMS model, as well as the properties of the $\Pi_{a}$ map that
controls the to-be-defined notions of knowledge and awareness. The
properties are described after Definition~\ref{def:unawareness-frame}.
Following Definition~\ref{def:HMS model} of HMS models, Figure~\ref{fig:HMSmodel}
illustrates a full HMS model, including its unawareness frame.
\begin{definition}
\label{def:unawareness-frame}An \textbf{unawareness frame} is a tuple
$\mathsf{F}=(\mathcal{S},\preceq,\mathcal{R},\Pi)$ where\smallskip{}

\noindent $(\mathcal{S},\preceq)$ is a complete lattice with $\mathcal{S}=\{S,S',...\}$
a set of disjoint, non-empty \textbf{state-spaces} $S=\{s,s',...\}$
s.t. $S\preceq S'$ implies $|S|\leq|S'|$. Let $\Omega_{\mathsf{F}}:=\bigcup_{S\in\mathcal{S}}S$
be the disjoint union of state-spaces~in~$\mathcal{S}$. For $X\subseteq\Omega_{\mathsf{F}}$,
let $S(X)$ be the state-space containing $X$, if such exists (else
$S(X)$ is undefined). Let $S(s)$ be $S(\{s\})$.\smallskip{}

\noindent $\mathcal{R}=\{r_{S}^{S'}\colon S,S'\in\mathcal{S},S\preceq S'\}$
is a family of \textbf{projections} $r_{S}^{S'}:S'\rightarrow S$.
Each $r_{S}^{S'}$ is surjective, $r_{S}^{S}$ is $Id$, and $S\preceq S'\preceq S''$
implies commutativity: $r_{S}^{S''}=r_{S}^{S'}\circ r_{S'}^{S''}$.
Denote $r_{S}^{T}(w)$ also by $w_{S}$.

$D^{\uparrow}=\bigcup_{S'\succeq S}(r_{S}^{S'})^{-1}(D)$ is the \textbf{upwards
closure }of $D\subseteq S\in\mathcal{S}$.\footnote{\textit{\emph{To avoid confusion, note that for $d\in S$, $(r_{S}^{S'})^{-1}(d)=\{s'\in S':r_{S}^{S'}(s')=d\}$
and for $D\subseteq S$, $(r_{S}^{S'})^{-1}(D)=\bigcup_{d\in D}(r_{S}^{S'})^{-1}(d)$.}}}\smallskip{}

\noindent $\Pi$ assigns each $a\in Ag$ a \textbf{possibility correspondence
}$\Pi_{a}:\Omega_{\mathsf{F}}\rightarrow2^{\Omega_{\mathsf{F}}}$
satisfying\vspace{-4pt}
\begin{description}
\item [{\label{HMS1:confinement}Conf}] (\textbf{Confinement})\quad{}
If $w\in S'$, then $\Pi_{a}(w)\subseteq S$ for some $S\preceq S'$.
\item [{Gref\label{HMS2:G.Ref}}] (\textbf{Generalized~Reflexivity}) $w\in\left(\Pi_{a}(w)\right)^{\uparrow}$
for every $w\in\Omega_{\mathsf{F}}$.
\item [{\label{HMS3:stationarity}Stat}] (\textbf{Stationarity}) $w'\in\Pi_{a}(w)$
implies $\Pi_{a}(w')=\Pi_{a}(w)$.
\item [{\label{HMS4:PPI}PPI}] (\textbf{Projections~Preserve~Ignorance})
If $w\in S'$ and $S\preceq S'$, then\linebreak{}
 $(\Pi_{a}(w))^{\uparrow}\subseteq(\Pi_{a}(r_{S}^{S'}(w)))^{\uparrow}$.
\item [{\label{HMS5:PPK}PPK}] (\textbf{Projections~Preserve~Knowledge})
If $S\preceq S'\preceq S''$, $w\in S''$ and $\Pi_{a}(w)\subseteq S'$,
then $r_{S}^{S'}(\Pi_{a}(w))=\Pi_{a}(r_{S}^{S''}(w))$.
\end{description}
Jointly call these five properties of $\Pi_{a}$ the \textbf{\textbf{\textlabel{HMS properties}{text:HMSproperties}}}\label{text:HMSproperties}.
\end{definition}

\ref{HMS1:confinement} ensures that agents only consider possibilities
within one fixed ``vocabulary''; \ref{HMS2:G.Ref} induces factivity
of knowledge and \ref{HMS3:stationarity} yields introspection for
knowledge and awareness. \ref{HMS4:PPI} entails that at down-projected
states, agents neither ``miraculously'' know or become aware of
something new, while \ref{HMS5:PPK} implies that at down-projected
states, the agent can still ``recall'' all events she knew before,
if they are still expressible. Jointly \ref{HMS4:PPI} and \ref{HMS5:PPK}
imply that agents preserve awareness of all events at down-projected
states, if they are still expressible.
\begin{remark}
\label{rem:No-Objective-State}Unawareness frames include no objective
perspective, as agents do not\textemdash unless they are fully aware\textemdash have
a range of uncertainty defined for the maximal state-space. Taking
the maximal state-space to contain a designated `actual world' and
as providing a full and objective description of states, one can still
not evaluate agents ``true'' uncertainty/implicit knowledge. See
e.g. Figure~\ref{fig:HMSmodel} below: In $(\neg i,\ell)$, the dashed
agent's ``true'' uncertainty about $\ell$ is not determined.
\end{remark}

\subsection{Syntax-Free Unawareness}

Unawareness frames provide sufficient structure to define syntax-free
notions of knowledge and awareness. These are defined directly as
events on $\Omega_{\mathsf{F}}$.
\begin{definition}
Let $\mathsf{F}=(\mathcal{S},\preceq,\mathcal{R},\Pi)$ be an unawareness
frame. An \textbf{event} in $\mathsf{F}$ is any pair $(D^{\uparrow},S)$
with $D\subseteq S\in\mathcal{S}$ with $S$ also denoted $S(D^{\uparrow})$.
Let $\Sigma_{\mathsf{F}}$ be the set of events of $\mathsf{F}$.

The \textbf{negation} of the event $(D^{\uparrow},S)$ is $\neg(D^{\uparrow},S)=((S\backslash D)^{\uparrow},S)$.

The \textbf{conjunction} of events $\{(D_{i}^{\uparrow},S_{i})\}_{i\in I}$
is $((\bigcap_{i\in I}D_{i}^{\uparrow}),\sup_{i\in I}S_{i})$.

The events that $a$ \textbf{knows} event $(D^{\uparrow},S)$ and
where $a$ is \textbf{aware} of it are \vspace{-8pt}

{\small{}
\begin{align*}
\boldsymbol{K}_{a}((D^{\uparrow},S)) & =\begin{cases}
(\{w\in\Omega_{\mathsf{F}}\colon\Pi_{a}(w)\subseteq D^{\uparrow}\},S(D)) & \text{\,\,\,\,\,\,\,\,\,\,\,if }\exists w\in\Omega_{\mathsf{F}}.\Pi_{a}(w)\subseteq D^{\uparrow}\\
(\emptyset,S(D)) & \text{\,\,\,\,\,\,\,\,\,\,\,else}
\end{cases}\\
\boldsymbol{A}_{a}((D^{\uparrow},S)) & =\begin{cases}
(\{w\in\Omega_{\mathcal{\mathsf{F}}}\colon\Pi_{a}(w)\subseteq S(D^{\uparrow})^{\uparrow}\},S(D)) & \text{if }\exists w\in\Omega_{\mathsf{F}}.\Pi_{a}(w)\subseteq S(D^{\uparrow})^{\uparrow}\\
(\emptyset,S(D)) & \text{else}
\end{cases}
\end{align*}
}{\small\par}
\end{definition}

\noindent Negation, conjunction, knowledge and awareness events are
well-defined \cite{HMS2006,Schipper2014}. To illustrate the definitions,
some intuitions behind them: $i)$ an event modeled as a pair $(D^{\uparrow},S)$
captures that $a)$ if the event is expressible in $S$, then it is
also expressible in any $S'\succeq S$, hence $D^{\uparrow}$ is the
set of all states where the event is expressible and occurs, and $b)$
the event is expressible in the``vocabulary'' of $S$, but not the
``vocabulary'' of lower state-spaces: $D\subseteq S$ are the states
with the lowest ``vocabulary'' where the event is expressible and
occurs. \cite{Schipper2014} remarks that for $(D^{\uparrow},S)$,
if $D\ne\emptyset$, then $S$ is uniquely determined by $D^{\uparrow}$.
$ii)$ Events are given a non-binary understanding: an event $(D^{\uparrow},S)$
and it's negation does not partition $\Omega_{\mathsf{F}}$, as $s\in S'\prec S$
is in neither, but they do partition every $S''\succeq S$. $iii)$
Conjunction defined using supremum captures that the state-space required
to express the conjunction of two events is the least expressive state-space
that can express both events. $iv)$ Knowledge events are essentially
defined as in Aumann structures/state-space models: the agent knows
an event if its ``information cell'' is a subset of the event's
states. $v)$ Awareness events captures that ``\emph{an agent is
aware of an event if she considers possible states in which this event
is \textquotedblleft expressible\textquotedblright }.''\cite[p. 97]{Schipper2014}

\subsection{HMS Models}

Though unawareness frames provide a syntax-free framework adequate
for defining awareness, HMS \cite{HMS2008} use them as a semantics
for a formal language in order to identify their logic. The language
and logic are topics of Sections \ref{sec:Language} and \ref{sec:Logic}.

Instead, the models we will later define are not syntax-free. As Kripke
models, they include a valuation of atomic propositions. Therefore,
they do not correspond to unawareness frames directly, but to the
models that result by augmenting such frames with valuations. To compare
the two model classes, we define such valuations here, postponing
HMS syntax and semantics to Section~\ref{sec:Language}. Figure~\ref{fig:HMSmodel}
illustrates an HMS model, using an example inspired by \cite[p. 87]{HMS2006}

\begin{figure}[H]
\begin{centering}
\resizebox{0.8\textwidth}{!}{
		\begin{tikzpicture}
\tikzset{world/.style={rectangle, draw=black, rounded corners, text width=23pt, minimum height=18pt, %14-18pt
		text centered},
	modal/.style={>=stealth',shorten >=1pt,shorten <=1pt,auto,node distance=1cm},
	state/.style={circle,draw,inner sep=0.5mm,fill=black}, 
	PCon/.style ={rectangle,draw=black, rounded corners, semithick, densely dotted, text width=30pt, minimum height=18pt, text centered},
	PCbn/.style ={draw=black, rounded corners, semithick, dashed, text width=34pt, minimum height=21pt}, 
	PCo/.style={->,densely dotted, semithick, >=stealth'},
	EPCo/.style={-,densely dotted, semithick},
	EPCb/.style={-,dashed, semithick},
	PCb/.style={->,dashed, semithick, >=stealth'}, %%% RASMUS: ARROW HEAD CHANGE SUGGESTION
	proj/.style={-, line width=0.001pt},
	%	block/.style ={rectangle, fill=black!2,draw=black!40, rounded corners, text width=5em, minimum height=0.3cm},
	% RASMUS suggests this to limit number of mainly-line rectangles:
	block/.style ={rectangle, fill=black!10,draw=black!10, rounded corners, text width=5em, minimum height=0.3cm},
	reflexive above/.style={->,out=60,in=100,looseness=8},
	%	reflexive below/.style={->,loop,looseness=7,in=240,out=300},
	reflexive left/.style={->,out=-170,in=-190,looseness=8},
	%	reflexive right/.style={->,loop,looseness=7,in=30,out=330}
}

%Top space
\begin{pgfonlayer}{foreground}
\node[world] (a) {$i,\ell$};

\node[world, right= of a] (b) {$\neg i,\ell$};
\node[world, right= of b] (c) {$  \neg i,\!\!\neg \ell$};

%Left space
\node[world, below =of a, xshift=4mm, yshift=2.5mm] (b') {$\neg i$};
\node[world, left=of b'] (a') {$i$};

%Right space
\node[world, right=of b', xshift=10mm] (a'') {$\ell$};
\node[world, right=of a''] (b'') {$\neg \ell$};

%Lower space
\node[world, below=of b, yshift=-10mm] (a''') {$\emptyset$};
\end{pgfonlayer}

%Possibility Correspondances
\node [PCbn](g)[text width=27pt, minimum height=21pt] {}; 	

\node [PCbn, above=-7.05mm of b'] (g') [text width=27pt, minimum height=21pt] {}; 	
\node [PCon,  fit= (a)(b)(c)] (g''){}; 	
\node [PCon, fit=(a''')] (g''') {};

\draw [PCb] (a) to [out=-200,in=130,looseness=7] (g)  node {$ $};

\node[above=-3pt of a, xshift=2pt] (anchor1) {};
\draw[PCo] (a) to [out=110,in=80,looseness=8] (anchor1);

\node[above=-3pt of b, xshift=2pt] (anchor1) {};
\draw[PCo] (b) to [out=110,in=80,looseness=8] (anchor1);

\node[above=-3pt of c, xshift=2pt] (anchor1) {};
\draw[PCo] (c) to [out=110,in=80,looseness=8] (anchor1);

\draw[PCb] (b') to [out=-200,in=130,looseness=6.5] (g');

\node [PCbn, above=-7.1mm of a'''](h')[text width=27pt, minimum height=21pt] {};

\draw[PCb] (a''') to [out=135,in=105,looseness=7] (h');
\draw[PCo] (a''') to [out=90,in=60,looseness=5] (g''');

\draw [PCb] (b) to (g')  node {$ $};
\draw [PCb] (c) to (g')  node {$ $};
\draw [PCo] (b') to (g''')  node {$ $};

\node [ above=-7.1mm of b''](h'')[text width=27pt, minimum height=21pt] {}; 				
%Spaces
\begin{pgfonlayer}{background}
\node[block, fit=(g'')] (block) [label= right:{$S_{\{i,\ell\}}$}]{ };
\node[block, fit=(g')(a')] (block) [label=left: {$S_{\{i\}}$}]{ };

\node[block, fit=(a'')(h'')] (block) [label= right:{$S_{\{\ell\}}$}]{ };

\node[block, fit=(a''')(g''')] (block) [label= right:{$S_{\emptyset}$}]{ };
\end{pgfonlayer}       

%Projections

\draw [-] (a.south) -- (a''.north)[line width=0.001pt];
\draw [-] (b.south) -- (a''.north)[line width=0.001pt];
\draw [-] (c.south) -- (b''.north)[line width=0.001pt];

\draw [-] (a.south) -- (a'.north)[line width=0.001pt];
\draw [-] (b.south) -- (b'.north)[line width=0.001pt];
\draw [-] (c.south) -- (b'.north)[line width=0.001pt];

\draw [-] (a'.south) -- (a'''.north)[line width=0.001pt];
\draw [-] (b'.south) -- (a'''.north)[line width=0.001pt];

\draw [-] (a''.south) -- (a'''.north)[line width=0.001pt];
\draw [-] (b''.south) -- (a'''.north)[line width=0.001pt];

\end{tikzpicture}
}
%% I would like reflexive poss. correspondance on left \neg i state

%% and reflexive poss. correspondanceS on S_Ø

\vspace{-8pt}
\par\end{centering}
\caption{\label{fig:HMSmodel}An HMS model with four state-spaces (gray rectangles),
ordered spatially as a lattice. States (smallest rectangles) are labeled
with their true literals, over the set $At=\{i,\ell\}$. Thin lines
between states show projections. There are two possibility correspondences
(dashed and dotted): arrow-to-rectangle shows a mapping from state
to set (information cell). Omitted arrows go to $S_{\emptyset}$ and
are irrelevant to the story.\textbf{$\protect\phantom{aaa}$ $\protect\phantom{aaa}$Story:}
Buyer (dashed) and Owner (dotted) consider trading a firm, the price
influenced by whether $i$ (a value-raising innovation) and $\ell$
(a value-lowering lawsuit) occurs. Assume both occur and take $(i,\text{\ensuremath{\ell}})$
as actual. Then Buyer has full information, while Owner has factual
uncertainty and uncertainty about Buyer's awareness and higher-order
information, ultimately considering it possible that Buyer holds Owner
fully unaware. \emph{In detail:} Buyer's $(i,\ell)$ information cell
has both $i$ and $\ell$ defined (and is also singleton), so Buyer
is aware of them (and also knows everything). Owner is also aware
of $i$ and $\ell$, but their $(i,\ell)$ information cell contains
also $\neg i$ and $\neg\ell$ states, so Owner knows neither. Owner
is also uncertain about Buyer's information: Owner knows that either
Buyer knows $i$ and $\ell$ (cf. Buyer's $(i,\ell)$ information
cell), or Buyer knows $\neg i$, but is unaware of $\ell$ (cf. the
dashed arrows from $\neg i$ states to the less expressive state space
$S_{\{i\}}$) and then only holds it possible that Owner is unaware
of both $i$ and $\ell$ (cf. the dotted map to $S_{\emptyset}$).
See also Remark \ref{rem:HMS-redundant-states} concerning~$S_{\{\ell\}}$.}
\end{figure}
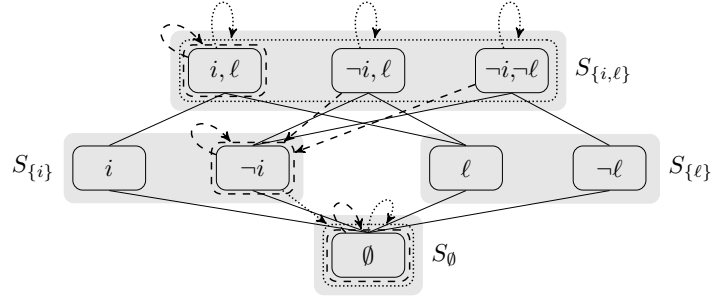

\begin{definition}
\label{def:HMS model} Let $\mathsf{F}=(\mathcal{S},\preceq,\mathcal{R},\Pi)$
be an unawareness frame with events $\Sigma_{\mathsf{F}}$. An \textbf{HMS
valuation }for $At$ and $\mathsf{F}$ is a map $V_{\mathsf{M}}:At\rightarrow\Sigma_{\mathsf{F}}$,
assigning to every atom from $At$ an event in $\mathsf{F}$. An \textbf{HMS
model} is an unawareness frame augmented with an HMS valuation, denoted
$\mathsf{M}=(\mathcal{S},\preceq,\mathcal{R},\Pi,V_{\mathsf{M}})$.
\end{definition}

\begin{remark}
\noindent \label{rem:HMS-valuations-only}HMS valuations only partially
respect the intuitive interpretation of state-spaces lattices, where
$S\preceq S'$ represents that $S'$ is at least as expressive as
$S$. If $S\preceq S'$, then $p\in At$ having defined truth value
at $S$ entails that it has defined truth value at $S'$, but if $S$
is strictly less expressive than $S'$, then this does not entail
that there is some atom $q$ with defined truth value in $S'$, but
undefined truth value in $S$. Hence, there can exist two spaces defined
for the same set of atoms, but where one is still ``strictly more
expressive'' \mbox{than the other.}
\end{remark}

\begin{remark}
\label{rem:HMS-redundant-states} Concerning Figure~\ref{fig:HMSmodel},
then the state-space $S_{\{\ell\}}$ is, in a sense, redundant: its
presence does not affect the knowledge or awareness of agents in the
state $(i,\ell)$, and it presence is not required by definition.
This stands in contrast with the corresponding Kripke lattice model
in Figure~\ref{fig:Kripke-lattice}, cf. Remark \ref{rem:KLM-redundant-states}.
\end{remark}

\section{\label{sec:Kripke-Lattice-Models}Kripke Lattice Models}

The models for awareness we construct start from Kripke models:
\begin{definition}
\label{def:kripke}A \textbf{Kripke model} for $At'\subseteq At$
is a tuple $\mathtt{K}=(W,R,V)$ where $W$ is a non-empty set of
worlds, $R:Ag\rightarrow\mathcal{P}(W^{2})$ assigns to each agent
$a\in Ag$ an accessibility relation denoted $R_{a}$, and $V:At'\rightarrow\mathcal{P}(W)$
is a valuation.

The \textbf{information cell }of $a\in Ag$ at $w\in W$ is $I_{a}(w)=\{v\in W\colon wR_{a}v\}$.
\end{definition}

\noindent The term `information cell' hints at an epistemic interpretation.
For generality, $R$ may assign non-equivalence relations. Some results
explicitly assume otherwise.

As counterpart to the HMS state-space lattice, we build a lattice
of restricted models. The below definition of the set of worlds $W_{X}$
ensures that for any $X,Y\subseteq At$, $X\ne Y$, the sets $W_{X}$
and $W_{Y}$ are disjoint, mimicking the same requirement for state-spaces.
In the restriction $\mathtt{K}_{X}$ of \textsc{$\mathtt{K}$,} it
is required that $(w_{X},v_{X})\in R_{aX}$ iff $(w,v)\in R_{a}$.
Each direction bears similarity to an HMS property: left-to-right
to \ref{HMS5:PPK} and right-to-left to \ref{HMS4:PPI}. They also
remind us, resp., of the \emph{No Miracles} and \emph{Perfect Recall}
properties from Epistemic Temporal Logic, cf. e.g., \cite{Benthem_etal_2009,vanLee_etal_2019}.
\begin{definition}
\label{def:restricted model}Let $\mathtt{K}=(W,R,V)$ be a Kripke
model for $At$. The \textbf{restriction} of $\mathtt{K}$ to $X\subseteq At$
is the Kripke model $\mathtt{K}_{X}=(W_{X},R_{X},V_{X})$ for $X$
where

$W_{X}=\{w_{X}\colon w\in W\}$ where $w_{X}$ is the ordered pair
$(w,X)$,

$R_{Xa}=\{(w_{X},v_{X})\colon(w,v)\in R_{a}\}$ and

$V_{X}:X\rightarrow\mathcal{P}(W_{X})$ such that, for all $p\in X$,$w_{X}\in V_{X}(p)$
iff $w\in V(p)$.

\noindent For the $R_{Xa}$ information cell of $a$ at $w_{X}$,
write $I_{a}(w_{X})$.
\end{definition}

To construct a lattice of restricted models, we simply order them
in accordance with subset inclusion of the atoms. This produces a
complete lattice.
\begin{definition}
\label{def:restriction lattice}Let $\mathtt{K}$ be a Kripke model
for $At$. The \textbf{restriction lattice} of $\mathtt{K}$ is $(\mathcal{K}(\mathtt{K}),\trianglelefteqslant)$
where $\mathcal{K}(\mathtt{K})=\{\mathtt{K}_{X}\}_{X\subseteq At}$
is the set of restrictions of $\mathtt{K}$, and $\mathtt{K}_{X}\trianglelefteqslant\mathtt{K}_{Y}$
iff $X\subseteq Y$.
\end{definition}

Projections in unawareness frames are informally interpreted as mapping
states to alternates of themselves in less expressive spaces. Restriction
lattices offer the same, but implemented with respect to $At$: if
$Y\subseteq X\subseteq At$, then $w_{Y}$ is the alternate of $w_{X}$
formally described by the smaller vocabulary of atoms, $Y$.

The accessibility relations of the Kripke models in a restriction
lattice account for the epistemic dimension of the HMS possibility
correspondence $\Pi_{a}$. For the awareness dimension, each agent
$a\in Ag$ is assigned an \emph{awareness map} $\pi{}_{a}$ that maps
a world $w_{X}$ down to $\pi_{a}(w_{X})=w_{Y}$ for some $Y\subseteq X$.
We think of $\pi_{a}(w_{X})$ as $a$'s \emph{awareness image} of
$w_{X}$\textemdash i.e., $w_{X}$ as it occurs to $a$ given her
(un)awareness; the submodel from $\pi_{a}(w_{X})$ is thus $a$'s
subjective perspective.

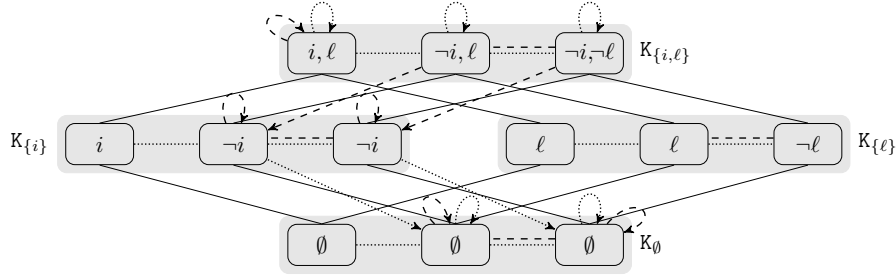
\begin{figure}
\resizebox{1\textwidth}{!}{
	\begin{tikzpicture}
\tikzset{world/.style={rectangle, draw=black, rounded corners, text width=23pt, minimum height=18pt, %14-18pt
		text centered},
	modal/.style={>=stealth',shorten >=1pt,shorten <=1pt,auto,node distance=1cm},
	state/.style={circle,draw,inner sep=0.5mm,fill=black}, 
	PCon/.style ={rectangle,draw=black, rounded corners, semithick, densely dotted, text width=30pt, minimum height=18pt, text centered},
	PCbn/.style ={draw=black, rounded corners, semithick, dashed, text width=34pt, minimum height=21pt}, 
	PCo/.style={->,densely dotted, semithick, >=stealth'},
	EPCo/.style={-,densely dotted, semithick},
	EPCb/.style={-,dashed, semithick},
	PCb/.style={->,dashed, semithick, >=stealth'},
	proj/.style={-, line width=0.001pt, draw=black!2},
	block/.style ={rectangle, fill=black!10,draw=black!10, rounded corners, text width=5em, minimum height=0.3cm},
	reflexive above/.style={->,out=60,in=100,looseness=8},
	%	reflexive below/.style={->,loop,looseness=7,in=240,out=300},
	reflexive left/.style={->,out=-170,in=-190,looseness=8},
	%	reflexive right/.style={->,loop,looseness=7,in=30,out=330}
}

%Top space
\begin{pgfonlayer}{foreground}
\node[world] (a) {$i,\ell$};
\node[world, right= of a] (b) {$\neg i, \ell$};
\node[world, right= of b] (c) {$\neg i,\!\!\neg \ell$};

%Left space
\node[world, below =of a, xshift=7mm, yshift=2.5mm] (c') {$\neg i$};
\node[world, left=of c'] (b') {$\neg i$};
\node[world, left=of b'] (a') {$ i$};

%Right space
\node[world, right=of c', xshift=6mm] (a'') {$\ell$};
\node[world, right=of a''] (b'') {$ \ell$};
\node[world, right=of b''] (c'') {$\neg \ell$};

%Lower space
\node[world, below=of b, yshift=-13mm] (b''') {$\emptyset$};
\node[world, right=of b'''] (c''') {$\emptyset$};
\node[world, left=of b'''] (a''') {$\emptyset$};
\end{pgfonlayer}

%Outer model box 
\node[block, fit=(a)(b)(c)] (block) [label= right:{$\mathtt{K}_{\{i,\ell\}}$}]{ };
\node[block, fit=(a')(b')(c')] (block) [label=left: {$\mathtt{K}_{\{i\}}$}]{ };

\node[block, fit=(a'')(b'')(c'')] (block) [label= right:{$\mathtt{K}_{\{\ell\}}$}]{ };

\node[block, fit=(a''')(b''')(c''')] (block) [label= right:{$\mathtt{K}_{\emptyset}$}]{ };

%Awareness maps
\draw[PCo] (a) to [out=110,in=70,looseness=7] (a);
\draw[PCo] (b) to [out=110,in=70,looseness=7] (b);
\draw[PCo] (c) to [out=110,in=70,looseness=7] (c);	
\draw[EPCb] (c) to [out=170,in=10,looseness=0] (b);	
\draw[EPCb] (c') to [out=170,in=10,looseness=0] (b');	
\draw[EPCb] (c'') to [out=170,in=10,looseness=0] (b'');	
\draw[EPCb] (c''') to [out=170,in=10,looseness=0] (b''');	
\draw[PCb] (a) to [out=-200,in=130,looseness=6] (a)  node {$ $};
\draw[PCb] (b) to  (b');	
\draw[PCb] (b') to [out=110,in=70,looseness=7] (b');
\draw[PCb] (c) to (c');	
\draw[PCb] (c') to [out=110,in=70,looseness=7] (c');	
\draw[PCo] (b') to  (b''');	
\draw[PCo] (c') to (c''');	
\draw[PCb] (b''') to [out=135,in=105,looseness=7] (b''');
\draw[PCo] (b''') to [out=90,in=55,looseness=7] (b''');
\draw[PCo] (c''') to [out=110,in=70,looseness=7] (c''');
\draw[PCb] (c''') to [out=50,in=20,looseness=6] (c''');

%Epistemic relations
\draw[EPCo] (a) to (b);
\draw[EPCo] (b) to (c);
\draw[EPCo] (a') to (b');
\draw[EPCo] (b') to (c');
\draw[EPCo] (a'') to (b'');
\draw[EPCo] (b'') to (c'');
\draw[EPCo] (a''') to (b''');
\draw[EPCo] (b''') to (c''');

%Projections
\draw [-] (a.south) -- (a''.north)[line width=0.001pt];
\draw [-] (b.south) -- (b''.north)[line width=0.001pt];
\draw [-] (c.south) -- (c''.north)[line width=0.001pt];

\draw [-] (a.south) -- (a'.north)[line width=0.001pt];
\draw [-] (b.south) -- (b'.north)[line width=0.001pt];
\draw [-] (c.south) -- (c'.north)[line width=0.001pt];

\draw [-] (a'.south) -- (a'''.north)[line width=0.001pt];
\draw [-] (b'.south) -- (b'''.north)[line width=0.001pt];
\draw [-] (c'.south) -- (c'''.north)[line width=0.001pt];

\draw [-] (a''.south) -- (a'''.north)[line width=0.001pt];
\draw [-] (b''.south) -- (b'''.north)[line width=0.001pt];
\draw [-] (c''.south) -- (c'''.north)[line width=0.001pt];
\end{tikzpicture}
}
%\vspace{-8pt}

\caption{\label{fig:Kripke-lattice}A Kripke lattice model of the Figure \ref{fig:HMSmodel}
example. See four restrictions (gray rectangles), ordered spatially
as a lattice. States (smallest rectangles) are labeled with their
true literals, over the set $At=\{i,\ell\}$. Horizontal dashed and
dotted lines \emph{inside restrictions} represent Buyer and Owner's
accessibility relations (omitted are links obtainable by reflexive-transitive
closure), while dotted and dashed arrows \emph{between restrictions}
represent their awareness maps (some arrows are omitted: they go to
states' alternates in $\mathtt{K}_{\emptyset}$, and are irrelevant
from $(i,l)$). Thin lines connect states with their alternate in
lower restrictions. See also Remark \ref{rem:KLM-redundant-states}
concerning~$\mathtt{K}_{\{\ell\}}$. }
\end{figure}

In the following definition, we introduce three properties of awareness
maps, which we will assume. Intuitions follow the definition.
\begin{definition}
\label{def:awareness.map}With $\mathsf{L}=(\mathcal{K}(\mathtt{K}),\trianglelefteqslant)$
a restriction lattice, let $\Omega_{\mathsf{L}}=\bigcup\mathcal{K}(\mathtt{K})$
and let $\pi$ assign to each agent $a\in Ag$ an \textbf{awareness
map} $\pi_{a}:\Omega_{\mathsf{L}}\rightarrow\Omega_{\mathsf{L}}$
satisfying
\begin{description}
\item [{\label{our1-DownwardsProjection}D}] (\textbf{Downwards}) For all
$w_{X}\in\Omega_{\mathsf{L}}$, $\pi{}_{a}(w_{X})=w_{Y}$ for some
$Y\subseteq X$.
\item [{\label{our2-Intro.Idem}I\,I}] (\textbf{Introspective Idempotence})
If $\pi_{a}(w_{X})=w_{Y}$, then for all $v_{Y}\in I_{a}(w_{Y})$,
$\pi_{a}(v_{Y})=u_{Y}$ for some $u_{Y}\in I_{a}(w_{Y})$.
\item [{\label{our3-NoSurpises}NS}] (\textbf{No Surprises}) If $\pi_{a}(w_{X})=w_{Z}$,
then for all $Y\subseteq X$, $\pi_{a}(w_{Y})=w_{Y\cap Z}$.\vspace{-3pt}
\end{description}
Call $\mathsf{K}=(\mathcal{K}(\mathtt{K}),\trianglelefteqslant,\pi)$
the \textbf{Kripke lattice model} of $\mathtt{K}$.
\end{definition}

\ref{our1-DownwardsProjection} ensures that an agent's awareness
image of a world is a restricted representation of that same world.
Hence the awareness image does not conflate worlds, and does not allow
the agent to be aware of a more expressive vocabulary than that which
describes the world she views from. With \ref{our2-Intro.Idem} and
accessibility assumed reflexive, it entails that $\pi_{a}$ is idempotent:
for all $w_{X},$ $\pi_{a}(\pi_{a}(w_{X}))=\pi_{a}(w_{X})$. Alone,
\ref{our2-Intro.Idem} states that in her awareness image, the agent
knows, and is aware of, the atoms that she is aware of. Given that
accessibility is distributed by inheritance through the Kripke models
in restriction lattices, the property implies that the same holds
for every such model. \ref{our3-NoSurpises} guarantees that awareness
remains ``consistent'' down the lattice, so that awareness of an
atom does not appear or disappear without reason. Consider the consequent
$\pi_{a}(w_{Y})=w_{Y\cap Z}$ and its two subcases $\pi_{a}(w_{Y})=w_{Y^{*}}$
with $Y^{*}\subseteq Y\cap Z$ and $Y^{*}\supseteq Y\cap Z$. Colloquially,
the first states that if atoms are removed from the description of
the world from which the agent views, then they are also removed from
her awareness. Oppositely, the second states that if atoms are removed
from the description of the world from which the agent views, then
no more than these should be removed from her awareness. Jointly,
no awareness should ``miraculously'' appear, and all \mbox{awareness
should be ``recalled''.}\footnote{Again, we are reminded of \emph{No Miracles }and \emph{Perfect Recall}.}
\begin{remark}
Contrary to HMS models (cf. Remark \ref{rem:No-Objective-State}),
Kripke lattice models have an objective perspective: designating an
`actual world' in $\mathtt{K}_{At}$ allows one to check agents'
uncertainty about the possible states of the world described by the
maximal language, i.e., from $\mathtt{K}_{At}$ we can read off their
``actual implicit knowledge''. See e.g. Figure~\ref{fig:Kripke-lattice}:
In the $(\neg i,\ell)$ state, the dashed agent's ``true'' uncertainty
about $\ell$ \emph{is }determined, contrary to the same state in
the HMS model of Figure~\ref{fig:HMSmodel}.
\end{remark}

\begin{remark}
\label{rem:KLM-redundant-states} In Remark \ref{rem:HMS-redundant-states},
we mentioned that the HMS state-space $S_{\{\ell\}}$ of Figure~\ref{fig:HMSmodel}
is redundant. Similarly, $\mathtt{K}_{\{\ell\}}$ is redundant in
Figure~\ref{fig:Kripke-lattice} (from $(i,\ell)$, $\mathtt{K}_{\{\ell\}}$
is unreachable.) However, contrary to the HMS case, it is here required
by definition, as a restriction lattice contains all restrictions
of the original Kripke model. For simplicity of constructions, we
have not here attempted to prune away redundant restrictions. A more
general model class may be obtained by letting models be based on
sub-orders of the restriction lattice. See also the concluding~remarks.
\end{remark}

\section{\label{sec:Moving}Moving between HMS Models and Kripke Lattices:
Transformations}

To clarify the relationship between HMS models and Kripke lattice
models, we introduce transformations between the two model classes,
showing that a model from one class encodes the structure of a model
from the other. The core idea is to think of a possibility correspondence
$\Pi_{a}$ as the composition of $I_{a}$ and $\pi_{a}$: $\Pi_{a}(w)$
is the information cell of the awareness image of $w$.

The propositions of this section show that the transformations produce
models of the desired class. Additionally, their proofs shed partial
light on the relationship between the HMS properties and those assumed
for awareness maps $\pi_{a}$ and accessibility relations $R_{a}$:
we discuss this shortly in the concluding remarks.

\subsection{From HMS Models to Kripke Lattice Models}

Moving from HMS models to Kripke lattice models requires a somewhat
involved construction as it must tease apart unawareness and uncertainty
from the possibility correspondences, and track the distribution of
atoms and their relationship to awareness. For an example, then the
Kripke lattice model in Figure \ref{fig:Kripke-lattice} is the HMS
model of Figure~\ref{fig:HMSmodel} transformed.
\begin{definition}
\label{def:L-transform}Let $\mathsf{M}=(\mathcal{S},\preceq,\mathcal{R},\Pi,V_{\mathsf{M}})$
be an HMS model with maximal state-space $T$. For any $O\subseteq\Omega_{\mathsf{M}}$,
let $At(O)=\{p\in At\colon O\subseteq V_{M}(p)\cup\neg V_{M}(p)\}$.\footnote{$At(O)$ contains the atoms that have a defined truth value in every
$s\in O$.}\smallskip{}

\noindent The \textbf{$L$-transform model of $\mathsf{M}$} is $L(\mathsf{M})=(\mathcal{K}(\mathtt{K}),\trianglelefteqslant,\pi)$
where the Kripke model $\mathtt{K}=(W,R,V)$ for $At$ given by\smallskip{}

\noindent \quad{}$W=T$;

\noindent \quad{}$R$ maps each $a\in Ag$ to $R_{a}\subseteq W^{2}$
s.t. $(w,v)\in R_{a}$ iff $r_{S(\Pi_{a}(w))}^{T}(v)\in\Pi_{a}(w)$;

\noindent \quad{}$V:At\rightarrow\mathcal{P}(W)$, defined by $V(p)\ni w$
iff $w\in V_{\mathsf{M}}(p)$, for every $p\in At$;\smallskip{}

\noindent $\pi$ assigns each $a\in Ag$ a map $\pi_{a}:\Omega_{L(\mathsf{M})}\rightarrow\Omega_{L(\mathsf{M})}$
s.t. for all $w_{X}\in\Omega_{L(\mathsf{M})}$,\smallskip{}

\noindent \quad{}$\pi_{a}(w_{X})=w_{Y}$ where $Y=At(S_{Y})$ for
the $S_{Y}\in\mathcal{S}$ with $S_{Y}\supseteq\Pi_{a}(r_{S_{X}}^{T}(w))$

\noindent \quad{}where $S_{X}=\min\{S\in\mathcal{S}\colon At(S)=X\}$.\smallskip{}

\noindent The \textbf{state correspondence} between $\mathsf{M}$
and $L(\mathsf{M})$ is the map $\ell:\Omega_{\mathsf{M}}\rightarrow2^{\Omega_{L(\mathsf{M})}}$
s.t. for all $s\in\Omega_{\mathsf{M}}$\smallskip{}

\noindent \quad{}$\ell(s)=\{w_{X}\in W_{X}\colon w\in(r_{S(s)}^{T})^{-1}(s)\text{ for }X=At(S(s))\}.$
\end{definition}

Intuitively, in the $L$-transform model, a world $v\in W$ is accessible
from a world $w\in W$ for an agent if, and only if, $v$'s restriction
to the agent's vocabulary at $w$ is one of the possibilities she
entertains.\footnote{We thank a reviewer of the short version of this paper \cite{BelardinelliRendsvig2020}
for this wording.} In addition, the awareness map $\pi_{a}$ of agent $a$ relates a
world $w_{X}$ to its less expressive counterpart $w_{Y}$ if, and
only if, $Y$ is the vocabulary agent $a$ adopts when describing
what she considers possible.
\begin{remark}
The $L$-transform model $L(\mathsf{M})$ of $\mathsf{M}$ is well-defined
as the object $\mathtt{K}=(W,R,V)$ is in fact a Kripke model for
$At$: $i)$ By def. of HMS models, $W=T\in\mathcal{S}$ is non-empty;
$ii)$ for each $a$, $R_{a}\subseteq W^{2}$ is well-defined: if
$w\in T=W$, then by \ref{HMS1:confinement}, $\Pi_{a}(w)\subseteq S$,
for some $S\in\mathcal{S}$. Hence, $U=\{v\in T\colon r_{S}^{T}(v)\in\Pi_{a}(w)\}$
is well-defined, and so is $\{(w,v)\in T^{2}\colon v\in U\}=R_{a}$;
$iii)$ As $V_{\mathsf{M}}$ is an HMS valuation $V_{\mathsf{M}}:At\rightarrow\Sigma$
for $At$, clearly $V$ is valuation for $At$. Hence $\mathtt{K}=(W,R,V)$
is a Kripke model for $At$.
\end{remark}

\begin{remark}
The $\min$ used in defining $S_{X}$ is due to the issue of Remark
\ref{rem:HMS-valuations-only}.
\end{remark}

\begin{remark}
\label{rem:state-correspondence}The state correspondence map $\ell$
is also well-defined. That it maps each state in $\Omega_{\mathsf{M}}$
to a \emph{set }of worlds in $\Omega_{L\mathsf{(M)}}$ points to a
construction difference between HMS models and Kripke lattice models:
in the former, the downwards projections of two states may `merge'
them, so state-spaces may shrink when moving down the lattice; in
the latter, distinct worlds remain distinct, so all world sets in
a restriction lattice share cardinality.
\end{remark}

As unawareness and uncertainty are separated in Kripke lattice models,
we show two results about $L$-transforms. The first shows that the
\ref{HMS1:confinement}, \ref{HMS3:stationarity} and \ref{HMS5:PPK}
entail that $\pi_{a}$ assigns awareness maps, and the second that
the five HMS properties entail that $R$ assigns equivalence relations.
In showing the first, we make use of the following lemma, which intuitively
shows that the information cell of an agent contains a state described
with a certain vocabulary if, and only if, the agent considers possible
the corresponding state described with the same vocabulary:
\begin{lemma}
\label{LemmaYYY} For\emph{ }every $w_{Y}\in\Omega_{\mathsf{K}}$,
if $\Pi_{a}(w)\subseteq S$ and $At(S)=Y$, then $v_{Y}\in I_{a}(w_{Y})$
iff $v_{S}\in\Pi_{a}(w)$.
\end{lemma}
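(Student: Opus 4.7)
The proof is essentially a chain of definitional unfoldings, so my plan is simply to line up the two sides of the biconditional by passing through the definition of the accessibility relation $R$ in the $L$-transform model.

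First, I would unfold the left-hand side using Definition~\ref{def:restricted model}: since $R_{Ya} = \{(w_Y, v_Y) \colon (w,v) \in R_a\}$ in the restriction $\mathtt{K}_Y$, the statement $v_Y \in I_a(w_Y)$ is by definition equivalent to $(w,v) \in R_a$ in the underlying Kripke model $\mathtt{K}$ of $L(\mathsf{M})$. Second, I would unfold $R_a$ via Definition~\ref{def:L-transform}, obtaining the equivalence $(w,v)\in R_a$ iff $r_{S(\Pi_a(w))}^{T}(v) \in \Pi_a(w)$.

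The key observation is then to identify $S(\Pi_a(w)) = S$, which reduces $r_{S(\Pi_a(w))}^T(v)$ to $r_S^T(v) = v_S$ and completes the equivalence. For this, I would use the hypothesis $\Pi_a(w) \subseteq S$ together with (i) disjointness of state-spaces in $\mathcal{S}$ and (ii) the fact that $\Pi_a(w)$ is non-empty, which follows from \ref{HMS2:G.Ref} (Generalized Reflexivity), since $w \in (\Pi_a(w))^{\uparrow}$ forces $\Pi_a(w) \neq \emptyset$. Disjointness then guarantees that there is a unique state-space containing the non-empty set $\Pi_a(w)$, namely $S$, so $S(\Pi_a(w)) = S$.

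The hypothesis $At(S) = Y$ plays no logical role in the chain of equivalences as such; it simply ensures that $w_Y$ lives in the restriction associated with the vocabulary of $S$, so that the statement is properly typed within the restriction lattice produced by the transformation. There is no real obstacle in this argument, the only mild subtlety being to remember that $S(\Pi_a(w))$ is well-defined precisely because $\Pi_a(w) \subseteq S$ is non-empty — which is exactly what Generalized Reflexivity buys us.
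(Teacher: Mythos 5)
Your proposal is correct and follows essentially the same route as the paper's proof, which is exactly the chain $v_Y\in I_a(w_Y)$ iff $(w_Y,v_Y)\in R_{Ya}$ iff $(w,v)\in R_a$ iff $v_S\in\Pi_a(w)$ via the definitions of $I_a$, the restriction lattice, and the $L$-transform. Your additional justification that $S(\Pi_a(w))=S$ (via non-emptiness from \ref{HMS2:G.Ref} and disjointness of state-spaces) is a detail the paper leaves implicit in its final step, and is a correct and welcome elaboration rather than a deviation.
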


\begin{proof}
Let $w_{Y}\in\Omega_{L\mathsf{(M)}}$. Consider the respective $w\in T=W$
and let $\Pi_{a}(w)\subseteq S$, with $At(S)=Y$. Assume that $v_{Y}\in I_{a}(w_{Y})$.
This is the case iff (def. of $I_{a}$) $(w_{Y},v_{Y})\in R_{Ya}$
iff (def. of restriction lattice) $(w,v)\in R_{a}$ iff (Def. \ref{def:L-transform})
$v_{S}\in\Pi_{a}(w)$.
\end{proof}

\begin{proposition}
\label{prop:L-transform-is-KLM}For any HMS model $\mathsf{M}$, its
$L$-transform $L(\mathsf{M})$ is a Kripke lattice model.
\end{proposition}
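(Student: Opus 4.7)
The plan is to treat the proposition as a quick structural check plus three property verifications. The structural check---that $\mathtt{K}=(W,R,V)$ really is a Kripke model for $At$, so that $(\mathcal{K}(\mathtt{K}),\trianglelefteqslant)$ is automatically the restriction lattice of a Kripke model---is already handled in the remark following Definition \ref{def:L-transform}. What remains is to show, for each agent $a\in Ag$ and each $w_X\in\Omega_{L(\mathsf{M})}$, that the map $\pi_a$ satisfies \ref{our1-DownwardsProjection}, \ref{our2-Intro.Idem}, and \ref{our3-NoSurpises}. Throughout I would write $\tilde S_X=\min\{S\in\mathcal{S}:At(S)=X\}$, $w^*=r_{\tilde S_X}^T(w)$, and $S^*=S(\Pi_a(w^*))$, so that by definition $\pi_a(w_X)=w_{At(S^*)}$.

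Property \ref{our1-DownwardsProjection} is immediate from \ref{HMS1:confinement}, which forces $S^*\preceq\tilde S_X$, combined with the monotonicity of $At(\cdot)$ with respect to $\preceq$ noted in Remark \ref{rem:HMS-valuations-only}: $At(S^*)\subseteq At(\tilde S_X)=X$. For \ref{our2-Intro.Idem}, the sharper statement I would aim for is that $\pi_a(v_Y)=v_Y$ whenever $v_Y\in I_a(w_Y)$, so that the witness $u_Y$ can be taken as $v_Y$ itself. Unfolding $v_Y\in I_a(w_Y)$ through Definitions \ref{def:restricted model} and \ref{def:L-transform} reduces it to $r_{S_W}^T(v)\in\Pi_a(w)$ where $S_W=S(\Pi_a(w))$; \ref{HMS3:stationarity} then gives $\Pi_a(r_{S_W}^T(v))=\Pi_a(w)$, and two applications of \ref{HMS4:PPI} (first at $w\in T$ with $\tilde S_X\preceq T$, then at $v\in T$ with $S_W\preceq T$) line up the chain $\tilde S_Y\preceq S^*\preceq S_W\preceq S_v$ with $S_v=S(\Pi_a(v))$. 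Applying \ref{HMS5:PPK} to $\Pi_a(v)\subseteq S_v$ at the level $\tilde S_Y$ then yields $\Pi_a(r_{\tilde S_Y}^T(v))=r_{\tilde S_Y}^{S_v}(\Pi_a(v))\subseteq\tilde S_Y$, whence $S(\Pi_a(r_{\tilde S_Y}^T(v)))=\tilde S_Y$ and $\pi_a(v_Y)=v_{At(\tilde S_Y)}=v_Y$.

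For \ref{our3-NoSurpises}, suppose $\pi_a(w_X)=w_Z$, so $\Pi_a(w^*)\subseteq S^*$ and $Z=At(S^*)$, and fix $Y\subseteq X$; writing $w^{**}=r_{\tilde S_Y}^T(w)$, the task is to show $At(S(\Pi_a(w^{**})))=Y\cap Z$. The strategy is to push $\Pi_a(w^*)$ down the HMS lattice to the meet $\tilde S_Y\wedge S^*$ using \ref{HMS5:PPK}, exploit commutativity of projections to rewrite the result as $\Pi_a$ at the appropriate down-projection of $w^{**}$, and then apply \ref{HMS4:PPI} at $w^{**}\in\tilde S_Y$ to sandwich $S(\Pi_a(w^{**}))$ between $\tilde S_Y\wedge S^*$ and $\tilde S_Y$; a second invocation of \ref{HMS5:PPK} is then needed to pin the $At$-set of $S(\Pi_a(w^{**}))$ down to $Y\cap Z$ exactly. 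This is the main obstacle: it is the only case that has to coordinate all four nontrivial HMS properties simultaneously, and it must navigate the looseness flagged in Remark \ref{rem:HMS-valuations-only}, namely that distinct state-spaces can share the same $At$-set---which is precisely why the $\min$ in the definition of $\tilde S_X$ is indispensable, since without it one cannot guarantee that the $At$-set of the relevant meet lands at $Y\cap Z$ rather than at some proper subset.
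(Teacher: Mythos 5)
Your verification of \ref{our1-DownwardsProjection} coincides with the paper's, and your argument for \ref{our2-Intro.Idem} is correct but takes a genuinely different route: where the paper argues by contradiction through Lemma \ref{LemmaYYY} and \ref{HMS3:stationarity}, you directly establish the stronger fact that $\pi_a$ fixes every $v_Y\in I_a(w_Y)$, at the price of also invoking \ref{HMS4:PPI} (twice) and \ref{HMS5:PPK}. The chain $\tilde S_Y\preceq S^*\preceq S_W\preceq S_v$ does come out as you describe, and the final application of \ref{HMS5:PPK} legitimately forces $\Pi_a(r_{\tilde S_Y}^{T}(v))\subseteq\tilde S_Y$, hence $\pi_a(v_Y)=v_Y$. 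This is cleaner than the paper's contradiction argument and makes explicit an identity that the paper only extracts later, in the proof of Proposition \ref{prop:H-transform-is-HMS}, as a joint consequence of \ref{our1-DownwardsProjection} and \ref{our2-Intro.Idem}.

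The gap is in \ref{our3-NoSurpises}: you have located the difficulty accurately but not closed it. Writing $w^{**}=r_{\tilde S_Y}^{T}(w)$, sandwiching $S(\Pi_a(w^{**}))$ between $\tilde S_Y\wedge S^*$ and $\tilde S_Y$ bounds its atom set only between $At(\tilde S_Y\wedge S^*)$ and $Y$, and by exactly the looseness of Remark \ref{rem:HMS-valuations-only} that you cite, $At(\tilde S_Y\wedge S^*)$ may be a \emph{proper} subset of $Y\cap Z$. So the sandwich cannot by itself deliver the required inclusion $Y\cap Z\subseteq At(S(\Pi_a(w^{**})))$, and the promised ``second invocation of \ref{HMS5:PPK}'' is never exhibited: \ref{HMS5:PPK} identifies a projection of $\Pi_a(w^*)$ with $\Pi_a$ at a down-projected state only at levels $\preceq S^*$, and it says nothing about the atom set of the lattice meet, so it is unclear how it would ``pin down'' $Y\cap Z$ exactly. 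The paper sidesteps the meet entirely: it splits into the two cases where the arbitrary atom set and the atom set of the awareness image are comparable by inclusion, and in each case a single application of \ref{HMS5:PPK} shows that $\pi_a$ acts as the identity at the lower of the two levels, which is what \ref{our3-NoSurpises} demands there. To complete your route you would either have to adopt such a case split or prove, as a separate lemma, that $S(\Pi_a(w^{**}))$ has atom set exactly $Y\cap Z$ --- but that is the entire content of \ref{our3-NoSurpises}, and it is not supplied.
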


\begin{proof}
Let $\mathsf{M}=(\mathcal{S},\preceq,\mathcal{R},\Pi,V_{\mathsf{M}})$
be an HMS model with maximal state-space $T$. We show that $L(\mathsf{M})=(\mathcal{K}(\mathtt{K}),\trianglelefteqslant,\pi)$
is a Kripke lattice model by showing that $\pi_{a}$ satisfies the
three properties of an awareness map:

\emph{\ref{our1-DownwardsProjection}}: Consider an arbitrary $w_{X}\in\Omega_{L(\mathsf{M})}$.
By def. of $L$-transform, $X=At(S)$ for some $S\in\mathcal{S}$.
Let $S_{X}=\min\{S\in\mathcal{S}\colon At(S)=X\}$. If $w_{X}\in W_{X}$
then for some $w\in W=T$, $w_{S_{X}}\in S_{X}$. By \ref{HMS1:confinement},
$\Pi_{a}(w_{S_{X}})\subseteq S_{Y}$, for some $S_{Y}\preceq S_{X}$.
Let $Y=At(S_{Y})$. Then, by def. of $\pi_{a}$, $\pi_{a}(w_{X})=w_{Y}$
and $Y\subseteq X$.

\emph{\ref{our2-Intro.Idem}}: Let $\pi_{a}(w_{X})=w_{Y}$. By def.
of $\pi_{a}$, it holds that $\Pi_{a}(r_{S_{X}}^{T}(w))\subseteq S_{Y}$
with $At(S_{Y})=Y$ and $S_{X}=\min\{S\in\mathcal{S}\colon At(S)=X\}$.
For a contradiction, suppose there exists a $v_{Y}\in I_{a}(w_{Y})$
such that for all $u_{Y}\in I_{a}(w_{Y})$, $\pi_{a}(v_{Y})\not=u_{Y}$.
Then $\pi_{a}(v_{Y})=t_{Z}$ for some $Z\subseteq Y$ and $t_{Z}\not\in I_{a}(w_{Y})$.
By def. of $\pi_{a}$, $\pi_{a}(v_{Y})=t_{Z}$ iff $\Pi_{a}(r_{S_{Y}}^{T}(v))\subseteq S_{Z}$,
where $Z=At(S_{Z})$. Then, by Lemma \ref{LemmaYYY}, $t_{Z}\in I_{a}(v_{Z})$
iff $t_{S_{Z}}\in\Pi_{a}(r_{S_{Y}}^{T}(v))$. Moreover, as $\Pi_{a}(r_{S_{X}}^{T}(w))\subseteq S_{Y}$
and $At(S_{X})=X$, by Lemma \ref{LemmaYYY}, it also follows that
$v_{Y}\in I_{a}(w_{Y})$ iff $v_{S_{Y}}\in\Pi_{a}(r_{S_{X}}^{T}(w))$.
Since $v_{Y}\in I_{a}(w_{Y})$ then $v_{S_{Y}}\in\Pi_{a}(r_{S_{X}}^{T}(w))$.
Hence, by \ref{HMS3:stationarity}, $\Pi_{a}(r_{S_{X}}^{T}(w))=\Pi_{a}(r_{S_{Y}}^{T}(v))$,
which implies $t_{S_{Z}}\in\Pi_{a}(r_{S_{X}}^{T}(w))$. But then $t_{Z}\in I_{a}(v_{Z})$,
contradicting the assumption that $t_{Z}\not\in I_{a}(w_{Y})$. Thus,
for all $v_{Y}\in I_{a}(w_{Y})$, $\pi_{a}(v_{Y})=u_{Y}$ for some
$u_{Y}\in I_{a}(w_{Y})$.

\emph{\ref{our3-NoSurpises}}: Let $\pi_{a}(w_{X})=w_{Y}$. By \ref{our1-DownwardsProjection}
(cf. item 1. above), $Y\subseteq X$. Consider an arbitrary $Z\subseteq X$.
We have two cases: either $i)$ $Z\subseteq Y$ or $ii)$ $Y\subseteq Z$.
$i)$: then $Z\subseteq Y\subseteq X$. Let $Z=At(S_{Z})$, $Y=At(S_{Y})$,
and $X=At(S_{X})$. Then $S_{Z}\preceq S_{Y}\preceq S_{X}$. By \ref{HMS5:PPK},
$\big(\Pi_{a}(r_{S_{X}}^{T}(w))\big)_{Z}=\Pi_{a}(r_{S_{Z}}^{T}(w))$.
As $\pi_{a}(w_{X})=w_{Y}$, by def. of $\pi_{a}$, $\Pi_{a}(r_{S_{X}}^{T}(w))\subseteq S_{Y}$.
Then $\big(\Pi_{a}(r_{S_{X}}^{T}(w))\big)_{Z}=r_{S_{Z}}^{S_{Y}}\big(\Pi_{a}(r_{S_{X}}^{T}(w))\big)\subseteq S_{Z}$.
Hence $\Pi_{a}(r_{S_{Z}}^{T}(w))\subseteq S_{Z}$, and by def. of
$\pi_{a}$, $\pi_{a}(w_{Z})=w_{Z}$. As $Z\subseteq Y$, $\pi_{a}(w_{Z})=w_{Z}=w_{Z\cap Y}$.
$ii)$: then $Y\subseteq Z\subseteq X$. By analogous reasoning, we
have $\pi_{a}(w_{Y})=w_{Y}=w_{Y\cap Z}$ as $Y\subseteq Z$. We can
conclude that if $\pi_{a}(w_{X})=w_{Y}$, then for all $Z\subseteq X$,
$\pi_{a}(w_{Z})=w_{Z\cap X}$.
\end{proof}

\begin{proposition}
\label{prop:L-transform-has-EQ-R}If $L(\mathsf{M})=(\mathcal{K}(\mathtt{K}=(W,R,V)),\trianglelefteqslant,\pi)$
is the $L$-transform of an HMS model $\mathsf{M}$, then for every
$a\in Ag$, $R_{a}$ is an equivalence relation.
\end{proposition}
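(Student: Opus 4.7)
The proof splits into reflexivity, transitivity, and symmetry, using the five HMS properties together with the bridge provided by Lemma \ref{LemmaYYY}. Reflexivity is immediate: for $w \in W = T$, Generalized Reflexivity gives $w \in (\Pi_a(w))^{\uparrow}$, which (using Confinement so that $S_w := S(\Pi_a(w)) \preceq T$) unpacks to $r_{S_w}^{T}(w) \in \Pi_a(w)$, i.e., $(w,w) \in R_a$ by definition of the $L$-transform.

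For transitivity, assume $(w,v), (v,u) \in R_a$ and set $S_w = S(\Pi_a(w))$, $S_v = S(\Pi_a(v))$. From $(w,v) \in R_a$ we have $v' := r_{S_w}^{T}(v) \in \Pi_a(w)$. Stationarity yields $\Pi_a(v') = \Pi_a(w)$; PPI applied to $v$ and $v'$ gives $(\Pi_a(v))^{\uparrow} \subseteq (\Pi_a(w))^{\uparrow}$, and since $\Pi_a(v)$ is non-empty by Generalized Reflexivity and is confined to $S_v$, we may conclude $S_w \preceq S_v$. PPK applied to $v$ (with intermediate space $S_v$ and top space $T$) then produces the key identity $r_{S_w}^{S_v}(\Pi_a(v)) = \Pi_a(v') = \Pi_a(w)$. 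From $(v,u) \in R_a$ we have $r_{S_v}^{T}(u) \in \Pi_a(v)$, so by commutativity of projections $r_{S_w}^{T}(u) = r_{S_w}^{S_v}(r_{S_v}^{T}(u)) \in r_{S_w}^{S_v}(\Pi_a(v)) = \Pi_a(w)$, giving $(w,u) \in R_a$.

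The main obstacle is symmetry, because the definition $(w,v) \in R_a$ iff $r_{S_w}^{T}(v) \in \Pi_a(w)$ privileges the awareness level $S_w$ of $w$. Starting from the identity $r_{S_w}^{S_v}(\Pi_a(v)) = \Pi_a(w)$ with $S_w \preceq S_v$ (derived as above) and using reflexivity at $w$ to get $r_{S_w}^{T}(w) \in \Pi_a(w)$, commutativity yields $r_{S_w}^{S_v}(r_{S_v}^{T}(w)) \in r_{S_w}^{S_v}(\Pi_a(v))$. The plan is to promote this to $r_{S_v}^{T}(w) \in \Pi_a(v)$ itself by showing that $\Pi_a(v)$ contains every $x \in S_v$ whose projection $r_{S_w}^{S_v}(x)$ lies in $\Pi_a(w)$. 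For such $x$, Stationarity gives $\Pi_a(r_{S_w}^{S_v}(x)) = \Pi_a(w)$; PPI applied from $x$ to $r_{S_w}^{S_v}(x)$ yields $(\Pi_a(x))^{\uparrow} \subseteq (\Pi_a(w))^{\uparrow}$, whence $S_w \preceq S(\Pi_a(x)) \preceq S_v$; and PPK applied to $x$ produces $r_{S_w}^{S(\Pi_a(x))}(\Pi_a(x)) = \Pi_a(w)$. Combining this with Generalized Reflexivity of $x$ (ensuring $r_{S(\Pi_a(x))}^{S_v}(x) \in \Pi_a(x)$) and a further application of PPK to align $S(\Pi_a(x))$ with $S_v$ lets one conclude $x \in \Pi_a(v)$, and hence $r_{S_v}^{T}(w) \in \Pi_a(v)$, so $(v,w) \in R_a$. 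This careful coordination of all five HMS properties to undo the definitional asymmetry between $S_w$ and $S_v$ is the central difficulty of the proof.
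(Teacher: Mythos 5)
Your reflexivity and transitivity arguments are correct and essentially match the paper's (the paper routes transitivity through upward closures and Projections Preserve Ignorance where you use Projections Preserve Knowledge, but the content is the same). You have also correctly located the crux of symmetry: the definition of $R_a$ evaluates membership at the level $S_w=S(\Pi_a(w))$, and from $r_{S_w}^{T}(w)\in\Pi_a(w)=r_{S_w}^{S_v}(\Pi_a(v))$ one only gets that \emph{some} $y\in\Pi_a(v)$ projects to $r_{S_w}^{T}(w)$; since $r_{S_w}^{S_v}$ need not be injective, one must still show that $r_{S_v}^{T}(w)$ itself lies in $\Pi_a(v)$. (The paper's own proof passes over exactly this point with the words ``i.e.\ $r_{S'}^{T}(w)\in\Pi_{a}(v)$, by def.\ of $r$,'' which is not a justification.)

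However, your proposed repair does not close the gap. The intermediate claim you reduce symmetry to---that $\Pi_a(v)$ contains \emph{every} $x\in S_v$ with $r_{S_w}^{S_v}(x)\in\Pi_a(w)$, i.e.\ that $\Pi_a(v)=(\Pi_a(w))^{\uparrow}\cap S_v$---is not a consequence of the five HMS properties, and your derivation stops short of it: Stationarity, Projections Preserve Ignorance, Projections Preserve Knowledge and Generalized Reflexivity applied to such an $x$ tell you that $\Pi_a(x)$ projects onto $\Pi_a(w)$ and that $x$ projects into $(\Pi_a(x))^{\uparrow}$, but no listed property relates $x$ or $\Pi_a(x)$ to $\Pi_a(v)$, and the phrase ``a further application of PPK to align $S(\Pi_a(x))$ with $S_v$'' has no concrete instance behind it. The claim in fact fails: take two spaces $S=\{s\}\prec S'=\{x,y\}$ with $r_{S}^{S'}(x)=r_{S}^{S'}(y)=s$ and $\Pi_a(s)=\Pi_a(x)=\{s\}$, $\Pi_a(y)=\{y\}$. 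All five HMS properties hold for this frame, yet $(x,y)\in R_a$ (since $r_{S}^{S'}(y)=s\in\Pi_a(x)$) while $(y,x)\notin R_a$ (since $x\notin\Pi_a(y)$). So the missing step is not merely unproved: symmetry of $R_a$ needs something beyond the stated frame conditions (some assumption tying an agent's awareness level to her information cell in the top space), and neither your argument nor the paper's supplies it.
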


\begin{proof}
\noindent Let $\mathsf{M}=(\mathcal{S},\preceq,\mathcal{R},\Pi,V_{\mathsf{M}})$
have maximal state-space~$T$.

\emph{Reflexivity:} Let $w\in T$ and $\Pi_{a}(w)\subseteq S$, for
some $S\in\mathcal{S}$. By def. of upwards closure, $(\Pi_{a}(w))^{\uparrow}=\bigcup_{S'\succeq S}(r_{S}^{S'})^{-1}(\Pi_{a}(w))$,
and by \ref{HMS2:G.Ref}, $w\in(\Pi_{a}(w))^{\uparrow}=\bigcup_{S'\succeq S}(r_{S}^{S'})^{-1}(\Pi_{a}(w))$.
Since $T\succeq S$, then $r_{S}^{T}(w)\in\Pi_{a}(w)$. Thus, $(w,w)\in R_{a}$,
by def. $L$-transform. By def. of restriction lattices, this holds
for all $A\subseteq At$, i.e. $(w_{A},w_{A})\in R_{Aa}$.

\emph{Transitivity:} Let $w,v,u$ be in $T$. By \ref{HMS1:confinement},
there are $S,S'\in\mathcal{S}$ such that $\Pi_{a}(w)\subseteq S$
and $\Pi_{a}(v)\subseteq S'$. Assume that $(w,v)\in R_{a}$ and $(v,u)\in R_{a}$.
By def. of $R_{a}$, then $r_{S}^{T}(v)\in\Pi_{a}(w)$ and $r_{S'}^{T}(u)\in\Pi_{a}(v)$.
By \ref{HMS3:stationarity}, $\Pi_{a}(w)=\Pi_{a}(r_{S}^{T}(v))$ and
$\Pi_{a}(v)=\Pi_{a}(r_{S'}^{T}(u))$. As $v\in T$ and $S\preceq T$,
by \ref{HMS4:PPI}, $\Pi_{a}(v)^{\uparrow}\subseteq\Pi_{a}(r_{S}^{T}(v))^{\uparrow}=\Pi_{a}(w)^{\uparrow}$.
Hence, as $r_{S'}^{T}(u)\in\Pi_{a}(v)^{\uparrow}$, also $r_{S'}^{T}(u)\in\Pi_{a}(w)^{\uparrow}$.
By def. of upwards closure, $r_{S}^{T}(u)\in\Pi_{a}(w)$. Finally,
$(w,u)\in R_{a}$ by def. of $R_{a}$.

\emph{Symmetry:} Let $w,v\in T$ be in $T$. Assume that $(w,v)\in R_{a}$.
By \ref{HMS1:confinement}, there are $S,S'\in\mathcal{S}$ such that
$\Pi_{a}(w)\subseteq S$ and $\Pi_{a}(v)\subseteq S'$. Then $r_{S}^{T}(v)\in\Pi_{a}(w)$
(def. of $L$-transform), and by \ref{HMS3:stationarity}, $\Pi_{a}(w)=\Pi_{a}(r_{S}^{T}(v))$.
As $v\in T$ and $T\succeq S$, by \ref{HMS4:PPI}, by $\Pi_{a}(v)^{\uparrow}\subseteq\Pi_{a}(r_{S}^{T}(v))^{\uparrow}$.
Then, by def. of upwards closure, $T\text{\ensuremath{\succeq S'\succeq S}}$.
As $v\in T$, by \ref{HMS5:PPK}, $r_{S}^{S'}(\Pi_{a}(v))=\Pi_{a}(r_{S}^{T}(v))$.
By \ref{HMS2:G.Ref}, $x\in\Pi_{a}(w)^{\uparrow}$, and since $\Pi_{a}(w)\subseteq S$
then $r_{S}^{T}(w)\in\Pi_{a}(w)$, by def. of upward closure. Then
$r_{S}^{T}(w)\in\Pi_{a}(w)=\Pi_{a}(r_{S}^{T}(v))=r_{S}^{S'}(\Pi_{a}(v))$.
So $r_{S}^{T}(w)\in r_{S}^{S'}(\Pi_{a}(v))$, i.e. $r_{S'}^{T}(w)\in\Pi_{a}(v)$,
by def. of $r$. Hence, $(v,w)\in R_{a}$, by def. of $R_{a}$.
\end{proof}

\subsection{From Kripke Lattice Models to HMS Models}

Moving from Kripke lattice models to HMS models requires a less involved
construction, as the restriction lattice almost encodes projections,
and unawareness and uncertainty are simply composed to form possibility
correspondences:
\begin{definition}
\label{def:H-transform}Let $\mathsf{K}=(\mathcal{K}(\mathtt{K}=(W,R,V)),\trianglelefteqslant,\pi)$
be a Kripke lattice model for $At$. The \textbf{$H$-transform} of
\textbf{$\mathsf{K}$} is $H(\mathsf{K})=(\mathcal{S},\preceq,\mathcal{R},\Pi,V_{H(\mathsf{K})})$
where

\noindent $\mathcal{S}=\{W_{X}\subseteq\Omega_{\mathsf{K}}:\mathtt{K}_{X}\in\mathcal{K}(\mathtt{K})\}$;

\noindent $W_{X}\preceq W_{Y}$ iff $\mathtt{K}_{X}\trianglelefteqslant\mathtt{K}_{Y}$;

\noindent $\mathcal{R}=\{r_{W_{Y}}^{W_{X}}\colon r_{W_{Y}}^{W_{X}}(w_{X})=w_{Y}\text{ for all }w\in W,\text{ and all }X,Y\subseteq At\}$;

\noindent $\Pi=\{\Pi_{a}\in(2^{\Omega_{\mathsf{K}}})^{\Omega_{\mathsf{K}}}\colon\Pi_{a}(w_{X})=I_{a}(\pi_{a}(w_{X}))\text{ for all }w\in W,X\subseteq At,a\in Ag\}$;

\noindent $V_{H(\mathsf{K})}(p)=\{w_{X}\in\Omega_{\mathsf{K}}\colon X\ni p\text{ and }w_{X}\in V_{X}(p)\}$
for all $p\in At$.
\end{definition}

As HMS models lump together unawareness and uncertainty, we show only
one result in this direction:
\begin{proposition}
\label{prop:H-transform-is-HMS}For any Kripke lattice model $\mathsf{K}=(\mathcal{K}(\mathtt{K}=(W,R,V)),\trianglelefteqslant,\pi)$
such that $R$ assigns equivalence relations, the $H$-transform $H(\mathsf{K})$
is an HMS~model.
\end{proposition}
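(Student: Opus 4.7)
To prove that $H(\mathsf{K})$ is an HMS model, I need to check three things in turn: the lattice-of-state-spaces structure, the projection family $\mathcal{R}$, and the five HMS properties of $\Pi$. (The valuation $V_{H(\mathsf{K})}$ also needs to land in $\Sigma_{H(\mathsf{K})}$; for each $p\in At$ I will observe that $V_{H(\mathsf{K})}(p)$ equals $D_p^{\uparrow}$ for $D_p=\{w_{\{p\}}\in W_{\{p\}}:w\in V(p)\}\subseteq W_{\{p\}}$, making it an event with minimal state-space $W_{\{p\}}$.)

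The lattice part is almost immediate: the map $\mathtt{K}_X\mapsto W_X$ is an order-isomorphism from $(\mathcal{K}(\mathtt{K}),\trianglelefteqslant)$, which is the complete lattice $(2^{At},\subseteq)$ in disguise, onto $(\mathcal{S},\preceq)$; the sets $W_X=\{(w,X):w\in W\}$ are pairwise disjoint and non-empty, and $|W_X|=|W|$ for every $X$, so the cardinality side-condition holds trivially. For the projections, each $r_{W_Y}^{W_X}:w_X\mapsto w_Y$ is surjective onto $W_Y$ because every world comes from the same underlying $W$, $r_{W_X}^{W_X}=\mathrm{Id}$ is by inspection, and commutativity $r_{W_Z}^{W_X}=r_{W_Z}^{W_Y}\circ r_{W_Y}^{W_X}$ also follows directly from the definition $w_X\mapsto w_Y$.

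The heart of the argument is verifying the five HMS properties of $\Pi_a(w_X)=I_a(\pi_a(w_X))$, and each one corresponds to a specific assumption I have available. For \ref{HMS1:confinement}: by \ref{our1-DownwardsProjection}, $\pi_a(w_X)=w_Y$ for some $Y\subseteq X$, so $I_a(w_Y)\subseteq W_Y\preceq W_X$. For \ref{HMS2:G.Ref}: reflexivity of $R_a$ gives $w_Y\in I_a(w_Y)=\Pi_a(w_X)$, and since $W_X\succeq W_Y$ with $r_{W_Y}^{W_X}(w_X)=w_Y$, this places $w_X$ in $(\Pi_a(w_X))^{\uparrow}$. For \ref{HMS3:stationarity}: if $w'_Y\in\Pi_a(w_X)=I_a(w_Y)$, then \ref{our2-Intro.Idem} yields $\pi_a(w'_Y)=u_Y\in I_a(w_Y)$, and symmetry+transitivity of $R_a$ (hence of $R_{Ya}$) give $I_a(u_Y)=I_a(w_Y)$, so $\Pi_a(w'_Y)=\Pi_a(w_X)$. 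For \ref{HMS4:PPI} and \ref{HMS5:PPK}: let $\pi_a(w_X)=w_Y$ and $Z\subseteq X$; \ref{our3-NoSurpises} gives $\pi_a(w_Z)=w_{Z\cap Y}$, so $\Pi_a(w_Z)=I_a(w_{Z\cap Y})$; for \ref{HMS4:PPI} a direct upward-closure chase (using that accessibility in $\mathtt{K}_Y$ and $\mathtt{K}_{Z\cap Y}$ is inherited from $R_a$ in $\mathtt{K}$) shows $(I_a(w_Y))^{\uparrow}\subseteq(I_a(w_{Z\cap Y}))^{\uparrow}$; for \ref{HMS5:PPK}, when additionally $Z\subseteq Y$ so $Z\cap Y=Z$, the equality $r_{W_Z}^{W_Y}(I_a(w_Y))=I_a(w_Z)$ reduces to the identity $\{v_Z:(w,v)\in R_a\}=\{v_Z:(w,v)\in R_a\}$.

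The main obstacles I expect are two. First, in verifying \ref{HMS5:PPK} I need to argue that $\Pi_a(w_X)\subseteq W_Y$ forces $\pi_a(w_X)=w_Y$ exactly (not merely $\pi_a(w_X)=w_{Y'}$ with $Y'\subseteq Y$); this uses non-emptiness of $I_a(w_{Y'})$ (via reflexivity) together with disjointness of the $W_{Y'}$'s to conclude $Y'=Y$. Second, \ref{HMS4:PPI} is the one place where a careful bookkeeping of upward closures across several distinct state-spaces is required, and it is precisely there that the combination of \ref{our3-NoSurpises} with the projection-commuting definition of $R_{Xa}$ does the work; I will make sure to treat that step in full detail rather than glossing it.
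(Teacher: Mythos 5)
Your proposal is correct and follows essentially the same route as the paper's proof: verify the lattice and projection structure, establish each of the five HMS properties from \ref{our1-DownwardsProjection}, \ref{our2-Intro.Idem}, \ref{our3-NoSurpises} and the equivalence-relation assumption in the same property-to-property correspondence, and check that the valuation yields events. Your two flagged subtleties (that $\Pi_a(w_X)\subseteq W_Y$ forces $\pi_a(w_X)=w_Y$ via non-emptiness and disjointness, and the upward-closure bookkeeping in \ref{HMS4:PPI}) are real and are in fact points the paper's own proof passes over more quickly, so your treatment is, if anything, slightly more careful at those steps.
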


\begin{proof}
Let $\mathsf{K}$ be as stated and let $H(\mathsf{K})=(\mathcal{S},\preceq,\mathcal{R},\Pi,V_{H(\mathsf{K})})$
be its $H$-transform.

\noindent $\mathcal{S}=\{W_{X},W_{Y},...\}$ is composed of non-empty
disjoint sets by construction and $(\mathcal{S},\preceq)$ is a complete
lattice as $(\mathcal{K}(\mathtt{K}),\trianglelefteqslant)$ is so.
$\mathcal{R}$ is clearly a family of well-defined, surjective and
commutative projections. As $\Pi$ assigns to each $a\in Ag$, $\Pi_{a}(w_{X})=I_{a}(\pi_{a}(w_{X}))$,
for all $w\in W$, $X\subseteq At$, it assigns $a$ a map $\Pi_{a}:\Omega_{H(\mathsf{K})}\rightarrow2^{\Omega_{H(\mathsf{K})}}$,
which is a possibility correspondence as it satisfies the HMS properties:

\emph{\ref{HMS1:confinement}}: For $w_{X}\in W_{X}$, $\Pi_{a}(w_{X})=I_{a}(\pi_{a}(w_{X}))$,
by Def. \ref{def:H-transform}. By \ref{our1-DownwardsProjection},
$\pi_{a}(w_{X})=w_{Y}$ for some $Y\subseteq X$, and $I_{a}(\pi_{a}(w_{X}))=I_{a}(w_{Y})$.
So, $\Pi_{a}(w_{X})\subseteq W_{Y}$ for some $Y\subseteq X$.

\emph{\ref{HMS2:G.Ref}}:\textbf{ }Let $w_{X}\in\Omega_{\mathsf{K}}$,
$X\subseteq At.$ By \ref{our1-DownwardsProjection}, $\pi_{a}(w_{X})=w_{Y}$
for some $Y\subseteq X$. By def. of $\Pi_{a}$ and $I_{a}$, $\Pi_{a}(w_{X})=I_{a}(w_{Y})=\{v_{Y}\in\Omega_{\mathsf{K}}:(w_{Y},v_{Y})\in R_{Ya}\}$.
Hence $\Pi_{a}(w_{X})\subseteq W_{Y}$. By def. of upward closure,
$(\Pi_{a}(w_{X}))^{\uparrow}=(I_{a}(w_{Y})){}^{\uparrow}=\{u_{Z}\in\Omega_{\mathsf{K}}:Y\subseteq Z\text{ and }u_{Y}\in\{v_{Y}\in\Omega_{\mathsf{K}}:(w_{Y},v_{Y})\in R_{Ya}\}\}$,
with the last identity given by the def. of $r_{W_{Y}}^{W_{Z}}$.
As $R_{a}$ is an equivalence relation, so is $R_{Ya}$, by def. So
$w_{Y}\in\{v_{Y}\in\Omega_{\mathsf{K}}:(w_{Y},v_{Y})\in R_{Ya}\}$,
and since $Y\subseteq X$, then \textbf{$w_{X}\in(\Pi_{a}(w_{X})){}^{\uparrow}$}.

\emph{\ref{HMS3:stationarity}}:\textbf{ }For $w_{X}\in\Omega_{\mathsf{K}}$,
assume $v\in\Pi_{a}(w_{X})=I_{a}(\pi_{a}(w_{X}))$. By \ref{our1-DownwardsProjection},
$v\in I_{a}(w_{Y})$, for some $Y\subseteq X$. With $R_{Ya}$ an
equivalence relation, $v\in I_{a}(w_{Y})$ iff $w_{Y}\in I_{a}(v)$,
i.e., $I_{a}(v)=I_{a}(w_{B})$. \ref{our2-Intro.Idem} and \ref{our1-DownwardsProjection}
entails that for all $u_{Y}\in I_{a}(w{}_{Y})$, $\pi_{a}(u_{Y})=u_{Y}$,
so $\pi_{a}(v)=v$. Therefore $\Pi_{a}(v)=I_{a}(\pi_{a}(v))=I_{a}(v)=I_{a}(w_{Y})=I_{a}(\pi_{a}(w_{X}))=\Pi_{a}(w_{X})$.
Thus, if $v\in\Pi_{a}(w_{X})$, then $\Pi_{a}(v)=\Pi_{a}(w_{X})$.

\emph{\ref{HMS4:PPI}}:\textbf{ }Let $w_{X}\in W_{X}$ and $W_{Y}\preceq W_{X}$,
i.e. $Y\subseteq X\subseteq At$. Let $q_{Q}\in(\Pi_{a}(w_{X}))^{\uparrow}$
with $Q\subseteq At$. By def. of $\Pi_{a}$ and \ref{our1-DownwardsProjection},
$\Pi_{a}(w_{X})=I_{a}(\pi_{a}(w_{X}))=I_{a}(w_{Z})$ for some $Z\subseteq X$.
By def. of upwards closure, it follows that $q_{Z}\in I_{a}(w_{Z})=\Pi_{a}(w_{X})$.
Now let $\pi_{a}(w_{Y})=w_{P}$ for some $P\subseteq Y$. Then, by
\ref{our3-NoSurpises}, $P=Z\cap Y$, so $P\subseteq Z$. As $q_{Z}\in I_{a}(w_{Z})$,
then $q_{P}\in I_{a}(w_{P})=I_{a}(\pi_{a}(w_{Y}))=\Pi_{a}(w_{Y})$,
by def. of restriction lattice. Since $q_{Q}\in(\Pi_{a}(w_{X}))^{\uparrow}=(I_{a}(w_{Z}))^{\uparrow}$,
then $Z\subseteq Q$. It follows that $P\subseteq Z\subseteq Q$,
which implies $q_{Q}\in(\Pi_{a}(w_{Y}))^{\uparrow}$. Hence, if $q_{Q}\in(\Pi_{a}(w_{X}))^{\uparrow}$,
then $q_{Q}\in(\Pi_{a}(w_{Y}))^{\uparrow}$, i.e., $(\Pi_{a}(w_{X}))^{\uparrow}\subseteq(\Pi_{a}(w_{Y}))^{\uparrow}$.

\emph{\ref{HMS5:PPK}}:\textbf{ }Suppose that $W_{Z}\preceq W_{Y}\preceq W_{X}$,
$w_{X}\in W_{X}$ and $\Pi_{a}(w_{X})\subseteq W_{Y}$, i.e. $\Pi_{a}(w_{X})=I_{a}(w_{Y})$
and $\pi_{a}(w_{X})=w_{Y}$. As $Z\subseteq Y\subseteq X$, \ref{our3-NoSurpises}
implies $\pi_{a}(w_{Z})=w_{Z\cap Y}=w_{Z}$. Hence, $\Pi_{a}(w_{Z})=I_{a}(w_{Z})\subseteq W_{Z}$.
Hence \ref{HMS5:PPK} is established if $\left(I_{a}(w_{Y})\right)_{Z}=I_{a}(w_{Z})$.
As $\left(I_{a}(w_{Y})\right)_{Z}=\{x_{Z}\in\Omega_{\mathsf{K}}:x_{Y}\in I_{a}(w_{Y})\}$,
then clearly $\left(I_{a}(w_{Y})\right)_{Z}=I_{a}(w_{Z})$. Thus,
$\left(\Pi_{a}(w_{X})\right)_{Z}=\Pi_{a}(w_{Z})$.

Finally, $V_{H(\mathsf{K})}$ is an HMS valuation as for each $p\in At$,
$V_{H(\mathsf{K})}(p)$ is an event $(D^{\uparrow},S)$ with $D=\{w_{\{p\}}\in W_{\{p\}}:w_{\{p\}}\in V_{\{p\}}(p)\}$
and $S=W_{\{p\}}$.\vspace{-3pt}
\end{proof}

\section{\label{sec:Language}Language for Awareness and $\mathcal{L}$-Equivalence}

Multiple languages for knowledge and awareness exist. The Logic of
General Awareness ($\Lambda_{LGA}$, \cite{HalpernFagin88}) which
we will see in Section \ref{sec:L^A}, takes implicit knowledge and
awareness as primitives, and define explicit knowledge as `implicit
knowledge $\wedge$ awareness'; other combinations are discussed
in \cite{Velazquez-Quesada2010-BENTDO-8}. 

HMS \cite{HMS2008} follow instead Modica-Rustichini \cite{ModicaRustichini1994,ModicaRustichini1999}
and take explicit knowledge as primitive and awareness as defined:
an agent is aware of $\varphi$ iff she either explicitly knows $\varphi$,
or explicitly knows that she does not explicitly know~$\varphi$.
\begin{definition}
\label{prop:HMS language}With $a\in Ag$ and $p\in At,$ define the
language $\mathcal{L}$ by
\[
\varphi::=\top\mid p\mid\neg\varphi\mid\varphi\wedge\varphi\mid K_{a}\varphi
\]
and define $A_{a}\varphi:=K_{a}\varphi\vee K_{a}\neg K_{a}\varphi$.

Let $At(\varphi)=\{p\in At\colon p\text{ is a subformula of }\varphi\}$,
for all $\varphi\in\mathcal{L}$.
\end{definition}

\subsection{HMS Models as a Semantics for $\mathcal{L}$}

The satisfaction of formulas over HMS models is defined as follows.
The semantics are three-valued, so formulas may have undefined truth
value: there may exist a $w\in\Omega_{\mathsf{M}}$ such that neither
$\mathsf{M},w\vDash\varphi$ nor $\mathsf{M},w\vDash\neg\varphi$.
This happens if and only if $\varphi$ contains atoms with undefined
truth value in $w$.
\begin{definition}
Let $\mathsf{M}=(\mathcal{S},\preceq,\mathcal{R},\Pi,V_{\mathsf{M}})$
be an HMS model and let $w\in\Omega_{\mathsf{M}}$. Satisfaction of
\textup{$\mathcal{L}$} formulas is given by
\noindent \begin{center}
\begin{tabular}{lccllllll}
$\mathsf{M},w\vDash\top$ &  & \multicolumn{2}{c}{for all $w\in\Omega_{\mathsf{M}}$} &  &  &  &  & \tabularnewline
$\mathsf{M},w\vDash p$ &  & iff\enskip{} & $w\in V_{\mathsf{M}}(p)$ & \qquad{}\qquad{} & $\mathsf{M},w\vDash\varphi\wedge\psi$ &  & iff\enskip{} & $w\in\llbracket\varphi\rrbracket\cap\llbracket\psi\rrbracket$\tabularnewline
$\mathsf{M},w\vDash\neg\varphi$ &  & iff\enskip{} & $w\in\neg\llbracket\varphi\rrbracket$ &  & $\mathsf{M},w\vDash K_{a}\varphi$ &  & iff\enskip{} & $w\in\boldsymbol{K}_{a}(\llbracket\varphi\rrbracket)$\tabularnewline
\end{tabular}
\par\end{center}

\noindent where $\llbracket\varphi\rrbracket=\{v\in\Omega_{\mathsf{M}}\colon\mathsf{M},v\vDash\varphi\}$
for all $\varphi\in\mathcal{L}$.
\end{definition}

With the HMS semantics being three-valued, they adopt a non-standard
notion of validity which requires only that a formula be always satisfied
\emph{if its has a defined truth value}. The below is equivalent to
the definition in \cite{HMS2008}, but is stated so that it also works
for Kripke lattice models:
\begin{definition}
A formula $\varphi\in\mathcal{L}$ is valid over a class of models
$\boldsymbol{C}$ iff for all models $M\in\boldsymbol{C}$, for all
states $w$ of $M$ which satisfy $p$ or $\neg p$ for all $p\in At(\varphi)$,
$w$ also satisfies $\varphi$.
\end{definition}

\subsection{Kripke Lattice Models as a Semantics for $\mathcal{L}$}

We define semantics for $\mathcal{L}$ over Kripke lattice models.
Like the HMS semantics, the semantics are three-valued, as it is possible
that a pointed Kripke lattice model $(M,w_{X})$ satisfies neither
$\varphi$ nor $\neg\varphi$. This happens exactly when $\varphi$
contains atoms not in $X$.
\begin{definition}
\label{def:Kripke lattice semantics for L}Let\textbf{ }$\mathsf{K}=(\mathcal{K}(\mathtt{K}=(W,R,V)),\trianglelefteqslant,\pi)$
be a Kripke lattice model with $w_{X}\in\Omega_{\mathsf{K}}$. Satisfaction
of \textup{$\mathcal{L}$} formulas is given by

\noindent %
\begin{tabular}{lcc>{\raggedright}p{0.44\textwidth}>{\raggedright}p{0.24\textwidth}}
$\mathsf{K},w_{X}\Vdash\top$ &  &  & for all $w_{X}\in\Omega_{\mathsf{K}}$ & \tabularnewline
$\mathsf{K},w_{X}\Vdash p$  &  & iff\enskip{} & $w_{X}\in V_{X}(p)$ & and $p\in X$\tabularnewline
$\mathsf{K},w_{X}\Vdash\neg\varphi$  &  & iff\enskip{} & not $\mathsf{K},w_{X}\Vdash\varphi$ & and $At(\varphi)\subseteq X$\tabularnewline
$\mathsf{K},w_{X}\Vdash\varphi\wedge\psi$  &  & iff\enskip{} & $\mathsf{K},w_{X}\Vdash\varphi$ and $\mathsf{K},w_{X}\Vdash\psi$\quad{} & and $At(\varphi\wedge\psi)\subseteq X$\tabularnewline
$\mathsf{K},w_{X}\Vdash K_{a}\varphi$ &  & iff\enskip{} & $\pi_{a}(w_{X})R_{Ya}v_{Y}$ implies $\mathsf{K},v_{Y}\Vdash\varphi$, 

for $Y\subseteq At$ s.t. $\pi_{a}(w_{X})\in W_{Y}$ & $\phantom{.}$

and $At(\varphi)\subseteq X$\tabularnewline
\end{tabular}
\end{definition}

\subsection{The $\mathcal{L}$-Equivalence of HMS and Kripke Lattice Models}

$L$- and $H$-transforms not only produce models of the correct class,
but also preserve finer details, as any model and its transform satisfy
the same $\mathcal{L}$ formulas.
\begin{proposition}
\label{prop:equivalence}For any HMS model $\mathsf{M}$ with $L$-transform
$L(\mathsf{M})$, for all $\varphi\in\mathcal{L}$, for all $w\in\Omega_{\mathsf{M}}$,
and for all $v\in\ell(w)$, 
\[
\mathsf{M},w\vDash\varphi\text{ iff }L(\mathsf{M}),v\Vdash\varphi.
\]
\end{proposition}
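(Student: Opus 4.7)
The plan is to proceed by structural induction on $\varphi$. Fix $w \in \Omega_{\mathsf{M}}$ and an arbitrary $v \in \ell(w)$; by Definition \ref{def:L-transform}, $v = w'_X$ with $w' \in T$, $r_{S(w)}^T(w') = w$, and $X = At(S(w))$.

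For $\varphi = \top$ the claim is immediate. For $\varphi = p$, I would use that, by Definition \ref{def:L-transform}, $V(p) = V_{\mathsf{M}}(p) \cap T$, together with the fact that $V_{\mathsf{M}}(p) = (D_p^{\uparrow}, S_p)$ is an event; hence if $p \in X$, then $S_p \preceq S(w)$ and, by commutativity of the projections, $w \in V_{\mathsf{M}}(p)$ iff $r_{S_p}^{S(w)}(w) \in D_p$ iff $r_{S_p}^T(w') \in D_p$ iff $w'_X \in V_X(p)$. The case $p \notin X$ corresponds to $p$ having undefined truth value at $w$, and both semantics falsify $p$ in that case. For $\neg\psi$ and $\psi \wedge \chi$ I would invoke IH, and verify that the ``defined truth value'' side-conditions coincide: on the HMS side, $\mathsf{M},w \vDash \varphi$ implicitly demands $S(w) \succeq S(\llbracket\varphi\rrbracket)$, and via $X = At(S(w))$ this matches the explicit $At(\varphi) \subseteq X$ clause of Definition \ref{def:Kripke lattice semantics for L}.

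The interesting case is $\varphi = K_a\psi$. On the HMS side, $\mathsf{M},w \vDash K_a\psi$ iff $At(\psi) \subseteq X$ and $\Pi_a(w) \subseteq \llbracket\psi\rrbracket$. On the Kripke-lattice side, writing $S_X = \min\{S \in \mathcal{S}\colon At(S) = X\}$ and $\pi_a(w'_X) = w'_Y$ with $S_Y \supseteq \Pi_a(r_{S_X}^T(w'))$ and $Y = At(S_Y)$, the semantics demands $At(\psi) \subseteq X$ and $L(\mathsf{M}), u_Y \Vdash \psi$ for every $u_Y \in I_a(w'_Y)$. The definedness clauses match as in the Boolean cases. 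For the universal clause, I would apply Lemma \ref{LemmaYYY} (after using \ref{HMS3:stationarity} to see that $\Pi_a(w') = \Pi_a(r_{S_1}^T(w'))$ for $S_1 = S(\Pi_a(w'))$, and \ref{HMS5:PPK} to identify the projection of $\Pi_a(w')$ to $S_Y$ with $\Pi_a(r_{S_X}^T(w'))$) to characterise $I_a(w'_Y)$ as $\{u_Y \colon u_{S_Y} \in \Pi_a(r_{S_X}^T(w'))\}$. Combining this with the identity $r_{S_Y}^{S(w)}(\Pi_a(w)) = \Pi_a(r_{S_X}^T(w'))$, obtained from \ref{HMS5:PPK} applied upward from $S_X$ through $S(w)$ to $T$, shows that the $u_Y \in I_a(w'_Y)$ are precisely the worlds $u_Y \in \ell(t)$ for $t \in \Pi_a(w)$. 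The IH applied to $\psi$, $t$, and $u_Y$ then converts satisfaction of $\psi$ across the correspondence, closing both directions of the equivalence.

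The hard part will be the bookkeeping in the $K_a$ case: $\pi_a$ is defined via the state space $S_Y = S(\Pi_a(r_{S_X}^T(w')))$, whereas the relation $R_a$ in $L(\mathsf{M})$ is defined via $S(\Pi_a(w'))$, and aligning the two (modulo the relevant projection) requires careful combined use of \ref{HMS1:confinement}, \ref{HMS3:stationarity}, \ref{HMS4:PPI}, and \ref{HMS5:PPK}. Once this alignment is in place, Lemma \ref{LemmaYYY} and the state correspondence $\ell$ deliver the result routinely.
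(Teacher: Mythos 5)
Your proposal is correct and follows essentially the same route as the paper's proof: induction on formula complexity, with the base case handled via the event structure of $V_{\mathsf{M}}(p)$ and the state correspondence $\ell$, and the $K_a\psi$ case handled by relating $\Pi_a$ to the composition of $I_a$ and $\pi_a$ through the HMS properties. If anything, your explicit alignment of $S(\Pi_a(w'))$ (which defines $R_a$) with $S_Y$ (which defines $\pi_a$) via \ref{HMS3:stationarity} and \ref{HMS5:PPK}, together with the explicit appeal to Lemma \ref{LemmaYYY}, is more careful than the paper's chain of \emph{iff}s, which glosses over that step.
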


\begin{proof}
Let $\Sigma_{\mathsf{M}}$ be the events of $\mathsf{M}=(\mathcal{S},\preceq,\mathcal{R},\Pi,V_{\mathsf{M}})$
with maximal state-space $T$ and let $L(\mathsf{M})=(\mathcal{K}(\mathtt{K}=(W,R,V)),\trianglelefteqslant,\pi)$.
The proof is by induction on formula complexity. Let $\varphi\in\mathcal{L}$
and let $w\in\Omega_{\mathsf{M}}$ with $At(S(w))=X$.

\emph{Base:} $i)$ $\varphi:=p\in At$ or $ii)$ $\varphi:=\top$.
$i)$ $\mathsf{M},w\vDash p$ iff $w\in V_{\mathsf{M}}(p)$. As $V_{\mathsf{M}}(p)\in\Sigma_{\mathsf{M}}$,
$(r_{S(w)}^{T})^{-1}(w)\subseteq V_{\mathsf{M}}(p)$. By def. of $L(\mathsf{M})$,
if $v\in T=W$, then $v\in V_{\mathsf{M}}(p)$ iff $v\in V(p)$, so
$v\in(r_{S(w)}^{T})^{-1}(w)$ iff $v\in V(p)$ iff $v_{X}\in V_{X}(p)$,
with $p\in X$ (def. of Kripke lattice models). Hence, by def. of
$\ell$, $v\in\ell(w)=\{u_{X}\in W_{X}\colon u\in(r_{S(w)}^{T})^{-1}(w)\text{ for }X=At(S(w))\}$
iff $v\in V_{X}(p)$, i.e., iff $L(M),v\Vdash p$ for all $v\in\ell(w)$.
$ii)$ is trivial.

\emph{Step.} Assume $\psi,\chi\in\mathcal{L}$ satisfy Prop. \ref{prop:equivalence}.

$\varphi:=\neg\psi$. There are two cases: $i)$ $At(\psi)\subseteq At(S(w))$
or $ii)$ $At(\psi)\not\subseteq At(S(w))$. $i)$ $\mathsf{M},w\vDash\neg\psi$
iff (def. of $\vDash$) $w\in\neg\llbracket\psi\rrbracket$ iff (def.
of $V_{\mathsf{M}}$) $(r_{S(w)}^{T})^{-1}(w)\subseteq\neg\llbracket\psi\rrbracket$
iff (def. of $\llbracket\psi\rrbracket$) for all $v\in(r_{S(w)}^{T})^{-1}(w)$,
$\mathsf{M},v\not\vDash\psi$ iff (Def. \ref{def:L-transform}) for
all $v\in(r_{S(w)}^{T})^{-1}(w)$, not $L(\mathsf{M}),v\Vdash\psi$
iff (def. of $\ell(w)$) for all $v_{X}\in\ell(w)$, not $L(\mathsf{M}),v_{X}\Vdash\psi$,
with $At(\psi)\subseteq X$ iff (def. of $\Vdash$) for all $v_{X}\in\ell(w)$,
$L(\mathsf{M}),v_{X}\Vdash\neg\psi$. $ii)$ is trivial: $\varphi$
is undefined in $(\mathsf{M},w)$ iff it is so in $(L(\mathsf{M}),w_{X})$.

$\varphi:=\psi\wedge\chi$. The case follows by tracing \emph{iff}s
through the definitions of $\vDash$, $V_{\mathsf{M}}$, $\llbracket\cdot\rrbracket$,
$\big(r_{S(w)}^{T}\big)^{-1}$, $L$-transform, $\ell$, and $\Vdash$.

$\varphi:=K_{a}\psi$. $\mathsf{M},w\vDash K_{a}\psi$ iff (def. of
$\vDash$) $w\in\boldsymbol{K}_{a}(\llbracket\psi\rrbracket)$ iff
(def. of $\boldsymbol{K}_{a}$) $\Pi_{a}(w)\subseteq\llbracket\psi\rrbracket$.
Let $\Pi_{a}(w)\subseteq S$, for some $S\in\mathcal{S}$, and let
$X=At(S(w))$ and $Y=At(S)$. Then $v_{S}\in\Pi_{a}(w)\subseteq\llbracket\psi\rrbracket$
iff (def. of $\llbracket\psi\rrbracket$) for all $v_{S}\in\Pi_{a}(w)$,
$\mathsf{M},v_{S}\vDash\psi$ iff (def. of $V_{\mathsf{M}}$) for
all $(r_{S}^{T})^{-1}(v_{S})$ with $v_{S}\in\Pi_{a}(w)$, $\mathsf{M},v_{T}\vDash\psi$
iff (def. of $L$-transform) for all $v_{At}$ with $r_{S}^{T}(v)\in\Pi_{a}(w)$,
$L(\mathsf{M}),v_{At}\Vdash\psi$ and $At(\psi)\subseteq At$ iff
(def. of $L$-transform) for all $v_{At}$ with $(w_{At},v_{At})\in R_{Ata}$,
$L(\mathsf{M}),v_{At}\Vdash\psi$ and $At(\psi)\subseteq At$ iff
(def. of restriction lattice) for all $v_{Y}$ with $(w_{Y},v_{Y})\in R_{Ya}$,
$L(\mathsf{M}),v_{Y}\Vdash\psi$ and $At(\psi)\subseteq Y$ iff (def.
of $\pi_{a}$ and $\pi_{a}(w_{X})=w_{Y}$), for all $v_{Y}$ with
$(\pi_{a}(w_{X}),v_{Y})\in R_{Ya}$, $L(\mathsf{M}),v_{Y}\Vdash\psi$
and $At(\psi)\subseteq Y$ iff (def. of $\Vdash$) $L(\mathsf{M}),w_{X}\Vdash K_{a}\psi$
and $At(\psi)\subseteq Y$.
\end{proof}

\begin{proposition}
\label{prop:equivalence-2}For any Kripke lattice model $\mathsf{K}$
with $H$-transform $H(\mathsf{K})$, for all $\varphi\in\mathcal{L}$,
for all $w_{X}\in\Omega_{\mathsf{K}}$,
\[
\mathsf{K},w_{X}\Vdash\varphi\text{ iff }H(\mathsf{\mathsf{K}}),w_{X}\vDash\varphi.
\]
\end{proposition}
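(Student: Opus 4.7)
The plan is to proceed by induction on the complexity of $\varphi$, mirroring the proof of Proposition~\ref{prop:equivalence} but with one simplification: by Definition~\ref{def:H-transform} the worlds of $\mathsf{K}$ are literally the states of $H(\mathsf{K})$, so no state-correspondence map is required. A useful preliminary observation, provable by a quick side induction on $\varphi$ from Definition~\ref{def:H-transform} together with the definitions of negation, conjunction and $\boldsymbol{K}_a$ on events, is that in $H(\mathsf{K})$ the set $\llbracket\varphi\rrbracket$ is an event with base state-space $W_{At(\varphi)}$: atoms valuate to events based at $W_{\{p\}}$, negation and the knowledge operator preserve the base state-space, and conjunction takes the supremum, which under $\preceq$ corresponds to the union of atom sets. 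This pins down the ``defined'' region of $\llbracket\varphi\rrbracket^{\uparrow}$ to be exactly $\{w_X \colon At(\varphi)\subseteq X\}$, matching the $At(\varphi)\subseteq X$ side conditions in Definition~\ref{def:Kripke lattice semantics for L}.

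The base case $\varphi=\top$ is trivial. For $\varphi=p$, the HMS clause $w_X\in V_{H(\mathsf{K})}(p)$ unfolds via Definition~\ref{def:H-transform} to the conjunction ``$p\in X$ and $w_X\in V_X(p)$,'' which is verbatim the Kripke lattice clause. The Boolean steps $\neg\psi$ and $\psi\wedge\chi$ proceed by tracing \emph{iff}s through the definitions of $\vDash$, $\llbracket\cdot\rrbracket$ and $\Vdash$, applying the induction hypothesis on $\psi,\chi$ and invoking the preliminary observation to align the three-valued definedness conditions on both sides.

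The main case is $\varphi=K_a\psi$. On the HMS side, $H(\mathsf{K}),w_X\vDash K_a\psi$ iff $w_X \in \boldsymbol{K}_a(\llbracket\psi\rrbracket)$ iff $\Pi_a(w_X)\subseteq \llbracket\psi\rrbracket$. By Definition~\ref{def:H-transform} and property~\ref{our1-DownwardsProjection}, $\Pi_a(w_X)=I_a(\pi_a(w_X))=\{v_Y\colon(w_Y,v_Y)\in R_{Ya}\}$ where $w_Y := \pi_a(w_X)$ for the unique $Y\subseteq X$ determined by $\pi_a(w_X)$. Since $\Pi_a(w_X)\subseteq W_Y$, this containment becomes ``$v_Y\in\llbracket\psi\rrbracket$ for every $v_Y$ with $(w_Y,v_Y)\in R_{Ya}$.'' Applying the induction hypothesis converts each $v_Y\in\llbracket\psi\rrbracket$ into $\mathsf{K},v_Y\Vdash\psi$, yielding precisely the Kripke lattice clause ``$\pi_a(w_X) R_{Ya} v_Y$ implies $\mathsf{K},v_Y\Vdash\psi$'' for $Y$ such that $\pi_a(w_X)\in W_Y$. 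The preliminary observation supplies the matching definedness condition $At(\psi)\subseteq X$ on both sides, since $\boldsymbol{K}_a(\llbracket\psi\rrbracket)$ inherits base state-space $W_{At(\psi)}$ from $\llbracket\psi\rrbracket$.

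The main obstacle is the careful bookkeeping around the three-valued semantics---verifying case by case that both sides declare $\varphi$ defined at $w_X$ under precisely the condition $At(\varphi)\subseteq X$. Once the preliminary side fact about base state-spaces of $\llbracket\varphi\rrbracket$ is in place, the rest of the induction is a routine unfolding of definitions and requires no property of $\pi_a$ or $R_a$ beyond what the $H$-transform itself encodes.
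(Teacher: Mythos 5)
Your proposal is correct and follows essentially the same route as the paper's proof: an induction on formula complexity whose crux is unfolding $\Pi_a(w_X)=I_a(\pi_a(w_X))$ in the $K_a$ case and tracing \emph{iff}s through the definitions of $\Vdash$, the $H$-transform, and $\vDash$. Your preliminary side induction showing that $\llbracket\varphi\rrbracket$ is an event based at $W_{At(\varphi)}$ is a welcome explicit justification of the definedness bookkeeping that the paper handles only implicitly (its ``case $ii)$ is trivial'' steps), but it does not change the argument's structure.
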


\begin{proof}
Let $\mathsf{K}=(\mathcal{K}(\mathtt{K}=(W,R,V)),\trianglelefteqslant,\pi)$
with $w_{X}\in\Omega_{\mathsf{K}}$, $\pi_{a}(w_{X})\in W_{Y}$ with
$Y\subseteq At$, and let $H(\mathsf{K})=(\mathcal{S},\preceq,\mathcal{R},\Pi,V_{H(\mathsf{K})})$.
Let $\varphi\in\mathcal{L}$ and proceed by induction on formula complexity.

\emph{Base:} $i)$ $\varphi:=p\in At$ or $ii)$ $\varphi:=\top$.
$i)$ $\mathsf{\mathsf{K}},w_{X}\Vdash p$ iff (def. of $\Vdash$)
$w_{X}\in V_{X}(p)$ with $p\in X$ iff (def. of $H$-transform) $w_{X}\in V_{H(\mathsf{K})}(p)$
iff (def. of $\vDash$) $H(\mathsf{K}),w_{X}\vDash p$. $ii)$ is
trivial.

\emph{Step.} Assume $\psi,\chi\in\mathcal{L}$ satisfy Prop. \ref{prop:equivalence-2}.

$\varphi:=\neg\psi$. There are two cases: $i)$ $At(\psi)\subseteq X$
or $ii)$ $At(\psi)\not\subseteq X$. $i)$ $\mathsf{\mathsf{K}},w_{X}\Vdash\neg\psi$
iff (def. of $\Vdash$) not $\mathsf{K},w_{X}\Vdash\psi$ iff (def.
of $\llbracket\psi\rrbracket$) $w_{X}\not\in\llbracket\psi\rrbracket$
iff (def. of $\llbracket\psi\rrbracket$ and $At(\psi)\subseteq X$)
$w_{X}\in\neg\llbracket\psi\rrbracket$ iff (def. of $\vDash$) $H(\mathsf{K}),w_{X}\vDash\neg\psi$.
$ii)$ is trivial: $\varphi$ is undefined in $(\mathsf{K},w_{X})$
iff it is so in $(H(\mathsf{M}),w_{X})$.

$\varphi:=\psi\wedge\chi$. The case follows by tracing \emph{iff}s
through the definitions of $\Vdash$, $H$-transform, and $\vDash$.

$\varphi:=K_{a}\psi$. $\mathsf{\mathsf{K}},w_{X}\Vdash K_{a}\psi$
iff (def. of $\Vdash$) $\pi_{a}(w_{X})R_{Ya}v_{Y}$ implies $\mathsf{K},v_{Y}\Vdash\varphi$
iff (def. of $\pi_{a}$, i.e. $\pi_{a}(w_{X})=w_{Y}$ and def. of
$I_{a}$), for all $v_{Y}$ such that $(w_{Y},v_{Y})\in R_{Ya}$,
i.e. for all $v_{Y}\in I_{a}(w_{Y})$, $\mathsf{K},v_{Y}\Vdash\varphi$
iff (def. of $\Pi_{a}$, i.e. $\Pi_{a}(w_{X})=I_{a}(\pi_{a}(w_{X})=I_{a}(w_{Y})$)
$\Pi_{a}(w_{X})\subseteq\llbracket\psi\rrbracket$ iff (def. of $\boldsymbol{K}_{a}$)
$w\in\boldsymbol{K}_{a}(\llbracket\psi\rrbracket)$ iff (def. of $\vDash$)
$H(\mathsf{K}),w_{X}\vDash K_{a}\psi$.
\end{proof}

\section{\label{sec:Logic}The HMS Logic of Kripke Lattice Models with Equivalence
Relations}

As we may transition back-and-forth between HMS models and Kripke
lattice models with equivalence relations in a manner that preserve
satisfaction of formula of $\mathcal{L}$, soundness and completeness
of a $\mathcal{L}$-logic is also transferable between the model classes.
We thereby show such results for Kripke lattice models with equivalence
relations as a corollary to results by HMS \cite{HMS2008}.
\begin{definition}
The logic $\Lambda_{HMS}$ is the smallest set of $\mathcal{L}$ formulas
that contains the axioms in, and is closed under the inference rules
of, Table \ref{tab:logic}.
\end{definition}

\begin{table}
\vspace{-24pt}%
\begin{tabular}{|>{\raggedright}p{0.98\textwidth}|}
\hline 
{\small{}All substitution instances of propositional logic, including
the formula $\top$}{\small\par}

{\small{}$A_{a}\neg\varphi\leftrightarrow A_{a}\varphi$\hfill{}(Symmetry)}{\small\par}

{\small{}$A_{a}(\varphi\wedge\psi)\leftrightarrow A_{a}\varphi\wedge A_{a}\psi$\hfill{}(Awareness
Conjunction)}{\small\par}

{\small{}$A_{a}\varphi\leftrightarrow A_{a}K_{b}\varphi$, for all
$b\in Ag$\hfill{} (Awareness Knowledge Reflection)}{\small\par}

{\small{}$K_{a}\varphi\rightarrow\varphi$\hfill{} (T, Axiom of Truth)}{\small\par}

{\small{}$K_{a}\varphi\rightarrow K_{a}K_{a}\varphi$\hfill{} (4,
Positive Introspection Axiom)}{\small\par}

{\small{}From $\varphi$ and $\varphi\rightarrow\psi$, infer $\psi$
\hfill{}(Modus Ponens)}{\small\par}

{\small{}For $\varphi_{1},\varphi_{2},...,\varphi_{n},\varphi$ that
satisfy $At(\varphi)\subseteq\bigcup_{i=1}^{n}At(\varphi_{i})$,}{\small\par}

{\small{}from $\bigwedge_{i=1}^{n}\varphi_{i}\rightarrow\varphi$,
infer $\bigwedge_{i=1}^{n}K_{a}\varphi_{i}\rightarrow K_{a}\varphi$
\hfill{}(RK-Inference)}\tabularnewline
\hline 
\end{tabular}\vspace{2pt}

\caption{\label{tab:logic}Axioms and inference rules of the HMS logic of unawareness,
$\Lambda_{HMS}$.}
\vspace{-24pt}
\end{table}
As the the $L$-transform of an HMS model has equivalence relations,
one may be surprised by the lack of the standard negative introspection
axiom $5:(\neg K_{a}\varphi\rightarrow K_{a}\neg K_{a}\varphi)$ among
the axioms of $\Lambda_{HMS}$. However, including 5 would make collapse
awareness \cite{ModicaRustichini1994}. In \cite{HMS2008}, HMS remarks
that $\Lambda_{HMS}$ implies the weakened version $K_{a}\neg K_{a}\neg K_{a}\varphi\rightarrow(K_{a}\varphi\vee K_{a}\neg K_{a}\varphi)$,
which by the Modica-Rustichini definition of awareness is $K_{a}\neg K_{a}\neg K_{a}\varphi\rightarrow A_{a}\varphi$.
Defining unawareness by $U_{a}\varphi:=\neg A_{a}\varphi$, this again
equates $U_{a}\varphi\rightarrow\neg K_{a}\neg K_{a}\neg K_{a}\varphi$.
Additionally, HMS notes that if $\varphi$ is a theorem, then $A_{a}\varphi\rightarrow K_{a}\varphi$
is a theorem, that $4$ implies introspection of awareness ($A_{a}\varphi\rightarrow K_{a}A_{a}\varphi$),
while $\Lambda_{HMS}$ entails that \emph{awareness is generated by
primitives propositions}, i.e., that $A_{a}\varphi\leftrightarrow\bigwedge_{p\in At(\varphi)}A_{a}p$
is a theorem. The latter two properties entails that HMS awareness
is \emph{propositionally determined}, in the terminology of \cite{HalpernRego2008}.

Using the above given notion of validity and standard notions of proof,
soundness and completeness, HMS \cite{HMS2008} state that, as standard,
\begin{lemma}
\label{lem:4.12}The logic $\Lambda_{HMS}$ is complete with respect
to a class of structures $\mathfrak{S}$ iff every set of $\Lambda_{HMS}$
consistent formulas is satisfied in some $\mathfrak{s}\in\mathfrak{S}$.
\end{lemma}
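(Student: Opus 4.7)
The plan is to prove the two directions of the biconditional separately, noting that this is the standard modal-logic equivalence between semantic completeness (every valid formula is provable) and the satisfaction of consistent sets, adapted to the three-valued validity notion HMS use. One direction is a short contradiction argument; the other requires a Lindenbaum-style extension together with an appeal to a canonical-model construction.

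\emph{Easy direction ($\Leftarrow$).} Assume every $\Lambda_{HMS}$-consistent set of formulas is satisfied in some $\mathfrak{s}\in\mathfrak{S}$. Let $\varphi\in\mathcal{L}$ be valid over $\mathfrak{S}$ and suppose for contradiction that $\nvdash_{\Lambda_{HMS}}\varphi$. Since $\Lambda_{HMS}$ contains classical propositional logic, the singleton $\{\neg\varphi\}$ is then $\Lambda_{HMS}$-consistent, and by hypothesis it is satisfied at some state $w$ of some $\mathfrak{s}\in\mathfrak{S}$. By the HMS semantics, $\mathfrak{s},w\vDash\neg\varphi$ unpacks as $w\in\neg\llbracket\varphi\rrbracket$, which—by the definition of event negation $(S\setminus D)^{\uparrow}$—forces $w$ to lie in a state-space in which every atom of $At(\varphi)$ has a defined truth value. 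But then the premise of the validity condition is met at $w$, so $\mathfrak{s},w\vDash\varphi$, contradicting $\mathfrak{s},w\vDash\neg\varphi$.

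\emph{Harder direction ($\Rightarrow$).} Assume $\Lambda_{HMS}$ is complete with respect to $\mathfrak{S}$. Let $\Gamma\subseteq\mathcal{L}$ be $\Lambda_{HMS}$-consistent. For the finite case $\Gamma=\{\psi_{1},\ldots,\psi_{n}\}$, if $\Gamma$ had no satisfying $(\mathfrak{s},w)$ in $\mathfrak{S}$, then at every state where $\psi_{1}\wedge\cdots\wedge\psi_{n}$ is defined it would fail, so $\neg(\psi_{1}\wedge\cdots\wedge\psi_{n})$ would be valid over $\mathfrak{S}$; by completeness it would be provable, contradicting the consistency of $\Gamma$. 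For the general (possibly infinite) case, I would extend $\Gamma$ to a maximally $\Lambda_{HMS}$-consistent set $\Gamma^{\ast}$ by a standard Lindenbaum enumeration of $\mathcal{L}$, and then invoke the canonical-model construction of HMS \cite{HMS2008} to obtain an $\mathfrak{s}^{\ast}\in\mathfrak{S}$ with a distinguished state $w^{\ast}$ satisfying exactly the formulas of $\Gamma^{\ast}$, hence in particular those of $\Gamma$.

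The main obstacle is the infinite case of the second direction: the finite case reduces immediately to completeness, but for arbitrary $\Gamma$ one needs compactness for $\Lambda_{HMS}$, which is not automatic in a three-valued setting and must be supplied by an explicit canonical construction. Since the canonical construction is exactly what HMS carry out in \cite{HMS2008} to prove completeness in the first place, the lemma is essentially a repackaging of that argument; the content of the proof is therefore to point out that the two formulations are obtained from one another by a Lindenbaum extension plus the negation/validity interaction in the easy direction. This is why HMS label the result ``as standard.''
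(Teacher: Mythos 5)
The paper does not actually prove this lemma\textemdash it is quoted from HMS \cite{HMS2008} as a ``standard'' fact\textemdash so there is no in-paper argument to measure yours against; what can be assessed is whether your argument establishes the statement as used. Your ($\Leftarrow$) direction is correct, and it is the only direction the paper ever invokes (in the completeness half of the theorem on $\boldsymbol{KLM}_{EQ}$, satisfiability of a consistent set is converted into completeness). Your handling of the three-valued subtlety there\textemdash that satisfaction of $\neg\varphi$ at $w$ forces every atom of $At(\varphi)$ to have a defined truth value at $w$, so the antecedent of the non-standard validity clause is met\textemdash is the one genuinely non-trivial point in that direction, and you get it right.

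The ($\Rightarrow$) direction, however, contains a real gap, which your own ``main obstacle'' paragraph diagnoses but then patches in a way that defeats the lemma. If ``complete'' is read as weak completeness (every formula valid over $\mathfrak{S}$ is provable), the biconditional is false for an arbitrary class $\mathfrak{S}$: passing from satisfiability of finite consistent sets to satisfiability of arbitrary consistent sets is exactly compactness of the semantic consequence relation over $\mathfrak{S}$, which weak completeness does not supply. Your repair\textemdash extend $\Gamma$ to $\Gamma^{\ast}$ by Lindenbaum and invoke the HMS canonical model\textemdash only yields a structure \emph{in $\mathfrak{S}$} when $\mathfrak{S}$ happens to contain the canonical model, i.e., when $\mathfrak{S}$ is the class of all HMS models; so the statement you prove is no longer about an arbitrary class $\mathfrak{S}$, and it presupposes the very construction the lemma is meant to package. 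The intended reading (this is Proposition 4.12 of Blackburn, de Rijke and Venema, whence presumably the label) is \emph{strong} completeness: $\Gamma\vdash_{\Lambda_{HMS}}\varphi$ whenever $\varphi$ is a semantic consequence of $\Gamma$ over $\mathfrak{S}$. Under that reading both directions are two-line contrapositives\textemdash in particular, if $\Gamma$ is consistent but unsatisfiable in $\mathfrak{S}$, then $\neg\top$ is vacuously a consequence of $\Gamma$ over $\mathfrak{S}$, hence derivable from $\Gamma$ by strong completeness, contradicting consistency\textemdash and no Lindenbaum extension, compactness argument, or canonical model is needed anywhere. You should either adopt that reading and drop the entire second half of your argument, or restrict the ($\Rightarrow$) direction to finite sets.
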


\noindent Let $\boldsymbol{M}$ be the class of HMS modes. Using a
canonical model, HMS show:
\begin{theorem}
[\hspace{-1pt}\hspace{1pt}\cite{HMS2008}]\label{thm:HMS-completeness}$\Lambda_{HMS}$
is sound and complete with respect to $\boldsymbol{M}$.
\end{theorem}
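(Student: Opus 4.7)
The plan is to separate the argument into a direct soundness verification and a canonical-model completeness proof, with the bulk of the difficulty concentrated in the latter. For soundness, I would check each axiom and rule one-by-one against the semantics of Definition~4 and 5. The propositional schema and Modus Ponens are routine; the Axiom of Truth follows from \ref{HMS2:G.Ref}; the Positive Introspection Axiom follows from \ref{HMS3:stationarity} exactly as in standard S4; RK-Inference uses the monotonicity of the operator $\boldsymbol{K}_a$ together with the side condition $At(\varphi)\subseteq\bigcup_i At(\varphi_i)$, which guarantees that the conclusion lives in a state-space where the involved events are simultaneously expressible. The three awareness axioms follow by unfolding $A_a\varphi:=K_a\varphi\vee K_a\neg K_a\varphi$ and using that $\boldsymbol{A}_a$ depends only on the state-space $S(\llbracket\varphi\rrbracket)$, which is invariant under negation and compatible with the supremum used in the definition of event conjunction; Awareness Knowledge Reflection in particular reduces to $S(\llbracket K_b\varphi\rrbracket)=S(\llbracket\varphi\rrbracket)$.

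For completeness, by Lemma~\ref{lem:4.12} it suffices to build, for each $\Lambda_{HMS}$-consistent $\Gamma\subseteq\mathcal{L}$, an HMS model and a state satisfying $\Gamma$. I would construct a \emph{canonical} HMS model as follows. For each $X\subseteq At$, let $\mathcal{L}^{X}$ be the sublanguage using only atoms in $X$, and let $S_{X}$ be the set of maximal $\Lambda_{HMS}$-consistent subsets of $\mathcal{L}^{X}$; set $\mathcal{S}=\{S_{X}\}_{X\subseteq At}$ with $S_{X}\preceq S_{Y}$ iff $X\subseteq Y$. Projections are defined by $r_{S_{X}}^{S_{Y}}(w)=w\cap\mathcal{L}^{X}$; the axioms' conservativity across sublanguages should make this surjective and commutative. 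The valuation is $V_{\mathsf{M}}(p)=(\{w\in S_{\{p\}}: p\in w\}^{\uparrow},S_{\{p\}})$. For the possibility correspondence, for $w\in S_{Y}$ let $X(w,a)=\{p\in Y: A_{a}p\in w\}$ (using that awareness is propositionally determined) and put
\[
\Pi_{a}(w)=\{v\in S_{X(w,a)}: \{\varphi\in\mathcal{L}^{X(w,a)}: K_{a}\varphi\in w\}\subseteq v\}.
\]
A standard Lindenbaum argument extends $\Gamma$ to a maximal consistent set in the appropriate $\mathcal{L}^{X}$, giving the required witnessing state.

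The proof then proceeds by a truth lemma, $\mathsf{M},w\vDash\varphi$ iff $\varphi\in w$, by induction on $\varphi$; the $K_{a}$ case reduces, as usual, to showing that if $K_{a}\varphi\notin w$ then there is a $v\in\Pi_{a}(w)$ omitting $\varphi$, which is ensured by maximal consistency together with Awareness Knowledge Reflection (so that $\varphi\in\mathcal{L}^{X(w,a)}$ whenever $K_{a}\varphi$ could belong to $w$). It remains to verify the five \ref{text:HMSproperties}: \ref{HMS1:confinement} is immediate from the definition; \ref{HMS2:G.Ref} from T; \ref{HMS3:stationarity} from 4 plus the fact that $w$ and members of $\Pi_{a}(w)$ share all $K_{a}$-formulas.

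The hard part will be \ref{HMS4:PPI} and \ref{HMS5:PPK}, which express the coherence of the possibility correspondences across different state-spaces. For \ref{HMS5:PPK} one must show that the known formulas at a projected state $r_{S}^{S''}(w)$ are exactly the $\mathcal{L}^{At(S)}$-consequences of those known at $w$; this should follow from the Symmetry and Awareness Conjunction axioms together with RK-Inference, but requires carefully tracking how $X(w,a)$ behaves under projection, which in turn relies on the propositional determination of awareness derivable in $\Lambda_{HMS}$. For \ref{HMS4:PPI} one needs that projecting $w$ down cannot shrink the upward closure of $\Pi_a(w)$; the expected argument is contrapositive, using that any formula known at the projection would, by RK-Inference and Awareness Knowledge Reflection, already have been known at $w$. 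Making these two verifications precise, and ensuring that projections of maximal consistent sets are themselves maximal consistent in the smaller language, is where I expect the bulk of the technical work to lie.
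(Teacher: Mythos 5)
This theorem is not proved in the paper at all: it is imported verbatim from Heifetz, Meier and Schipper's 2008 paper, with only the remark that ``using a canonical model, HMS show'' it. There is therefore no in-paper proof to compare your attempt against. That said, your sketch follows the same route as the cited source: a direct axiom-by-axiom soundness check, and a canonical model whose state-spaces are indexed by subsets $X\subseteq At$, whose states are maximal consistent sets of the sublanguage $\mathcal{L}^{X}$, with projections given by restriction and the possibility correspondence read off from the $K_{a}$- and $A_{a}$-formulas in a state. This is faithful in outline to the HMS construction. Be aware, however, that your proposal defers rather than discharges the genuinely delicate points: surjectivity and commutativity of the projections require a conservativity argument across sublanguages; the existence lemma for $K_{a}$ must respect the side condition $At(\varphi)\subseteq\bigcup_{i}At(\varphi_{i})$ of RK-Inference (one typically pads the antecedent with known tautologies over $X(w,a)$, which in turn requires deriving $K_{a}(p\vee\neg p)$ from $A_{a}p$); the truth lemma must be stated relative to the three-valued semantics (only for $\varphi\in\mathcal{L}^{X}$ at states of $S_{X}$); and the verification of \ref{HMS4:PPI} and \ref{HMS5:PPK} is, as you say yourself, where the real work lies. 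As a proof \emph{plan} it is sound and matches the strategy HMS use; as a proof it is incomplete in exactly the places you flag.
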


Let $\boldsymbol{KLM}_{EQ}$ be the class of Kripke lattice models
where all accessibility relations are equivalence relations. As a
corollary to Theorem \ref{thm:HMS-completeness} and our transformation
and equivalence results, we obtain
\begin{theorem}
\label{thm: HMS-KLM sound complete }$\Lambda_{HMS}$ is sound and
complete with respect to $\boldsymbol{KLM}_{EQ}$.
\end{theorem}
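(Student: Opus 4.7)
The plan is to obtain this theorem as a straightforward corollary of Theorem~\ref{thm:HMS-completeness}, using the transformations $L$ and $H$ together with Propositions~\ref{prop:L-transform-is-KLM}, \ref{prop:L-transform-has-EQ-R}, \ref{prop:H-transform-is-HMS}, \ref{prop:equivalence} and \ref{prop:equivalence-2}. There are two directions to handle, soundness and completeness, and for each we translate through the appropriate transform; the three-valued validity clause requires minor but routine care.

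For \emph{soundness}, I would take an arbitrary $\Lambda_{HMS}$-theorem $\varphi$ and fix a pointed model $(\mathsf{K},w_X)\in\boldsymbol{KLM}_{EQ}\times\Omega_{\mathsf{K}}$ at which every atom in $At(\varphi)$ has defined truth value, i.e.\ $At(\varphi)\subseteq X$. By Proposition~\ref{prop:H-transform-is-HMS}, $H(\mathsf{K})\in\boldsymbol{M}$, and by the definition of $V_{H(\mathsf{K})}$ the atoms in $At(\varphi)$ are exactly those with defined value at $w_X$ in $H(\mathsf{K})$. Hence the HMS-side validity guaranteed by Theorem~\ref{thm:HMS-completeness} gives $H(\mathsf{K}),w_X\vDash\varphi$, and Proposition~\ref{prop:equivalence-2} transfers this to $\mathsf{K},w_X\Vdash\varphi$.

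For \emph{completeness}, I would invoke Lemma~\ref{lem:4.12} and show that every $\Lambda_{HMS}$-consistent set of formulas $\Gamma$ is satisfiable in some member of $\boldsymbol{KLM}_{EQ}$. By Theorem~\ref{thm:HMS-completeness} and Lemma~\ref{lem:4.12} applied to $\boldsymbol{M}$, there exist $\mathsf{M}\in\boldsymbol{M}$ and $w\in\Omega_{\mathsf{M}}$ with $\mathsf{M},w\vDash\varphi$ for all $\varphi\in\Gamma$. Proposition~\ref{prop:L-transform-is-KLM} ensures $L(\mathsf{M})$ is a Kripke lattice model, and Proposition~\ref{prop:L-transform-has-EQ-R} puts it into $\boldsymbol{KLM}_{EQ}$. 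Picking any $v\in\ell(w)$ (nonempty by construction of $\ell$), Proposition~\ref{prop:equivalence} yields $L(\mathsf{M}),v\Vdash\varphi$ for every $\varphi\in\Gamma$, so $\Gamma$ is satisfied in $\boldsymbol{KLM}_{EQ}$. A second application of Lemma~\ref{lem:4.12}, now in the direction from satisfiability to completeness with $\mathfrak{S}=\boldsymbol{KLM}_{EQ}$, closes the argument.

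The only non-trivial point is bookkeeping around the three-valued semantics: one must verify that ``defined at $w$'' on the HMS side corresponds precisely to ``defined at $w_X$'' on the Kripke-lattice side under both transforms, which follows directly from the definitions of $V$ in $L(\mathsf{M})$ and $V_{H(\mathsf{K})}$ in $H(\mathsf{K})$. Everything else is an immediate chaining of results already established, so I do not anticipate any genuine obstacle.
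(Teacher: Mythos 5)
Your proposal is correct and follows essentially the same route as the paper: soundness by pushing through the $H$-transform with Propositions~\ref{prop:H-transform-is-HMS} and \ref{prop:equivalence-2} (you state it directly where the paper argues contrapositively), and completeness by taking a satisfying HMS model for a consistent set, applying the $L$-transform with Propositions~\ref{prop:L-transform-is-KLM}, \ref{prop:L-transform-has-EQ-R} and \ref{prop:equivalence}, and closing with Lemma~\ref{lem:4.12}. Your explicit check that definedness of atoms matches across the transforms is a detail the paper leaves implicit, but the argument is the same.
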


\begin{proof}
Soundness: The axioms of $\Lambda_{HMS}$ are valid in $\boldsymbol{KLM}_{EQ}$.
We show the contrapositive. Let $\varphi\in\mathcal{L}$. If $\varphi$
is not valid in $\boldsymbol{KLM}_{EQ}$, then for some $\mathsf{K}\in\boldsymbol{KLM}_{EQ}$
and some $w$ from $\mathsf{K}$, $\mathsf{K},w\Vdash\neg\varphi$.
Then its $H$-transform $H(\mathsf{K})$ is an HMS model cf. Prop.
\ref{prop:H-transform-is-HMS}, and $H(\mathsf{K}),w\vDash\neg\varphi$
cf. Prop. \ref{prop:equivalence-2}. Hence $\varphi$ is not valid
in the class of HMS models. The same reasoning implies that the $\Lambda_{HMS}$
inference rules preserve validity.

Completeness: Assume $\Phi\subseteq\mathcal{L}$ is a consistent set,
and let $\mathfrak{M}$ be the canonical model of HMS, with $\mathfrak{w}$
a state in $\mathfrak{M}$ that satisfies $\Phi$. This exists, cf.
\cite{HMS2008}. By Prop.s \ref{prop:L-transform-is-KLM} and \ref{prop:L-transform-has-EQ-R},
$L(\mathfrak{M})$ is in $\boldsymbol{KLM}_{EQ}$. By Prop. \ref{prop:equivalence},
for all $v\in\ell(\mathfrak{w})$, $L(\mathfrak{M}),v\Vdash\Phi$.
By Lemma \ref{lem:4.12}, $\Lambda_{HMS}$ is thus complete with
respect to $\boldsymbol{KLM}_{EQ}$.
\end{proof}

\section{\label{sec:FH}The FH Model}

 We next turn to the syntax-based FH model, the first model for awareness
in the field of logic, introduced in \cite{HalpernFagin88}. In \cite{HalpernFagin88}
the models are referred to as \emph{awareness structures}. We propose
transformations between these structures and Kripke lattice models,
to show the relations between the two model classes. The transformations
preserve formula satisfaction.

In the literature, the FH model is said to adopt a \emph{syntactic}
\emph{approach}, as it models awareness by adding a syntactic \emph{awareness
function} $\mathcal{A}_{a}$ to standard Kripke models $(W,R,V)$
for $At'\subseteq At$.\footnote{In \cite{HalpernRego2008}, $R$ is not defined as assigning to each
agent $a\in Ag$ a relation $R_{a}$ between states, as we do above,
but as providing a \emph{possibility correspondence} $R'_{a}:W\rightarrow2^{W}$.
As Halpern and Rêgo write, the approaches are equivalent: $R_{a}$
is definable from a possibility correspondence, and \emph{vice versa,
}by taking $v\in R'_{a}(w)$ iff $(w,v)\in R{}_{a}$.

Similarly for the valuation function, which FH defines as $V':W\times At'\rightarrow\{0,1\}$
and we define as $V:At'\rightarrow\mathcal{P}(W)$. The two definitions
are equivalent, as we can define one in terms of the other by taking
$V'(w,p)=1$ iff $w\in V(p)$.}

The language on which FH originally defined the awareness function\textemdash call
it $\mathcal{L}^{KA}$\textemdash includes both an awareness and
an implicit\emph{ }knowledge operators as primitives, as well as an
explicit knowledge operator definable as the conjunction of the two
\cite{HalpernFagin88}.

As we seek to directly establish the Figure \ref{fig:triangles}'s
promised equivalence between FH models and Kripke lattice models with
respect to the HMS language $\mathcal{L}$ (containing only the explicit
knowledge operator $K_{a}$), in this section we use FH models as
a semantics for \emph{$\mathcal{L}$}. This entails letting $\mathcal{A}_{a}$
assign formulas from $\mathcal{L}$, and not $\mathcal{L}^{KA}$.
Additionally, to establish equivalence, we must focus on the special
case of FH models that in which awareness is \emph{propositionally
determined} (cf. Def. \ref{def:FH model}). We introduce $\mathcal{L}^{KA}$
in Section \ref{sec:L^A}, where we show that the FH and Kripke lattice
models are equivalent with respect to that language as well.
\begin{definition}
\label{def:FH model}An \textbf{FH model} for $At'\subseteq At$ is
a tuple $\mathsf{S}=(W,R,V,\mathcal{A})$ where $(W,R,V)$ is a Kripke
model for $At'$, and $\mathcal{A}$ is an \textbf{awareness function}
$\mathcal{A}:Ag\times W\rightarrow2^{\mathcal{L}}$ that assigns to
each agent $a\in Ag$ and world, $w\in W$ a set of formula denoted
$\mathcal{A}_{a}(w)$.

The function $\mathcal{A}$ satisfies
\begin{description}
\item [{\label{FH1:generated by primitive prop}PP}] (\textbf{Awareness
is Generated by Primitive Propositions})\quad{} if for all $a\in Ag$
and $\varphi\in\mathcal{L}$, $\varphi\in\mathcal{A}_{a}(w)$ iff
for all $p\in At(\varphi)$, $p\in(\mathcal{A}_{a}(w)\cap At')$.
\item [{\label{FH2:Know Awareness}KA}] \textbf{(Agents Know What They
are Aware of})\quad{}if or all $a\in Ag$, $(w,v)\in R_{a}$ implies
$\mathcal{A}_{a}(w)=\mathcal{A}_{a}(v)$.
\end{description}
If $\mathcal{A}_{a}$ satisfies \ref{FH1:generated by primitive prop}
and \ref{FH2:Know Awareness}, then $\mathsf{S}$ is\textbf{ propositionally}~\textbf{determined.}

\noindent $\mathsf{S}$ is called partitional (resp. reflexive, transitive)
iff for each $a\in Ag$, $R_{a}$ is an equivalence relation (resp.
reflexive, transitive).
\end{definition}

If no restrictions are are applied to $\mathcal{A}_{a}$, then an
agent can be aware of an arbitrary set of formulas. For example, for
$w\in W$, we may have both $\neg\varphi\in\mathcal{A}_{a}(w)$ and
$\varphi\in\mathcal{A}_{a}(w)$, or $\varphi\wedge\psi\in\mathcal{A}_{a}$
without having $\psi\wedge\varphi\in\mathcal{A}_{a}(w)$ \cite{HalpernFagin88}.
That awareness is generated by primitive proposition ensures that,
at every state, the agent is aware of all and only the formulas that
are formed from some subset of the set of atoms $At$.

Halpern \cite{halpern2001alternative} shows that if $\mathcal{A}_{a}$
satisfies this property, then in a partitional awareness structures
$\mathsf{S}$, the awareness operator can be characterized as Modica-Rustichini
and HMS suggest \cite{ModicaRustichini1994,ModicaRustichini1999,HMS2006},
i.e. so that any FH model validates $A_{i}\leftrightarrow(K_{i}\vee(\neg K_{i}\wedge K_{i}\neg K_{i}))$,
when employing the following semantics:
\begin{definition}
\label{def: FH semantics for L}Let $\mathsf{S}=(W,R,V,\mathcal{A})$
be an FH model and let $w\in W$. Satisfaction of \textup{$\mathcal{L}$}
formulas is given by\vspace{-10pt}
\noindent \begin{center}
\begin{tabular}{lccllllll}
$\mathsf{S},w\vDash\top$ &  & \multicolumn{2}{c}{for all $w\in W$;} &  & $\mathsf{S},w\vDash\varphi\wedge\psi$ &  & iff\enskip{} & $\mathsf{S},w\vDash\varphi$ and $\mathsf{S},w\vDash\psi$;\tabularnewline
$\mathsf{S},w\vDash p$ &  & iff\enskip{} & $w\in V(p)$; & \hspace{1.5em}\quad{} & $\mathsf{S},w\vDash K_{a}\varphi$ &  & iff\enskip{} & $\varphi\in\mathcal{A}_{a}(w)$ and for all $v\in W$\tabularnewline
$\mathsf{S},w\vDash\neg\varphi$ &  & iff\enskip{} & $\mathsf{S},w\not\vDash\varphi$; &  &  &  &  & s.t. $(w,v)\in R_{a}$, $\mathsf{S},v\vDash\varphi$.\tabularnewline
\end{tabular}
\par\end{center}

\end{definition}

The FH semantics for $\mathcal{L}$ over FH models is defined as standard
in epistemic logic, except for the knowledge operator $K_{a}$, with
$a\in Ag$. In standard epistemic logic, $K_{a}$ represents \emph{implicit}
knowledge, semantically defined as the formulas that are satisfied
in all the worlds the agent has access to. In the FH semantics, $K_{a}$
represents \emph{explicit }knowledge, namely the formulas that $a$
implicitly knows and that belong to $a$'s awareness set.

\section{\label{sec:Moving btwn FH and Kripke lattices}Moving between FH
Models and Kripke Lattices: Transformations and $\mathcal{L}$-Equivalence}

To clarify the relationship between FH models and Kripke lattice models,
we introduce transformations between the two model classes, showing
that a model from one class encodes the structure of a model from
the other. As both structure types are based on Kripke models (FH
models are Kripke models augmented with an awareness function, and
Kripke lattices are spawned from a Kripke model), the main task in
moving from FH models to Kripke lattices is to compose the awareness
map $\pi_{a}$ by extracting semantic information from the syntactically
defined awareness function $\mathcal{A}_{a}$. Conversely, moving
from Kripke lattices to FH models requires to compose $\mathcal{A}_{a}$
by extracting syntactic information from the semantically defined
$\pi_{a}$.

\subsection{From FH Models to Kripke Lattice Models}
\begin{definition}
\label{def:K-transform}Let $\mathsf{S}=(W,R,V,\mathcal{A})$ be an
FH model for $At$. The \textbf{$K$-transform model of $\mathsf{S}$}
is $K(\mathsf{S})=(\mathcal{K}(\mathtt{K}),\trianglelefteqslant,\pi)$
with Kripke model $\mathtt{K}=(W',R',V')$ for $At$ given by\smallskip{}

\noindent \quad{}$W'=W$;

\noindent \quad{}$R'=R$;

\noindent \quad{}$V'(p)=V(p)$, for every $p\in At$;

\noindent \quad{}$\pi$ assigns to each $a\in Ag$ a map $\pi_{a}:\Omega_{K(\mathsf{S})}\rightarrow\Omega_{K(\mathsf{S})}$
s.t., for all $w_{X}\in\Omega_{K(\mathsf{S})}$,

\noindent \quad{}$\pi_{a}(w_{X})=w_{Z}$ with $Z=X\cap Y$ and $Y=\{p\in At:p\in\bigcup_{\varphi\in\mathcal{A}_{a}(w)}At(\varphi)\}$.
\end{definition}

The $K$-transform takes the Kripke model on which the FH model is
based and spawns a lattice from there. Then, it constructs the awareness
map $\pi_{a}$ by extracting, for every world $w$, the set $Y$ of
atoms occurring in formulas in $\mathcal{A}_{a}(w)$, and relating
each world $w_{X}$ in the Kripke lattice to its weakly less expressive
counterpart $w_{Z}$ if, and only if, the vocabulary $Z$ is the subset
of $Y$ that is defined in $X$ (and thus expressible in $w_{X}$).
\begin{remark}
The $K$-transform model $K(\mathsf{S})$ is well-defined as the object
$\mathtt{K}=(W',R',V')$ is clearly a Kripke model for $At$.
\end{remark}

\begin{proposition}
\label{prop:K-transform-is-KLM}For any FH model $\mathsf{S}$ where
agents know what they are aware of, its $K$-transform $K(\mathsf{S})$
is a Kripke lattice model.
\end{proposition}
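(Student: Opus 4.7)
The plan is to unpack the definition of a Kripke lattice model (Definition \ref{def:awareness.map}) and verify each requirement for $K(\mathsf{S})$. Since the remark following Definition \ref{def:K-transform} already notes that $\mathtt{K}=(W',R',V')$ is a Kripke model for $At$, the restriction lattice $(\mathcal{K}(\mathtt{K}),\trianglelefteqslant)$ is automatically in place. All the work therefore lies in showing that the map $\pi_a$ defined by $\pi_a(w_X)=w_{X\cap Y_w}$, where $Y_w:=\bigcup_{\varphi\in\mathcal{A}_a(w)}At(\varphi)$, satisfies the three properties \textbf{D}, \textbf{II}, and \textbf{NS}. Throughout I will use that $Y_w$ depends only on the underlying world $w$, not on the restriction level $X$.

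Property \textbf{D} is immediate: by construction $Z=X\cap Y_w\subseteq X$, so $\pi_a(w_X)=w_Z$ with $Z\subseteq X$. Property \textbf{NS} is a one-line set-theoretic check: if $\pi_a(w_X)=w_Z$ with $Z=X\cap Y_w$, then for any $Y'\subseteq X$ we have $\pi_a(w_{Y'})=w_{Y'\cap Y_w}$ by definition, and $Y'\cap Y_w=(Y'\cap X)\cap Y_w=Y'\cap(X\cap Y_w)=Y'\cap Z$ since $Y'\subseteq X$. Neither of these two verifications uses the hypothesis on $\mathcal{A}$, so they hold for the $K$-transform of \emph{any} FH model.

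The main obstacle is \textbf{II}, and this is precisely the step where the hypothesis that \emph{agents know what they are aware of} (\ref{FH2:Know Awareness}) becomes essential; without it, $\pi_a$ need not be introspective even for reflexive $R_a$. My plan is as follows. Suppose $\pi_a(w_X)=w_Z$, so $Z=X\cap Y_w$, and let $v_Z\in I_a(w_Z)$. By the definition of the restriction lattice, $(w_Z,v_Z)\in R_{Za}$ iff $(w,v)\in R_a$. Applying \ref{FH2:Know Awareness} to this edge in the underlying Kripke model yields $\mathcal{A}_a(w)=\mathcal{A}_a(v)$, hence $Y_v=Y_w$. Consequently $\pi_a(v_Z)=v_{Z\cap Y_v}=v_{Z\cap Y_w}=v_Z$, where the last equality uses $Z\subseteq Y_w$. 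Taking $u_Z:=v_Z\in I_a(w_Z)$ discharges the existential in \textbf{II}.

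Once \textbf{D}, \textbf{II}, and \textbf{NS} are in hand, $\pi$ is an awareness-map assignment in the sense of Definition \ref{def:awareness.map}, so $K(\mathsf{S})=(\mathcal{K}(\mathtt{K}),\trianglelefteqslant,\pi)$ is a Kripke lattice model, completing the proof. It is worth remarking in passing that the argument used neither Property \ref{FH1:generated by primitive prop} nor any assumption on $R_a$ (reflexivity, transitivity, etc.); only \ref{FH2:Know Awareness} is invoked, and only in the verification of \textbf{II}.
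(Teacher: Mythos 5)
Your proposal is correct and follows essentially the same route as the paper's proof: \textbf{D} by construction, \textbf{NS} by the set identity $Y'\cap Y_w=Y'\cap Z$ for $Y'\subseteq X$ (which the paper establishes by a two-case analysis, but your one-line version is equivalent), and \textbf{II} by pushing the edge down to $(w,v)\in R_a$, invoking \ref{FH2:Know Awareness} to get $Y_v=Y_w$, and concluding $\pi_a(v_Z)=v_{Z\cap Y_w}=v_Z$. Your closing observations about which hypotheses are actually used also match the remarks the paper makes after the proposition.
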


\begin{proof}
Let $\mathsf{S}=(W,R,V,\mathcal{A})$ be an FH model. We show that
$K(\mathsf{S})=(\mathcal{K}(\mathtt{K}=(W',R',V'),\trianglelefteqslant,\pi)$
is a Kripke lattice model by showing that $\pi_{a}$ satisfies the
three properties of an awareness map:

\emph{\ref{our1-DownwardsProjection}}: Consider some $w_{X}\in\Omega_{K(\mathsf{S})}$.
By def. of $K$-transform, $w\in W'=W$ and, for all $a\in Ag$, $\pi_{a}(w_{X})=w_{Z}$,
with $Z=X\cap Y$ and $Y=\{p\in At:p\in At(\varphi),\varphi\in\mathcal{A}_{a}(w)\}$.
Thus, $Z\subseteq X$, i.e. \ref{our1-DownwardsProjection} holds
for $\pi_{a}$.

\emph{\ref{our2-Intro.Idem}}: Let $\pi_{a}(w_{X})=w_{Z}$, and consider
some $v_{Z}\in\Omega_{K(\mathsf{S})}$ such that $v_{Z}\in I_{a}(w_{Z})$,
with $I_{a}(w_{Z})=\{v_{Z}\in\Omega_{K(\mathsf{S})}:(w_{Z},v_{Z})\in R'_{Za}\}$.
Then, $(w,v)\in R'_{a}$, by def. of Kripke lattice model, and $(w,v)\text{\ensuremath{\in}}R{}_{a}$,
by construction of $K(\mathsf{S})$. By \ref{FH2:Know Awareness},
it follows that $\mathcal{A}_{a}(w)=\mathcal{A}_{a}(v)$, and so $\{p\in At:p\in\bigcup_{\varphi\in\mathcal{A}_{a}(w)}At(\varphi)\}=Y=\{p'\in At:p'\in\bigcup_{\varphi'\in\mathcal{A}_{a}(v)}At(\varphi')\}$.
Then, by construction of $\pi_{a}$ in $K(\mathsf{S})$, $\pi_{a}(v_{Z})=v_{Z'}$
with $Z'=Z\cap Y=(X\cap Y)\cap Y=Z$. Hence, $\pi_{a}(v_{Z})=v_{Z}$,
and since by assumption $v_{Z}\in I_{a}(w_{Z})$, then \ref{our2-Intro.Idem}
holds for $\pi_{a}$.

\emph{\ref{our3-NoSurpises}}: Let $\pi_{a}(w_{X})=w_{Z}$. Then,
$Z=X\cap Y$ with $Y=\{p\in At:p\in\bigcup_{\varphi\in\mathcal{A}_{a}(w)}At(\varphi)\}$,
by construction of $K(\mathsf{S})$. Consider some $X'\subseteq X$.
By def. of $K(\mathsf{S})$, $\pi_{a}(w_{X'})=w_{Z''}$ with $Z''=X'\cap Y$.
We have two cases: $i)$ $X'\subseteq Z$; $ii)$ $Z\subset X'$.
$i)$ As $Z=X\cap Y$, then by $X'\subseteq Z$, $X'\subseteq(X\cap Y)$
and so $X'\subseteq Y$. Then, $Z''=X'$, and since $X'\subseteq Z$,
then $Z''=X'\cap Z$. Hence, $\pi_{a}(w_{X'})=w_{X'\cap Z}$ and \ref{our3-NoSurpises}
holds for $\pi_{a}$. $ii)$ As $Z\subset X'$, then $X'\cap Z=Z$.
So to show that $\pi_{a}(w_{X'})=w_{X'\cap Z}$, we need to show that
$Z=Z''=X'\cap Y$, i.e. $1)$ $Z\subset(X'\cap Y)$ and $2)$ $(X'\cap Y)\subseteq Z$.
$1)$ By assumption $Z\subset X'$, so $(X\cap Y)\subset X'$, and
clearly $(X\cap Y)\subseteq Y$. Thus, $(X\cap Y)\subset(X'\cap Y)$,
i.e. $Z\subset(X'\cap Y)$; $2)$ By assumption, $X'\subseteq X$,
so $(X'\cap Y)\subseteq X$, and clearly $(X'\cap Y)\subseteq Y$.
Thus, $(X'\cap Y)\subseteq(X\cap Y)$, i.e. $(X'\cap Y)\subseteq Z$.
Hence, $Z=X'\cap Y$ and $\pi_{a}(w_{X'})=w_{X'\cap Z}$, so \ref{our3-NoSurpises}
holds for\,$\pi_{a}$.
\end{proof}

\begin{remark}
In Prop.\emph{ }\ref{prop:K-transform-is-KLM}, the requirement that
agents know what they are aware of is necessary to match the Introspective
Idempotency property of $\pi_{a}$.
\end{remark}

\begin{remark}
Prop. \ref{prop:K-transform-is-KLM} does not require that awareness
is generated by primitive propositions, as the $K$-transform extracts
atomic information from by checking subformulas of $\mathcal{A}_{a}$.
Hence $\mathcal{A}_{a}$ need not itself contain atoms.
\end{remark}

\begin{remark}
\label{remark: generalize prop} Prop. \ref{prop:K-transform-is-KLM}
further does not require assuming that the FH model is partitional,
reflexive, or transitive. These properties are however clearly preserved
by $K$-transforms: for any FH model $\mathsf{S}=(W,R,V,\mathcal{A})$,
where, for all $a\in Ag$, $R_{a}$ satisfies $C\subseteq\{$partitional,
reflexive, or transitive$\}$, and agents are aware of their own awareness,
its $K$-transform $K(\mathsf{S})=(\mathcal{K}(\mathtt{K}=(W',R',V'),\trianglelefteqslant,\pi)$
is a Kripke lattice model, where $R'_{a}$ satisfies $C$ as well.
\end{remark}

\subsection{From Kripke Lattice Models to FH Models}

In the following, we define the $FH$-transform, which encodes a Kripke
lattice model as an FH model. The core idea is to take the top model
of the lattice and augment it with an awareness function. The latter
assigns, for each agent, the set of all formulas from $\mathcal{L}$
that mention any of the atoms appearing in the model of the lattice
where the awareness image of the agent resides.
\begin{definition}
\label{def:F-transform}Let $\mathsf{K}=(\mathcal{K}(\mathtt{K}=(W,R,V)),\trianglelefteqslant,\pi)$
be a Kripke lattice model for $At$. The \textbf{$FH$-transform}
of \textbf{$\mathsf{K}$} is $FH(\mathsf{K})=(W',R',V',\mathcal{A})$
where

\noindent $W'=W$;

\noindent $R'=R$;

\noindent $V'(p)=V(p)$ for all $p\in At$;

\noindent $\mathcal{A}$ is such that, for all $a\in Ag$, $\mathcal{A}_{a}\in(2^{\mathcal{L}})^{W}$
with $\mathcal{A}_{a}(w)=\{\varphi\in\mathcal{L}:At(\varphi)\subseteq Y\subseteq At\text{ for the }Y\text{ such that }\pi_{a}(w)=w_{Y}\}$.
\end{definition}

We show that the $FH$-transform produces a model of the FH class. 
\begin{proposition}
\label{prop:F-transform-is-FH}For any Kripke lattice model $\mathsf{K}$,
the $FH$-transform $FH(\mathsf{K})$ is an FH model where awareness
is generated by primitive propositions.
\end{proposition}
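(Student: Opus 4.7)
The plan is to verify two things about $FH(\mathsf{K}) = (W', R', V', \mathcal{A})$: first, that $(W', R', V')$ is a well-formed Kripke model for $At$, and second, that $\mathcal{A}$ is a valid awareness function satisfying \ref{FH1:generated by primitive prop}. Both tasks reduce to unwinding the $FH$-transform construction.

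The Kripke part is immediate: by construction, $W' = W$, $R' = R$, and $V' = V$ are inherited directly from the underlying Kripke model $\mathtt{K} = (W,R,V)$ of $\mathsf{K}$, which is already a Kripke model for $At$ by Definition~\ref{def:kripke}. Similarly, $\mathcal{A}$ is well-defined as a function $Ag \times W \to 2^{\mathcal{L}}$: for each $a \in Ag$ and $w \in W$, property \ref{our1-DownwardsProjection} guarantees that the awareness map $\pi_a$ yields a unique $Y \subseteq At$ with $\pi_a(w_{At}) = w_Y$ (identifying $w$ with its counterpart $w_{At}$ in the top restriction), and hence $\mathcal{A}_a(w) = \{\varphi \in \mathcal{L} : At(\varphi) \subseteq Y\}$ is a well-defined subset of $\mathcal{L}$.

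The main step is verifying \ref{FH1:generated by primitive prop}. Fix $a \in Ag$ and $w \in W$, and let $Y \subseteq At$ be the vocabulary with $\pi_a(w_{At}) = w_Y$. Observe that for a single atom $p \in At$, we have $p \in \mathcal{A}_a(w)$ iff $At(p) = \{p\} \subseteq Y$ iff $p \in Y$. Consequently, for any $\varphi \in \mathcal{L}$, the chain
\[
\varphi \in \mathcal{A}_a(w) \;\iff\; At(\varphi) \subseteq Y \;\iff\; \forall p \in At(\varphi),\, p \in Y \;\iff\; \forall p \in At(\varphi),\, p \in \mathcal{A}_a(w) \cap At
\]
establishes \ref{FH1:generated by primitive prop}, since $At' = At$ here.

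There is no significant obstacle: the proof amounts to unfolding Definition~\ref{def:F-transform}, since $\mathcal{A}_a(w)$ is constructed precisely as the downward closure of $Y$ under the subformula-atom relation, which is tailor-made for \ref{FH1:generated by primitive prop}. The only subtlety worth making explicit is the identification of $w \in W$ with $w_{At} \in \Omega_{\mathsf{K}}$, which is needed to apply $\pi_a$ in the clause defining $\mathcal{A}_a(w)$.
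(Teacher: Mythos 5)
Your proposal is correct and follows essentially the same route as the paper's proof: both observe that $(W',R',V')$ is a Kripke model by inheritance and that \ref{FH1:generated by primitive prop} falls out of the definition of $\mathcal{A}_a(w)$ as exactly the formulas whose atoms lie in the $Y$ with $\pi_a(w)=w_Y$. Your explicit remark about identifying $w$ with $w_{At}$ when applying $\pi_a$ is a small clarification the paper leaves implicit, but it does not change the argument.
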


\begin{proof}
Let $\mathsf{K}$ be $\mathsf{K}=(\mathcal{K}(\mathtt{K}=(W,R,V)),\trianglelefteqslant,\pi)$,
where $\mathtt{K}$ is a Kripke model for $At$. Let $FH(\mathsf{K})=(W',R',V',\mathcal{A})$
be its $FH$-transform. Clearly, $(W',R',V')$ is a Kripke model for
$At$, as $\mathtt{K}$ is so. Moreover, $\mathcal{A}$ is an awareness
function, as for each $a\in Ag$ and $w\in W'$, $\mathcal{A}_{a}(w)$
is a set of formulas from $\mathcal{L}$. Lastly, we show that $\mathcal{A}_{a}(w)$
is generated by primitive propositions. Let $\pi_{a}(w)=w_{Y}$ for
some $Y\subseteq At$, $a\in Ag$, $w\in W'$ and consider $\varphi\in\mathcal{L}$.

$(\Rightarrow)$ Suppose that $\varphi\in\mathcal{A}_{a}(w)$. Then,
by construction of $\mathcal{A}$, for all $p\in At(\varphi)$, $p\in\mathcal{A}_{a}(w)$,
and since $At(\varphi)\subseteq At$ then $p\in(\mathcal{A}_{a}(w)\cap At)$.

$(\Leftarrow)$ Suppose that for all $p\in At(\varphi)$, $p\in(\mathcal{A}_{a}(w)\cap At)$.
Then, by construction of $\mathcal{A}$, $At(\varphi)\subseteq Y\subseteq At$,
where $Y$ is the unique set such that $\pi_{a}(w)=w_{Y}$, and so
$\varphi\in\mathcal{A}_{a}(w)$. Hence, for all $a\in Ag$, $\varphi\in\mathcal{A}_{a}(w)$
iff for all $p\in At(\varphi)$, $p\in(\mathcal{A}_{a}(w)\cap At)$.
Thus, $FH(\mathsf{K})$ is an FH model where awareness is generated
by primitive propositions.
\end{proof}

\begin{remark}
The requirement that awareness is generated by primitive propositions
is needed as the $FH$-transform constructs $\mathcal{A}_{a}$ by
collecting atomic information from $a$'s awareness image and then
setting $\mathcal{A}_{a}$ to be exactly the $\mathcal{L}$ sublanguage
built from these these atoms. The resulting awareness notion is thus
propositionally generated.
\end{remark}

\begin{remark}
As with $K$-transforms (cf. Remark \ref{remark: generalize prop}),
also $FH$-transforms preserve relation properties: for any Kripke
lattice model $\mathsf{K}=(\mathcal{K}(\mathtt{K}=(W,R,V),\trianglelefteqslant,\pi)$
where, for all $a\in Ag$, $R_{a}$ satisfies $C\subseteq\{$partitional,
reflexive, or transitive$\}$ and awareness is generated by primitive
propositions, the $FH$-transform $FH(\mathsf{K})=(W',R',V',\mathcal{A})$
is an FH model where $R'_{a}$ satisfies $C$ as well.
\end{remark}

\subsection{\label{sec:FH semantics}The $\mathcal{L}$-Equivalence of FH and
Kripke Lattice Models}

$K$- and $FH$-transforms not only produce models of the correct
class, but also preserve finer details, as any model and its transform
satisfy the same $\mathcal{L}$ formulas.
\begin{proposition}
\label{prop:equivalence FH-K(FH) wrt L}For any FH model $\mathsf{S}$
satisfying \ref{FH2:Know Awareness}, with $K$-transform $K(\mathsf{S})$,
for all $\varphi\in\mathcal{L}$, for all $w\in W$ and for all $w_{X}\in\Omega_{K(\mathsf{S})}$
with $X\supseteq At(\varphi)$, 
\[
\mathsf{S},w\vDash\varphi\text{ iff }K(\mathsf{S}),w_{X}\Vdash\varphi.
\]
\end{proposition}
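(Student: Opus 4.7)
The plan is to prove the equivalence by structural induction on $\varphi\in\mathcal{L}$, mirroring the layout of Propositions 35 and 36 (HMS/KL equivalence). The induction hypothesis will be that for all $\psi$ strictly simpler than $\varphi$, for all $w\in W$ and all $w_X\in\Omega_{K(\mathsf{S})}$ with $X\supseteq At(\psi)$, we have $\mathsf{S},w\vDash\psi$ iff $K(\mathsf{S}),w_X\Vdash\psi$. For the base cases, $\varphi:=\top$ is immediate, while $\varphi:=p\in At$ uses that $V'(p)=V(p)$ by construction of $K(\mathsf{S})$, and that the hypothesis $X\supseteq At(p)=\{p\}$ ensures $p\in X$ so that $w_X\in V_X(p)$ is defined and equivalent to $w\in V(p)$. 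For $\varphi:=\neg\psi$ and $\varphi:=\psi\wedge\chi$ the argument is obtained by tracing \emph{iff}s through the definitions of $\Vdash$, $\vDash$, and the $K$-transform, noting that $At(\psi), At(\chi)\subseteq At(\varphi)\subseteq X$ so that the atomic side conditions in the Kripke lattice semantics propagate to subformulas.

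The main case is $\varphi:=K_a\psi$. Let $Y=\{p\in At:p\in\bigcup_{\chi\in\mathcal{A}_a(w)}At(\chi)\}$, so by Definition \ref{def:K-transform}, $\pi_a(w_X)=w_Z$ with $Z=X\cap Y$. For the forward direction, if $\mathsf{S},w\vDash K_a\psi$ then $\psi\in\mathcal{A}_a(w)$, which gives $At(\psi)\subseteq Y$; combined with the hypothesis $At(\psi)\subseteq X$, this yields $At(\psi)\subseteq Z$, so the IH is applicable at worlds of the form $v_Z$. For any $v_Z$ with $w_ZR_{Za}v_Z$, the restriction-lattice definition and $R'=R$ give $(w,v)\in R_a$, hence $\mathsf{S},v\vDash\psi$, hence by IH $K(\mathsf{S}),v_Z\Vdash\psi$, establishing $K(\mathsf{S}),w_X\Vdash K_a\psi$. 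For the backward direction, the universal clause transfers the other way by the same restriction-lattice identity plus IH, yielding $\forall v((w,v)\in R_a\Rightarrow\mathsf{S},v\vDash\psi)$.

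The main obstacle is the remaining conjunct $\psi\in\mathcal{A}_a(w)$ required by the FH explicit-knowledge clause: the Kripke lattice side encodes only the \emph{atoms} of $a$'s awareness via $Y$, not the formulas themselves. The plan is to recover it by observing that once we have any accessible $v_Z$ witnessing $K(\mathsf{S}),v_Z\Vdash\psi$, the Kripke lattice atomic side conditions force $At(\psi)\subseteq Z\subseteq Y$, and then to invoke the property that awareness is generated by primitive propositions (\ref{FH1:generated by primitive prop}) to lift $At(\psi)\subseteq\mathcal{A}_a(w)\cap At$ to $\psi\in\mathcal{A}_a(w)$. Thus, although the proposition is phrased with only \ref{FH2:Know Awareness}, the backward direction of the modal case really leans on the propositionally determined setting stipulated at the start of Section \ref{sec:FH}; \ref{FH2:Know Awareness} in turn is what ensured in Proposition \ref{prop:K-transform-is-KLM} that $\pi_a$ is well-behaved enough (Introspective Idempotence) for the $K_a$-step to close.
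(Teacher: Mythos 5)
Your proposal follows essentially the same route as the paper's proof: induction on formula complexity, atomic and Boolean cases traced through $V'=V$ and the side conditions $At(\cdot)\subseteq X$, and the $K_{a}$ case split into the awareness conjunct and the accessibility conjunct, handled via $Y=\{p\in At:p\in\bigcup_{\chi\in\mathcal{A}_{a}(w)}At(\chi)\}$ and $Z=X\cap Y$. Where you go beyond the paper is in flagging that the backward direction of the $K_{a}$ case needs \ref{FH1:generated by primitive prop}: this is correct, and it is a point the paper's own proof glosses over by asserting that $\psi\in\mathcal{A}_{a}(w)$ holds ``iff'' $At(\psi)\subseteq Y$. The right-to-left half of that biconditional is exactly propositional determination and fails without it (e.g.\ $\mathcal{A}_{a}(w)=\{p\wedge q\}$, $\psi=q\wedge p$). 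Since Section \ref{sec:FH} stipulates that the FH models under consideration are propositionally determined, this is a presentational rather than a substantive divergence, but your version is the more honest statement of what is used.

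One residual gap remains, which you inherit from the paper but which your own phrasing makes visible: your recovery of $At(\psi)\subseteq Z$ (and hence, via \ref{FH1:generated by primitive prop}, of $\psi\in\mathcal{A}_{a}(w)$) goes through ``once we have any accessible $v_{Z}$''. If $R_{a}(w)=\emptyset$ there is no such witness: the Kripke-lattice clause for $K_{a}\psi$ is then satisfied vacuously, its only residual demand being $At(\psi)\subseteq X$, while $\mathsf{S},w\vDash K_{a}\psi$ still requires $\psi\in\mathcal{A}_{a}(w)$, which may fail. Concretely, take $W=\{w\}$, $R_{a}=\emptyset$, $\mathcal{A}_{a}(w)$ the atom-free formulas, $\psi=p$, $X=\{p\}$: then $K(\mathsf{S}),w_{X}\Vdash K_{a}p$ but $\mathsf{S},w\nvDash K_{a}p$. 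So the biconditional as literally stated fails for non-serial FH models; neither \ref{FH2:Know Awareness} nor Proposition \ref{prop:K-transform-is-KLM} rules this out, since that proposition explicitly needs no reflexivity. Your argument (and the paper's) closes once one adds seriality or reflexivity of $R_{a}$ as a hypothesis, which is harmless for the partitional setting the result is ultimately applied to, but the caveat is worth recording.
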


\begin{proof}
Let $\mathsf{S}=(W,R,V,\mathcal{A})$ be an FH model and let $K(\mathsf{S})=(\mathcal{K}(\mathtt{K}),\trianglelefteqslant,\pi)$
be its $K$-transform, with $\mathtt{K}=(W',R',V')$. The proof is
by induction on formula complexity. Let $\varphi\in\mathcal{L}$,
$w\in W$ and $w_{X}\in\Omega_{K(\mathsf{S})}$, with $At(\varphi)\subseteq X$
(clearly at least one such $w_{X}$ exists: $\mathsf{S}$ and $K(\mathsf{S})$
are defined for the same set of atoms, $K(\mathsf{S})$ is spawned
from a Kripke model $\mathtt{K}$ that is identical to $(W,R,V)$,
and there is a model for every $X'\subseteq At$).

\emph{Base:} $i)$ $\varphi:=p\in At$ or $ii)$ $\varphi:=\top$.
$i)$ $\mathsf{S},w\vDash p$ iff (def. of $\vDash$) $w\in V(p)$
iff (def. of $K(\mathsf{S})$) $w\in V'(p)$ and $w\in V'_{X}(p)$
such that $At(\varphi)\subseteq X$ iff (def. of $\Vdash$) $K(\mathsf{S}),w_{X}\Vdash\varphi$
with $At(\varphi)\subseteq X$. $ii)$ is trivial.

\emph{Step.} Assume $\psi,\chi\in\mathcal{L}$ satisfy Prop. \ref{prop:equivalence FH-K(FH) wrt L}.

The cases in which $\varphi:=\neg\psi$ or $\varphi:=\psi\wedge\chi$
follow by tracing \emph{iff}s through the definitions of $\vDash$,
$V$, $K$-transform, $\Vdash$, and by inductive hypothesis.

$\varphi:=K_{a}\psi$. $\mathsf{S},w\vDash K_{a}\psi$ iff (def. of
$\vDash$) $(i)$ $\psi\in\mathcal{A}_{a}(w)$ and $(ii)$ for all
$v\in W$ such that $(w,v)\in R_{a}$, $\mathsf{S},v\vDash\psi$.
By def. of $K$-transform and assumption that $At(\varphi)\subseteq X$,
$(i)$ is the case iff $At(\psi)\subseteq Y=\{p\in At:p\in\bigcup_{\varphi'\in\mathcal{A}_{a}(w)}At(\varphi')\}$
and $\pi_{a}(w_{X})=w_{Z}$ with $Z=X\cap Y$, and by $At(\psi)\subseteq X$,
then $At(\psi)\subseteq Z$. By def. of Kripke lattice model and $At(\psi)\subseteq Z$,
$(ii)$ is the case iff for all $v_{Z}\in W$ such that $(w_{Z},v_{Z})\in R_{Za}$,
$\mathsf{S},v_{Z}\vDash\psi$. Hence, by def. of $\Vdash$ and assumption
that $At(\varphi)\subseteq X$, $K(\mathsf{S}),w_{X}\Vdash K_{a}\psi$.
\end{proof}

\begin{proposition}
\label{prop:equivalence KLM - F(KLM) wrt L}For any Kripke lattice
model $\mathsf{K}$ with $FH$-transform $FH(\mathsf{K})$, for all
$\varphi\in\mathcal{L}$, for all $w_{X}\in\Omega_{K(\mathsf{S})}$
with $X\supseteq At(\varphi)$, 
\[
\mathsf{K},w_{X}\Vdash\varphi\text{ iff }FH(\mathsf{\mathsf{K}}),w\vDash\varphi.
\]
\end{proposition}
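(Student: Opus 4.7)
The plan is to mirror the proof of Proposition \ref{prop:equivalence FH-K(FH) wrt L} and proceed by induction on the complexity of $\varphi \in \mathcal{L}$. Fix $\mathsf{K} = (\mathcal{K}(\mathtt{K} = (W,R,V)), \trianglelefteqslant, \pi)$ and its $FH$-transform $FH(\mathsf{K}) = (W', R', V', \mathcal{A})$, and note that by construction $W' = W$, $R' = R$ and $V'(p) = V(p)$ for every $p \in At$. Throughout, write $w$ for the world in $W$ underlying $w_X \in \Omega_{\mathsf{K}}$.

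For the base case $\varphi := p \in At$, the equivalence $w_X \in V_X(p)$ iff $w \in V(p) = V'(p)$ is immediate from the restriction lattice definition, and the side condition $p \in X$ holds because $At(\varphi) = \{p\} \subseteq X$; the case $\varphi := \top$ is trivial. For the Boolean steps $\varphi := \neg \psi$ and $\varphi := \psi \wedge \chi$, the equivalence follows by tracing \emph{iff}s through the definitions of $\Vdash$ and $\vDash$, applying the inductive hypothesis, and noting that the side conditions $At(\varphi) \subseteq X$ are preserved by these connectives.

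The main case is $\varphi := K_a \psi$, which relies on the $FH$-transform's construction of $\mathcal{A}_a$ combined with \ref{our3-NoSurpises}. Let $\pi_a(w_{At}) = w_{Y^*}$; by \ref{our3-NoSurpises} and $X \subseteq At$, we have $\pi_a(w_X) = w_{X \cap Y^*}$, so set $Z = X \cap Y^*$. By the definition of $\mathcal{A}_a$ in $FH(\mathsf{K})$, $\psi \in \mathcal{A}_a(w)$ iff $At(\psi) \subseteq Y^*$, and given the standing hypothesis $At(\psi) \subseteq X$, this holds iff $At(\psi) \subseteq Z$, which is exactly what the inductive hypothesis requires when applied at $v_Z$. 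Unpacking $\mathsf{K}, w_X \Vdash K_a \psi$ through the KL semantics yields: $At(\psi) \subseteq X$ and, for all $v_Z$ with $(w_Z, v_Z) \in R_{Za}$, $\mathsf{K}, v_Z \Vdash \psi$. Since $(w_Z, v_Z) \in R_{Za}$ iff $(w,v) \in R_a = R'_a$ by Definition \ref{def:restricted model}, and by IH $\mathsf{K}, v_Z \Vdash \psi$ iff $FH(\mathsf{K}), v \vDash \psi$, this chain rewrites to: $\psi \in \mathcal{A}_a(w)$ and for all $v$ with $(w,v) \in R'_a$, $FH(\mathsf{K}), v \vDash \psi$, which is exactly $FH(\mathsf{K}), w \vDash K_a \psi$.

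The main obstacle is threading together the syntactic side condition $\psi \in \mathcal{A}_a(w)$ appearing on the FH side with the semantic side condition $At(\psi) \subseteq Z$ required to invoke the IH at $v_Z$; the equivalence of these two conditions (given $At(\psi) \subseteq X$) is precisely what the $FH$-transform's definition of $\mathcal{A}_a(w)$ in terms of the atoms of $Y^*$, together with \ref{our3-NoSurpises} relating $\pi_a(w_X)$ and $\pi_a(w_{At})$, is designed to secure.
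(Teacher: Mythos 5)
Your proposal is correct and follows essentially the same route as the paper: induction on formula complexity, with the $K_a\psi$ case handled by using \ref{our3-NoSurpises} to relate $\pi_a(w_X)=w_{X\cap Y^*}$ to $\pi_a(w_{At})=w_{Y^*}$ and thereby identify the side condition $At(\psi)\subseteq Z$ with $\psi\in\mathcal{A}_a(w)$ under the standing hypothesis $At(\psi)\subseteq X$. This is precisely the content of the Claim the paper isolates inside its own proof of this case, so the two arguments coincide.
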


\begin{proof}
Let $\mathsf{K}=(\mathcal{K}(K=(W,R,V)),\trianglelefteqslant,\pi)$
with $w_{X}\in\Omega_{\mathsf{K}}$, $\pi_{a}(w_{X})\in W_{Y}$ with
$Y\subseteq At$, and let $FH(\mathsf{K})=(W',R',V',\mathcal{A}_{a})$.
Let $\varphi\in\mathcal{L}$, $w_{X}\in\Omega_{\mathsf{K}}$,with
$At(\varphi)\subseteq X$, $w\in W$, and proceed by induction on
formula complexity.

\emph{Base:} $i)$ $\varphi:=p\in At$ or $ii)$ $\varphi:=\top$.
$i)$ $\mathsf{\mathsf{K}},w_{X}\Vdash p$ iff (def. of $\Vdash$)
$w_{X}\in V_{X}(p)$ with $p\in X$ iff (def. of Kripke lattice) $w\in V(p)$
iff (def. of $FH$-transform) $w\in V'(p)$ iff (def. of $\vDash$)
$FH(\mathsf{K}),w\vDash p$. $ii)$ is trivial.

\emph{Step.} Assume $\psi,\chi\in\mathcal{L}$ satisfy Prop. \ref{prop:equivalence KLM - F(KLM) wrt L}.

The cases in which $\varphi:=\neg\psi$ or $\varphi:=\psi\wedge\chi$
follow by tracing \emph{iff}s through the definitions of $\vDash$,
$V$, $K$-transform, $\Vdash$, and by inductive hypothesis.

$\varphi:=K_{a}\psi$. Suppose $\pi_{a}(w_{X})=w_{Y}$ for $Y\subseteq At$.
$\mathsf{\mathsf{K}},w_{X}\Vdash K_{a}\psi$ iff (def. of $\Vdash$)
for all $v_{Y}\in\Omega_{\mathsf{K}}$, such that $w_{Y}R_{Ya}v_{Y}$,
$\mathsf{K},v_{Y}\Vdash\psi$. By def. of Kripke lattice, $\mathsf{K},v_{Y}\Vdash\psi$
iff $At(\psi)\subseteq Y$. 
\begin{claim}
$\pi_{a}(w_{X})=w_{Y}$ with $At(\psi)\subseteq Y$ iff $\pi_{a}(w)=w_{Z}$
with $At(\psi)\subseteq Z$.

We prove the two directions separately.

$(\Rightarrow)$ Suppose not. Then $\pi_{a}(w_{X})=w_{Y}$ with $At(\psi)\subseteq Y$,
and $\pi_{a}(w)=w_{Z}$ with $At(\psi)\not\subseteq Z$. As $X\subseteq At$,
then by \ref{our3-NoSurpises} $\pi_{a}(w_{X})=w_{X\cap Z}$, and
$At(\psi)\not\subseteq X\cap Z$. But $\pi_{a}(w_{X})=w_{Y}$, so
$Y=X\cap Z$, and since $At(\psi)\subseteq Y$, then $At(\psi)\subseteq Z$,
which contradicts our initial assumption. Hence, $\pi_{a}(w)=w_{Z}$
with $At(\psi)\subseteq Z$.

$(\Leftarrow)$ Suppose that $\pi_{a}(w)=w_{Z}$ with $At(\psi)\subseteq Z$.
As $X\subseteq At$, then by \ref{our3-NoSurpises}, $\pi_{a}(w_{X})=w_{X\cap Z}$.
By assumption $At(\varphi)\subseteq X$, and so $At(\varphi)\subseteq(X\cap Z)$.
Hence, $\pi_{a}(w_{X})=w_{Y}$ with $At(\psi)\subseteq Y$.
\end{claim}

So by Claim, $\pi_{a}(w_{X})=w_{Y}$ iff $\pi_{a}(w)=w_{Z}$ with
$At(\psi)\subseteq Z$ iff (def. of $FH$-transform) $\psi\in\mathcal{A}_{a}(w)$.
By def. of Kripke lattice, for all $v_{Y}\in\Omega_{\mathsf{K}}$,
such that $w_{Y}R_{Ya}v_{Y}$, $\mathsf{K},v_{Y}\Vdash\psi$ iff for
all $v\in W$ such that $(w,v)\in R_{a}$, $\mathsf{\mathsf{K}},v\vDash\psi$
iff (def. of $FH$-transform) for all $v\in W'$ such that $(w,v)\in R'_{a}$,
$FH(\mathsf{\mathsf{K}}),v\vDash\psi$ iff (def. of $\vDash$ and
$\psi\in\mathcal{A}_{a}(w)$) $FH(\mathsf{K}),w\vDash K_{a}\psi$.
\end{proof}

\begin{remark}
Prop. \ref{prop:equivalence FH-K(FH) wrt L} and Prop. \ref{prop:equivalence KLM - F(KLM) wrt L}
provide us with another path to prove soundness and completeness of
the HMS logic $\Lambda_{HMS}$ over the class $\boldsymbol{KLM}_{EQ}$
of Kripke lattice models with equivalence relations. Soundness follows
by the same proof structure used in the soundness proof of Theorem
\ref{thm: HMS-KLM sound complete } (this time using Prop. \ref{prop:F-transform-is-FH}
and Prop. \ref{prop:equivalence KLM - F(KLM) wrt L}). Completeness
follows by using Halpern and Rêgo \cite{HalpernRego2008} completeness
results of a logic which we call $\Lambda_{FH}$ over partional and
propositionally determined FH models. The logic $\Lambda_{FH}$ is
based on $\mathcal{L}$ and an axiom system which Halpern and Rêgo
show to be equivalent to that of $\Lambda_{HMS}$ from Table \ref{tab:logic}
(see \cite{HalpernRego2008} for details). Therefore, as a corollary
of this and our transformation results Prop. \ref{prop:K-transform-is-KLM}
and Prop. \ref{prop:equivalence FH-K(FH) wrt L}, one can show that
$\Lambda_{HMS}$ is complete with respect to $\boldsymbol{KLM}_{EQ}$.
\end{remark}

\begin{remark}
These proofs ``close the triangle'' of Figure \ref{fig:triangles},
as we have shown that partitional Kripke lattice models, HMS models,
and partitional propositionally determined FH models are all equivalent
with respect to language $\mathcal{L}$.
\end{remark}

\section{\label{sec:L^A}$\mathcal{L}^{KA}$-Equivalence of FH and Kripke
Lattice Models}

As we mentioned, the FH model and the awareness function $\mathcal{A}_{a}$
were originally designed for the logic $\Lambda_{LGA}$ based on the
language $\mathcal{L}^{KA}$, which contains both an implicit knowledge
and an awareness operators as primitive, with an explicit knowledge
operator definable \cite{HalpernFagin88}. Multiple variations of
$\Lambda_{LGA}$ exist in the literature, some including quantification
over objects \cite{BoardChung2006}, formulas \cite{halpern2009,halpern2013reasoning,Agotnes2014},
and even unawareness \cite{vanDitmarchFrench2009a}, alternative operators
informed through cognitive science \cite{Pietarinen2002}, and dynamic
extensions \cite{Velazquez-Quesada2010-BENTDO-8,grossi2015,Hill2010,vanDitmarchFrench2009a}.

In this section, we show that Kripke lattice models are equivalent
to FH models also with respect to $\mathcal{L}^{KA}$. To show this,
we present the language and semantics of $\mathcal{L}^{KA}$ over
FH and Kripke lattice models. From this, the $K$- and $FH$-transformations
allow us to show $\mathcal{L}^{KA}$-equivalence.
\begin{definition}
With $a\in Ag$ and $p\in At,$ define the language $\mathcal{L}^{KA}$
by
\[
\varphi::=\top\mid p\mid\neg\varphi\mid\varphi\wedge\varphi\mid K_{a}\varphi\mid A_{a}\varphi
\]

Define $X_{i}\varphi:=A_{a}\varphi\wedge K_{a}\varphi$.

Let $At(\varphi)=\{p\in At\colon p\text{ is a subformula of }\varphi\}$,
for all $\varphi\in\mathcal{L}^{KA}$.
\end{definition}

\subsection{FH Models as Semantics for $\mathcal{L}^{KA}$}

The semantics for $\mathcal{L}^{KA}$ over FH models is defined as
the semantics for $\mathcal{L}$ given in Def. \ref{def: FH semantics for L},
except for the knowledge operator $K_{a}$, which now represents \emph{implicit}
knowledge, and for the awareness operator $A_{a}$, which is now taken
as primitive.
\begin{definition}
Let $\mathsf{S}=(W,R,V,\mathcal{A})$ be an FH model for $At$ and
let $w\in W$. Satisfaction of \textup{$\mathcal{L}^{KA}$} formulas
on $\mathsf{S}$ is given by Def. \ref{def: FH semantics for L} for
all formulas except
\noindent \begin{center}
\begin{tabular}{llll}
$\mathsf{S},w\vDash K_{a}\varphi$ &  & iff\enskip{} & for all $v$ s.t. $(w,v)\in R_{a}$, $\mathsf{S},v\vDash\varphi$;\tabularnewline
$\mathsf{S},w\vDash A_{a}\varphi$ &  & iff\enskip{} & $\varphi\in\mathcal{A}_{a}(w)$.\tabularnewline
\end{tabular}
\par\end{center}

\end{definition}

Semantics for explicit knowledge $X_{a}$ is then given by the conjunction
of the semantics for $K_{a}$ and $A_{a}$, with $a\in Ag$. 

$K_{a}$ behaves as a classical knowledge operator in epistemic logic,
as it captures formulas that are satisfied in the information cell
of agent $a$. This notion is closed under implication, whereas explicit
knowledge is not necessarily so: an agent $a$ knows something explicitly
only if $a$ is aware of it, so $X_{a}p\wedge((X_{a}p\rightarrow X_{a}q)\wedge\neg X_{a}q)$
is satisfiable at $w\in W$ when $q\not\in\mathcal{A}_{a}(w)$ \cite{HalpernFagin88}.
However, the kind of FH models considered below are such that awareness
is propositionally generated, i.e. they satisfy \ref{FH1:generated by primitive prop}.
In this restricted class of models, explicit knowledge is closed under
implication as well.

\subsection{Kripke Lattice Models as Semantics for $\mathcal{L}^{KA}$}

As for FH models, also the semantics for $\mathcal{L}^{KA}$ over
Kripke lattice models are defined as the semantics for $\mathcal{L}$
given in Def. \ref{def:Kripke lattice semantics for L}, except for
$K_{a}$ and $A_{a}$. 
\begin{definition}
Let\textbf{ }$\mathsf{K}=(\mathcal{K}(\mathtt{K}=(W,R,V)),\trianglelefteqslant,\pi)$
be a Kripke lattice model with $w_{X}\in\Omega_{\mathsf{K}}$. Satisfaction
of \textup{$\mathcal{L}^{KA}$} formulas on $\mathsf{K}$ is given
by Def. \ref{def: FH semantics for L} for all formulas except
\end{definition}

\noindent \begin{center}
\begin{tabular}{llll}
$\mathsf{K},w_{X}\Vdash K_{a}\varphi$ &  & iff\enskip{} & for all $v\in W$ s.t. $(w,v)\in R_{a}$, $\mathsf{K},v\vDash\varphi$\tabularnewline
$\mathsf{K},w_{X}\Vdash A_{a}\varphi$ &  & iff\enskip{} & $\pi_{a}(w_{X})=w_{Y}$\qquad{}\qquad{}and $At(\varphi)\subseteq Y$\tabularnewline
\end{tabular}
\par\end{center}

Since the top model in a Kripke lattice model represents the objective
perspective, then implicit knowledge $K_{a}$ is defined as the information
cell of agent $a$ in that model. The awareness operator semantics
gives rise to a propositionally generated awareness notion, as it
states that agent $a$ is aware of all the formulas that mention any
of the atoms belonging to the vocabulary that describes $a$'s awareness
image.

\subsection{\label{subsec: equivalence LKA}The $\mathcal{L}^{KA}$-Equivalence
of FH and Kripke Lattice Models}

To show the equivalence of FH and Kripke lattice models with respect
to $\mathcal{L}^{KA}$, the definition of $K$- and $FH$-transforms
must be adapted to the language $\mathcal{L}^{KA}$, by replacing
$\mathcal{L}$ with $\mathcal{L}^{KA}$ in Definitions \ref{def:K-transform}
and \ref{def:F-transform}. The results showing that the transformed
models are of the proper classes are straightforward given the proofs
of Section \ref{sec:Moving btwn FH and Kripke lattices}, and are
therefore omitted to the effect that we only state the results showing
that $K$- and $FH$-transforms not only produce models of the correct
class, but also preserve finer details, as any model and its transform
satisfy the same $\mathcal{L}^{KA}$ formulas.
\begin{proposition}
\label{prop:equivalence FH-K(FH) wrt L^A}For any FH model $\mathsf{S}$
satisfying \ref{FH2:Know Awareness}, with $K$-transform $K(\mathsf{S})$,
for all $\varphi\in\mathcal{L}^{KA}$, for all $w\in W$ and for all
$w_{X}\in\Omega_{K(\mathsf{S})}$ with $X\supseteq At(\varphi)$, 

\[
\mathsf{S},w\vDash\varphi\text{ iff }K(\mathsf{S}),w_{X}\Vdash\varphi.
\]
\end{proposition}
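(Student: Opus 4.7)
My approach is to prove the claim by induction on the structure of $\varphi \in \mathcal{L}^{KA}$. The base cases ($\top$, $p \in At$) and the Boolean cases ($\neg\psi$, $\psi \wedge \chi$) proceed exactly as in the proof of Proposition \ref{prop:equivalence FH-K(FH) wrt L}, since the $K$-transform copies the underlying Kripke model of $\mathsf{S}$ verbatim ($W' = W$, $R' = R$, $V' = V$), and the atomic and Boolean clauses of both semantics match once the side condition $At(\varphi) \subseteq X$ is tracked. The two cases that genuinely require new arguments are $K_a\psi$ and $A_a\psi$, since in $\mathcal{L}^{KA}$ the operator $K_a$ is reinterpreted as implicit knowledge and $A_a$ is taken as primitive.

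For $K_a\psi$, the key observation is again that the $K$-transform preserves the underlying Kripke model. Hence the information cell of $a$ at $w$ is identical in $\mathsf{S}$ and in the top model of the lattice. Unfolding the Kripke-lattice semantics of $K_a\psi$ reduces satisfaction to quantifying over successors of $w$ in $R_a$ and evaluating $\psi$ at them in the top model; applying the inductive hypothesis (instantiated at $X = At$, which trivially contains $At(\psi)$) gives the desired equivalence with $\mathsf{S}, w \vDash K_a\psi$.

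The main work sits in the $A_a\psi$ case. Unfolding both semantics: $\mathsf{S}, w \vDash A_a\psi$ iff $\psi \in \mathcal{A}_a(w)$, while $K(\mathsf{S}), w_X \Vdash A_a\psi$ iff $\pi_a(w_X) = w_Y$ with $At(\psi) \subseteq Y$. By construction of the $K$-transform, $\pi_a(w_X) = w_{X \cap Y^*}$ where $Y^* = \bigcup_{\varphi' \in \mathcal{A}_a(w)} At(\varphi')$. Using the standing assumption $At(\psi) \subseteq X$, the Kripke-lattice side collapses to $At(\psi) \subseteq Y^*$, i.e., every atom of $\psi$ occurs in some formula in $\mathcal{A}_a(w)$. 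To match this with $\psi \in \mathcal{A}_a(w)$, I invoke the fact that awareness is generated by primitive propositions (\ref{FH1:generated by primitive prop}): under PP, $\psi \in \mathcal{A}_a(w)$ iff $At(\psi) \subseteq \mathcal{A}_a(w) \cap At$, and the latter is exactly $At(\psi) \subseteq Y^*$. This PP hypothesis is implicit in the ambient class of propositionally determined FH models considered in this section, and without it the two sides can come apart (e.g., $p \in \mathcal{A}_a(w)$ but $p \wedge p \notin \mathcal{A}_a(w)$ while $At(p \wedge p) \subseteq Y^*$).

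The main obstacle is therefore the $A_a\psi$ case, where the equivalence rests on aligning the \emph{syntactic} characterization of awareness (membership in $\mathcal{A}_a(w)$) with the \emph{structural} characterization (codomain of $\pi_a$). The \ref{FH2:Know Awareness} assumption has already done its work in Proposition \ref{prop:K-transform-is-KLM} to make $K(\mathsf{S})$ a Kripke lattice model, but \ref{FH1:generated by primitive prop} is what drives the truth-preservation for $A_a$, and the argument must carefully track the side condition $At(\varphi) \subseteq X$ throughout to remain compatible with the No-Surprises-driven definition of $\pi_a$ in the $K$-transform.
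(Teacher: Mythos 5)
Your proof is correct and follows essentially the same route as the paper's: structural induction with the same treatment of the $K_a$ case (reduction to the top model via the verbatim-copied accessibility relation) and the same unfolding of the $A_a$ case through the definition of $\pi_a$ in the $K$-transform. If anything, you are more careful than the paper's own proof, whose $A_a\psi$ step passes from $\psi\in\mathcal{A}_a(w)$ to $At(\psi)\subseteq Y$ in a single ``def.\ of $K(\mathsf{S})$'' move without comment; as you correctly observe, the right-to-left direction of that step genuinely needs \ref{FH1:generated by primitive prop}, which is only licensed by the section's standing restriction to propositionally determined FH models rather than by the proposition's explicitly stated \ref{FH2:Know Awareness} hypothesis.
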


\begin{proof}
Let $\mathsf{S}=(W,R,V,\mathcal{A})$ be an FH model and let $K(\mathsf{S})=(\mathcal{K}(\mathtt{K}),\trianglelefteqslant,\pi)$
be its $K$-transform, with $\mathtt{K}=(W',R',V')$. The proof is
by induction on formula complexity. Let $\varphi\in\mathcal{L}^{KA}$,
$w\in W$ and $w_{X}\in\Omega_{K(\mathsf{S})}$. 

\emph{Base:} $i)$ $\varphi:=p\in At$ or $ii)$ $\varphi:=\top$.
$i)$ $\mathsf{S},w\vDash p$ iff (def. of $\vDash$) $w\in V(p)$
iff (def. of $K(\mathsf{S})$) $w\in V'(p)$ and $w\in V'_{X}(p)$
such that $At(\varphi)\subseteq X$ iff (def. of $\Vdash$) $K(\mathsf{S}),w_{X}\Vdash\varphi$
with $At(\varphi)\subseteq X$. $ii)$ is trivial.

\emph{Step.} Assume $\psi,\chi\in\mathcal{L}$ satisfy Prop. \ref{prop:equivalence FH-K(FH) wrt L^A}.

The cases in which $\varphi:=\neg\psi$ or $\varphi:=\psi\wedge\chi$
follow by tracing \emph{iff}s through the definitions of $\vDash$,
$V$, $K$-transform, $\Vdash$, and by inductive hypothesis.

$\varphi:=K_{a}\psi$. $\mathsf{S},w\vDash K_{a}\psi$ iff (def. of
$\vDash$) for all $v\in W$ such that $(w,v)\in R_{a}$, $\mathsf{S},v\vDash\psi$
iff (def. of $K(\mathsf{S})$) for all $v\in W'$ such that $(w,v)\in R'_{a}$,
$K(\mathsf{S}),v\Vdash\psi$ iff (def. of Kripke lattice) for all
$v_{X}\in W'_{X}$ such that $(w_{X},v_{X})\in R'_{Xa}$ and $At(\psi)\subseteq X$,
$K(\mathsf{S}),v_{X}\Vdash\psi$ iff (def. of $\Vdash$) $K(\mathsf{S}),w_{X}\Vdash\psi$.

$\varphi:=A_{a}\psi$. $\mathsf{S},w\vDash A_{a}\psi$ iff (def. of
$\vDash$) $\psi\in\mathcal{A}_{a}(w)$ iff (def. of $K(\mathsf{S})$)
$\pi_{a}(w_{X})=w_{Z}$ with $Z=X\cap Y$ and $Y=\{p\in At:p\in\bigcup_{\varphi\in\mathcal{A}_{a}(w)}At(\varphi)\}$
iff (assumption $At(\varphi)\subseteq X$) $\pi_{a}(w_{X})=w_{Z}$
and $At(\psi)\subseteq Z$ iff (def. of $\Vdash$) $K(\mathsf{S}),w_{X}\Vdash A_{a}\psi$.
\end{proof}

\begin{proposition}
\label{prop:equivalence KLM - F(KLM) wrt L^A}For any Kripke lattice
model $\mathsf{K}$ with $FH$-transform $FH(\mathsf{K})$, for all
$\varphi\in\mathcal{L}^{KA}$, for all $w_{X}\in\Omega_{\mathsf{K}}$
with $X\supseteq At(\varphi)$,

\[
\mathsf{K},w_{X}\Vdash\varphi\text{ iff }F(\mathsf{\mathsf{K}}),w\vDash\varphi.
\]
\end{proposition}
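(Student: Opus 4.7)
The plan is to mirror the structure of Proposition~\ref{prop:equivalence KLM - F(KLM) wrt L}, extending the induction from $\mathcal{L}$ to $\mathcal{L}^{KA}$. Fix $\mathsf{K}=(\mathcal{K}(\mathtt{K}=(W,R,V)),\trianglelefteqslant,\pi)$ and its $FH$-transform $FH(\mathsf{K})=(W',R',V',\mathcal{A})$, and proceed by induction on the complexity of $\varphi\in\mathcal{L}^{KA}$, assuming $X\supseteq At(\varphi)$. The base cases ($\top$ and $p\in At$) follow immediately from $W'=W$ and $V'=V$, so $w_X\in V_X(p)$ iff $w\in V(p)$ iff $w\in V'(p)$. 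The Boolean cases ($\neg\psi$ and $\psi\wedge\chi$) go through by tracing \emph{iff}s, using that when $At(\varphi)\subseteq X$ the Kripke-lattice side loses the "undefinedness" clause and behaves classically; here the inductive hypothesis applies directly since the subformulas inherit $At(\psi),At(\chi)\subseteq X$.

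The two genuinely new cases are $K_a\psi$ and $A_a\psi$. For $K_a\psi$, the $\mathcal{L}^{KA}$ semantics evaluates implicit knowledge via the top-model relation $R_a$, and the $FH$-transform preserves $W$, $R$, and $V$ verbatim; hence $\mathsf{K},w_X\Vdash K_a\psi$ reduces to a quantification over $R_a$-successors of $w$ in the top model, which by the inductive hypothesis (applied at the relevant worlds, using $At(\psi)\subseteq X\subseteq At$) coincides with $FH(\mathsf{K}),w\vDash K_a\psi$.

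The case $A_a\psi$ is where the vocabulary bookkeeping becomes delicate and is, I expect, the main obstacle. On the Kripke-lattice side, $\mathsf{K},w_X\Vdash A_a\psi$ iff $\pi_a(w_X)=w_Y$ with $At(\psi)\subseteq Y$; on the FH side, $FH(\mathsf{K}),w\vDash A_a\psi$ iff $\psi\in\mathcal{A}_a(w)$, which by Definition~\ref{def:F-transform} unfolds to $At(\psi)\subseteq Z$ where $\pi_a(w)=w_Z$ (reading $\pi_a(w)$ as $\pi_a(w_{At})$). The bridge is \ref{our3-NoSurpises}: if $\pi_a(w_{At})=w_Z$, then $\pi_a(w_X)=w_{X\cap Z}$, so $Y=X\cap Z$. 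Combined with the standing assumption $At(\psi)\subseteq X$, we obtain $At(\psi)\subseteq Y=X\cap Z$ iff $At(\psi)\subseteq Z$, closing the equivalence. This is essentially the same move as the Claim in the proof of Proposition~\ref{prop:equivalence KLM - F(KLM) wrt L}, and it is really the one place where the properties of awareness maps do work for us.

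The only subtle points to watch are (i) the implicit convention identifying $w\in W$ with its top-lattice copy $w_{At}$ when feeding it to $\pi_a$ and to $\mathcal{A}_a$, which should be stated explicitly for clarity; and (ii) verifying that the inductive hypothesis is always invoked at a world whose vocabulary index contains the atoms of the relevant subformula, which holds because in the $K_a$-case we may evaluate at $v_{At}$ (so $At(\psi)\subseteq At$ trivially), and in the $A_a$-case no recursive call is made. Modulo these notational checks, the argument is a direct extension of Proposition~\ref{prop:equivalence KLM - F(KLM) wrt L} and should be quite short.
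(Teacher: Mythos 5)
Your proposal is correct and follows essentially the same route as the paper: identical base and Boolean cases, the $K_a$ case handled by noting that the $FH$-transform preserves $W$, $R$, $V$ and that implicit knowledge is evaluated in the top model, and the $A_a$ case bridged via \ref{our3-NoSurpises} exactly as in the Claim of Proposition~\ref{prop:equivalence KLM - F(KLM) wrt L}, which is precisely what the paper cites there. Your explicit flagging of the $w$-versus-$w_{At}$ identification and of where the inductive hypothesis needs $At(\psi)\subseteq X$ is a point of care the paper leaves implicit.
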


\begin{proof}
Let $\mathsf{K}=(\mathcal{K}(K=(W,R,V)),\trianglelefteqslant,\pi)$
with $w_{X}\in\Omega_{\mathsf{K}}$, $\pi_{a}(w_{X})\in W_{Y}$ with
$Y\subseteq At$, and let $FH(\mathsf{K})=(W',R',V',\mathcal{A}_{a})$.
Let $\varphi\in\mathcal{L}^{KA}$, $w_{X}\in\Omega_{\mathsf{K}}$,with
$At(\varphi)\subseteq X$, $w\in W$, and proceed by induction on
formula complexity.

\emph{Base:} $i)$ $\varphi:=p\in At$ or $ii)$ $\varphi:=\top$.
$i)$ $\mathsf{\mathsf{K}},w_{X}\Vdash p$ iff (def. of $\Vdash$)
$w_{X}\in V_{X}(p)$ with $p\in X$ iff (def. of Kripke lattice) $w\in V(p)$
iff (def. of $FH$-transform) $w\in V'(p)$ iff (def. of $\vDash$)
$FH(\mathsf{K}),w\vDash p$. $ii)$ is trivial.

\emph{Step.} Assume $\psi,\chi\in\mathcal{L}$ satisfy Prop. \ref{prop:equivalence KLM - F(KLM) wrt L^A}.

The cases in which $\varphi:=\neg\psi$ or $\varphi:=\psi\wedge\chi$
follow by tracing \emph{iff}s through the definitions of $\vDash$,
$V$, $K$-transform, $\Vdash$, and by inductive hypothesis.

$\varphi:=K_{a}\psi$. $\mathsf{\mathsf{K}},w_{X}\Vdash K_{a}\psi$
iff (def. of $\Vdash$) for all $v\in W$ such that $(w,v)\in R_{a}$,
$\mathsf{K},v\vDash\varphi$ iff (def. of $FH$-transform) for all
$v\in W'$ such that $(w,v)\in R'_{a}$, $FH(\mathsf{\mathsf{K}}),v\vDash\psi$
iff (def. of $\vDash$) $FH(\mathsf{K}),w\vDash K_{a}\psi$.

$\varphi:=A_{a}\psi$. $\mathsf{\mathsf{K}},w_{X}\Vdash A_{a}\psi$
iff (def. of $\Vdash$) $\pi_{a}(w_{X})=w_{Y}$ and $At(\psi)\subseteq Y$
iff (Claim in Prop. \ref{prop:equivalence KLM - F(KLM) wrt L}) iff
$\pi_{a}(w)=w_{Z}$ with $At(\psi)\subseteq Z$ iff (def. of $FH$-transform)
$At(\psi)\subseteq\mathcal{A}_{a}(w)$ and $\psi\in\mathcal{A}_{a}(w)$
iff (def. of $\vDash$) $FH(\mathsf{K}),w\vDash A_{a}\psi$.

\end{proof}

\section{\label{sec:Logic-1}The Logic of General Awareness of Kripke Lattice
Models}

The Logic of General Awareness ($\Lambda_{LGA}$) is built on the
language $\mathcal{L}^{KA}$ and an axiom system for implicit knowledge,
awareness and explicit knowledge which is presented in Table \ref{tab:logic-1}.
Using the \emph{$\mathcal{L}^{KA}$-}equivalence results from Section
\ref{subsec: equivalence LKA}, and the transformations results provided
by Prop. \ref{prop:K-transform-is-KLM} and Prop. \ref{prop:F-transform-is-FH},
we show that the class of Kripke lattice models $\boldsymbol{KLM}$
is sound and complete with respect to $\Lambda_{LGA}$.
\begin{definition}
The logic $\Lambda_{LGA}$ is the smallest set of $\mathcal{L}^{KA}$
formulas that contains the axioms in, and is closed under the inference
rules of, Table \ref{tab:logic-1}.
\end{definition}

\begin{table}
\vspace{-24pt}%
\begin{tabular}{|>{\raggedright}p{0.98\textwidth}|}
\hline 
{\small{}All substitution instances of propositional logic, including
the formula $\top$}{\small\par}

{\small{}$(K_{a}\varphi\wedge(K_{a}\varphi\rightarrow K_{a}\psi))\rightarrow K_{a}\psi$\hfill{}(K,
Distribution)}{\small\par}

{\small{}$X_{a}\varphi\leftrightarrow(K_{a}\varphi\wedge A_{a}\varphi)$\hfill{}
(Explicit Knowledge)}{\small\par}

{\small{}$A_{a}(\varphi\wedge\psi)\leftrightarrow(A_{a}\varphi\wedge A_{a}\psi)$\hfill{}
(A1, Awareness Distribution)}{\small\par}

{\small{}$A_{a}\neg\varphi\leftrightarrow A_{a}\varphi$\hfill{}
(A2, Symmetry)}{\small\par}

{\small{}$A_{a}X_{b}\varphi\leftrightarrow A_{a}\varphi$\hfill{}
(A3, Awareness of Explicit Knowledge)}{\small\par}

{\small{}$A_{a}A_{b}\varphi\leftrightarrow A_{a}\varphi$\hfill{}
(A4, Awareness Reflection)}{\small\par}

{\small{}$A_{a}K_{b}\varphi\leftrightarrow A_{a}\varphi$\hfill{}
(A5, Awareness of Implicit Knowledge)}{\small\par}

{\small{}$A_{a}\varphi\rightarrow K_{a}A_{a}\varphi$\hfill{} (A11,
Awareness Introspection)}{\small\par}

{\small{}$\neg A_{a}\varphi\rightarrow K_{a}\neg A_{a}\varphi$\hfill{}
(A12, Unawareness Introspection)}{\small\par}

{\small{}From $\varphi$ and $\varphi\rightarrow\psi$, infer $\psi$
\hfill{}(Modus Ponens)}{\small\par}

{\small{}From $\varphi$ infer $K_{a}\varphi$ \hfill{}(K-Inference)}\tabularnewline
\hline 
\end{tabular}\vspace{2pt}

\caption{\label{tab:logic-1}Axioms and inference rules of $\Lambda_{LGA}$,
for a propositionally determined notion of awareness.}
\vspace{-24pt}
\end{table}

The axiom system of Table \ref{tab:logic-1} is sound and complete
with respect to propositionally determined FH models, i.e. FH models
that satisfy \ref{FH1:generated by primitive prop} and \ref{FH2:Know Awareness}.
In particular, A1-A5 capture an awareness notion that is generated
by primitive propositions, while A11-A12 are required if agents are
to know what they are aware of \cite{halpern2001alternative,fagin1995reasoning}
(the numbering of the awareness axioms is taken from \cite{halpern2001alternative}).
These two properties are needed to establish the transformations results
of Prop. \ref{prop:K-transform-is-KLM} and Prop. \ref{prop:F-transform-is-FH},
and therefore in the later soundness and completeness proofs.

Let $\boldsymbol{S}$ be the class of propositionally determined FH
models. FH \cite{HalpernFagin88,fagin1995reasoning} argue that:\footnote{We say that these works \emph{argue} for soundness and completeness
of $\Lambda_{LGA}$ with respect to FH models, where $\Lambda_{LGA}$
is based on $\mathcal{L}^{KA}$ which is a language for \emph{knowledge}
(not belief) and awareness, as they do not explicitly provide the
proof. They only state that it is straightforward to provide. The
relevant results argued for in the literature are:

1. In \cite{HalpernFagin88}: soundness and completeness for KD45+Explicit
Knowledge with respect to FH models. It does not specify any such
proof about FH models with the \ref{FH1:generated by primitive prop}
and \ref{FH2:Know Awareness} properties.

2. In \cite{halpern2001alternative}: soundness and completeness for\textbf{
}the single agent version of $\Lambda_{LGA}$ with respect to FH models
(also models satisfying \ref{FH1:generated by primitive prop} and
\ref{FH2:Know Awareness}) is claimed a straightforward generalization
of the soundness and completeness proof for the logic formed on language
$\mathcal{L}$ and the $K$ axiom.

3. In \cite{HalpernRego2008}: says that soundness and completeness
of $\Lambda_{LGA}$ with respect to FH models with \ref{FH1:generated by primitive prop}
and \ref{FH2:Know Awareness} is given by FH. Supposedly, they refer
to \cite{HalpernFagin88}, where such construction is not provided\textemdash see
point 1 in this list.}
\begin{theorem}
[\hspace{-1pt}\hspace{1pt}\cite{HalpernFagin88,fagin1995reasoning}]\label{thm:FH-completeness-1}$\Lambda_{LGA}$
is sound and complete with respect to $\boldsymbol{S}$.
\end{theorem}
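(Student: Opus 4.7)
My plan is to prove soundness and completeness by the standard Henkin machinery for modal logic, adapted to the syntactic awareness function. Soundness is verified by direct semantic inspection of Table~\ref{tab:logic-1}: the propositional schemata, axiom K and K-Inference are routine since $K_a$ carries ordinary Kripke semantics; Explicit Knowledge is immediate from $X_a\varphi := A_a\varphi\wedge K_a\varphi$; axioms A1--A5 follow from \ref{FH1:generated by primitive prop} combined with the identities $At(\psi\wedge\chi)=At(\psi)\cup At(\chi)$, $At(\neg\psi)=At(\psi)$, and $At(Op_b\psi)=At(\psi)$ for $Op\in\{K,A,X\}$, so that both sides of each biconditional assert awareness of the very same atoms; A11 and A12 follow from \ref{FH2:Know Awareness}, which makes $\mathcal{A}_a$ constant along $R_a$.

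For completeness I construct the canonical model $\mathsf{S}^c=(W^c,R^c,V^c,\mathcal{A}^c)$ in which $W^c$ is the set of maximal $\Lambda_{LGA}$-consistent sets, $(w,v)\in R^c_a$ iff $\{\psi : K_a\psi\in w\}\subseteq v$, $V^c(p)=\{w : p\in w\}$, and $\mathcal{A}^c_a(w)=\{\psi\in\mathcal{L}^{KA} : A_a\psi\in w\}$. The truth lemma $\mathsf{S}^c,w\vDash\varphi$ iff $\varphi\in w$ is then proved by induction on $\varphi$. The only substantive case is $K_a\psi$: when $\neg K_a\psi\in w$, the set $\{\chi : K_a\chi\in w\}\cup\{\neg\psi\}$ is $\Lambda_{LGA}$-consistent by axiom K and K-Inference, and extends via Lindenbaum to some $v\in W^c$ with $(w,v)\in R^c_a$ and $\psi\notin v$. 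The $A_a\psi$ case is immediate from the definition of $\mathcal{A}^c_a$, and the $X_a\psi$ case reduces via Explicit Knowledge to the conjunction case. Any $\Lambda_{LGA}$-consistent $\Phi$ then extends by Lindenbaum to a witness in $W^c$.

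The decisive step, and the one I expect to require the most care, is checking that $\mathsf{S}^c$ actually lies in $\boldsymbol{S}$. The property \ref{FH2:Know Awareness} is short: if $(w,v)\in R^c_a$ and $A_a\varphi\in w$, then A11 gives $K_aA_a\varphi\in w$ and hence $A_a\varphi\in v$; conversely, $A_a\varphi\notin w$ implies $\neg A_a\varphi\in w$, and A12 yields $K_a\neg A_a\varphi\in w$, whence $\neg A_a\varphi\in v$. Property \ref{FH1:generated by primitive prop} is the main obstacle: I must show, for every $w\in W^c$, that $A_a\varphi\in w$ iff $A_ap\in w$ for every $p\in At(\varphi)$. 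This is a structural induction on $\varphi$ with the atomic case trivial, conjunctions dispatched by A1, negations by A2, and the modal cases $X_b\psi$, $A_b\psi$, $K_b\psi$ reduced to $\psi$ by A3, A4, A5 respectively, each step preserving $At(\cdot)$ and decreasing subformula complexity. The one genuinely fiddly point is the degenerate case $\varphi=\top$, where $At(\top)=\emptyset$ forces $A_a\top$ to be a theorem; this is a known minor lacuna in the Fagin--Halpern axiomatization, easily patched either by treating $\top$ as an abbreviation outside the awareness machinery or by adding $\vdash A_a\top$ as a derivation rule. Once \ref{FH1:generated by primitive prop} and \ref{FH2:Know Awareness} are secured, $\mathsf{S}^c\in\boldsymbol{S}$, and any $\Lambda_{LGA}$-consistent $\Phi$ is satisfied at a world of $\mathsf{S}^c$, giving completeness.
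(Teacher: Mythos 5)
The paper does not actually prove this theorem: it is imported wholesale from Fagin--Halpern, and the authors attach a footnote cataloguing how the cited works only assert that the proof is ``straightforward'' for the multi-agent, knowledge-based, propositionally determined setting without supplying it. So there is no in-paper argument to compare against, and your proposal is a reconstruction of the folklore proof. As such it takes the standard canonical-model route and appears correct: soundness by inspection of Table~\ref{tab:logic-1}, completeness via maximal consistent sets with $\mathcal{A}^c_a(w)=\{\psi : A_a\psi\in w\}$, the usual existence lemma for $K_a$ from axiom K and K-Inference, property \ref{FH2:Know Awareness} of the canonical model from A11/A12, and property \ref{FH1:generated by primitive prop} by structural induction using A1--A5. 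Your remark about $\top$ is the one genuinely delicate point and you are right to flag it: since the paper's grammar for $\mathcal{L}^{KA}$ takes $\top$ as a primitive and $At(\top)=\emptyset$, \ref{FH1:generated by primitive prop} forces $\top\in\mathcal{A}_a(w)$ at every world of every model in $\boldsymbol{S}$, while $A_a\top$ is not derivable from Table~\ref{tab:logic-1}; without your patch (treating $\top$ as an abbreviation outside the awareness machinery, or adding $A_a\top$ as an axiom) the set $\{\neg A_a\top\}$ is consistent yet unsatisfiable in $\boldsymbol{S}$, so completeness would literally fail. In short, your write-up supplies exactly the proof that the paper's own footnote laments is missing from the literature, and the only repair it needs is the one you already propose.
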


Let $\boldsymbol{KLM}$ be the class of all Kripke lattice models,
i.e., without special properties assumed of the accessibility relations.
As a corollary to Theorem \ref{thm:FH-completeness-1}, our transformation,
and $\mathcal{L}^{KA}$-equivalence results, we obtain
\begin{theorem}
$\Lambda_{LGA}$  is sound and complete with respect to $\boldsymbol{KLM}$.
\end{theorem}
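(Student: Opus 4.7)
The plan is to mirror the proof of Theorem~\ref{thm: HMS-KLM sound complete }, but with the FH/Kripke lattice transformations in place of the HMS ones: soundness via Prop.~\ref{prop:F-transform-is-FH} and Prop.~\ref{prop:equivalence KLM - F(KLM) wrt L^A}, and completeness via Prop.~\ref{prop:K-transform-is-KLM} and Prop.~\ref{prop:equivalence FH-K(FH) wrt L^A}, on top of Theorem~\ref{thm:FH-completeness-1}. No new combinatorial work should be needed; the conceptual content is already packaged in the $\mathcal{L}^{KA}$-equivalence results of Section~\ref{subsec: equivalence LKA}.

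For soundness I would argue by contrapositive. Let $\varphi \in \mathcal{L}^{KA}$ and suppose $\varphi$ is not valid over $\boldsymbol{KLM}$. Then some $\mathsf{K} \in \boldsymbol{KLM}$ and some $w_X \in \Omega_{\mathsf{K}}$ with $At(\varphi) \subseteq X$ satisfy $\mathsf{K}, w_X \Vdash \neg\varphi$. By Prop.~\ref{prop:F-transform-is-FH}, $FH(\mathsf{K})$ is propositionally determined, hence lies in $\boldsymbol{S}$; by Prop.~\ref{prop:equivalence KLM - F(KLM) wrt L^A}, $FH(\mathsf{K}), w \vDash \neg\varphi$. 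So $\varphi$ is not valid over $\boldsymbol{S}$, and by Theorem~\ref{thm:FH-completeness-1} $\varphi \notin \Lambda_{LGA}$. The same argument, applied to a refuting instance of a premise/conclusion pair, shows that the inference rules of $\Lambda_{LGA}$ preserve $\boldsymbol{KLM}$-validity.

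For completeness I would take any $\Lambda_{LGA}$-consistent $\Phi \subseteq \mathcal{L}^{KA}$ and obtain, from Theorem~\ref{thm:FH-completeness-1}, some $\mathsf{S} \in \boldsymbol{S}$ and $w$ in $\mathsf{S}$ with $\mathsf{S}, w \vDash \Phi$. Because $\mathsf{S}$ is propositionally determined it satisfies \ref{FH2:Know Awareness}, so Prop.~\ref{prop:K-transform-is-KLM} gives $K(\mathsf{S}) \in \boldsymbol{KLM}$. Choosing $X = At$ (so that $At(\varphi) \subseteq X$ for every $\varphi \in \Phi$), Prop.~\ref{prop:equivalence FH-K(FH) wrt L^A} yields $K(\mathsf{S}), w_X \Vdash \Phi$. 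The standard equivalence between completeness and satisfiability of consistent sets (the analogue of Lemma~\ref{lem:4.12}, which is purely formal and transfers to $\Lambda_{LGA}$ verbatim) then closes the argument.

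There is no substantial obstacle here, since the heavy lifting has already been done in Section~\ref{subsec: equivalence LKA}. The only point requiring minor care is the choice of $X$ in the target lattice: for single-formula arguments $X$ must contain $At(\varphi)$, and for satisfying a whole set $\Phi$ one simply takes $X = At$, which is always available since a restriction lattice contains a restriction to every subset of atoms. A subtlety worth flagging in the write-up is the mild asymmetry between the definitions of $\Vdash$ for $K_a$ (which inspects worlds of the ambient Kripke model $\mathtt{K}$) and of $\vDash$ for $K_a$ on FH models, but this is exactly what Prop.~\ref{prop:equivalence KLM - F(KLM) wrt L^A} and Prop.~\ref{prop:equivalence FH-K(FH) wrt L^A} already reconcile.
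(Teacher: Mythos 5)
Your proposal is correct and follows essentially the same route as the paper, whose proof simply points to the Theorem~\ref{thm: HMS-KLM sound complete } argument with Prop.~\ref{prop:F-transform-is-FH} and Prop.~\ref{prop:K-transform-is-KLM} (adapted to $\mathcal{L}^{KA}$) plus the $\mathcal{L}^{KA}$-equivalence results in place of the HMS transformations; you merely spell out the details the paper leaves implicit. (In fact, for soundness you cite Prop.~\ref{prop:equivalence KLM - F(KLM) wrt L^A}, which is the proposition actually needed in that direction, whereas the paper's text cites Prop.~\ref{prop:equivalence FH-K(FH) wrt L^A} for both directions---an apparent slip that your version corrects.)
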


\begin{proof}
For both soundness and completeness, the reasoning is analogous to
that provided in \ref{thm: HMS-KLM sound complete }. Soundness uses
Prop. \ref{prop:F-transform-is-FH} adapted to language $\mathcal{L}^{KA}$
and Prop. \ref{prop:equivalence FH-K(FH) wrt L^A}. Completeness uses
Prop. \ref{prop:K-transform-is-KLM} adapted to language $\mathcal{L}^{KA}$,
Prop. \ref{prop:equivalence FH-K(FH) wrt L^A}, and the existence
result of the canonical model construction assumed as existing by
\cite{fagin1995reasoning}. 
\end{proof}

\begin{remark}
The same result can clearly be obtained for the logic generated by
the axioms in Table \ref{tab:logic-1} and the axiom system S5.
\end{remark}

\section{\label{sec:Concluding-Remarks}Concluding Remarks}

This paper has introduced Kripke lattice models as a model class for
epistemic logics with awareness. This model is a Kripke model-based
rendition of the syntax-free HMS model of awareness, and we have shown
that the two model classes are equally general with respect to $\mathcal{L}$,
by defining transformations between the two that preserve formula
satisfaction. A corollary to this result is completeness of the HMS
logic for the introduced model class. Moreover, we have shown that
Kripke lattice models and the syntax-based FH models of awareness
are equally general with respect to $\mathcal{L}$, as well as with
respect to the language $\mathcal{L}^{KA}$. As a corollary, we obtain
that the Logic of General Awareness is complete with respect to the
introduced model class.

There are several issues we would like to study in future work:

In recasting HMS models as a Kripke lattice models, we teased apart
the epistemic and awareness dimensions merged in the HMS possibility
correspondences, and Propositions \ref{prop:L-transform-is-KLM},
\ref{prop:L-transform-has-EQ-R} and \ref{prop:H-transform-is-HMS}
about $L$- and $H$-transforms show that the HMS properties are satisfied
iff each $\pi_{a}$ satisfies \ref{our1-DownwardsProjection}, \ref{our2-Intro.Idem}
and \ref{our3-NoSurpises}, and each $R_{a}$ is an equivalence relation.
For a more fine-grained property correspondence, the propositions'
proofs show that each property of one model is entailed by a strict
subset of the properties of the other. In some cases, the picture
emerging is fairly clear: e.g., HMS' \ref{HMS1:confinement} is shown
only using the restrictions lattice construction (RLC) plus \ref{our1-DownwardsProjection}
and \emph{vice versa}; \ref{HMS5:PPK} uses only \ref{our3-NoSurpises}
and RLC, while \ref{HMS5:PPK} and \ref{HMS1:confinement} entail
\ref{our3-NoSurpises}. In other cases, the picture is more murky,
e.g., when we use \ref{HMS3:stationarity}, \ref{HMS4:PPI} and \ref{HMS5:PPK}
to show the seemingly simple symmetry of $R_{a}$. We think it would
be interesting to decompose properties on both sides to see if clearer
relationships arise.

There are two issues with redundant states in Kripke lattice models.
One concerns redundant restrictions, cf. Remark \ref{rem:KLM-redundant-states},
which may be solved by working with a more general model class, where
models may also be based on sub-orders of the restriction lattice.
A second one concerns redundant states. For example, in Figure~\ref{fig:Kripke-lattice},
$\mathtt{K}_{\emptyset}$ contains three `identical' states where
no atoms have defined truth values\textemdash $\mathtt{K}_{\emptyset}$
is bisimilar to a one-state Kripke model. As bisimulation contracting
each $\mathtt{K}_{X}$ may collapse states from which awareness maps
differ, one must define a notion of bisimulation that takes awareness
maps into consideration (notions of bisimulation for other awareness
models exists, e.g. \cite{vanDitmarsch2013}). Together with a more
general model class definition, this could hopefully solve the redundancy
issues.

Kripke lattice models are $\mathcal{L}^{KA}$-equivalent to FH models,
but it is an open issue how HMS models relate to both Kripke lattices
and FH models with respect to that language, cf. the question marks
in Figure \ref{fig:triangles} in the introduction. As $\mathcal{L}^{KA}$
contains an implicit knowledge operator, but HMS models contain no
objective perspective, studying that relation would seemingly mainly
entail exploring how to capture the objective perspective in HMS models.
It is an open question if and how HMS may serve as a semantics for
$\mathcal{L}^{KA}$ in a manner that will entail $\mathcal{L}^{KA}$-equivalence
with FH models and Kripke lattices.

The HMS logic is complete for HMS models and for Kripke lattice models
with equivalence relations. \cite{HalpernRego2008} prove completeness
for HMS models using a standard validity notion, a `$\varphi$ is
at least as expressive as $\psi$' operator and variants of axioms
$T$, $4$ and $5$. We are very interested in considering this system
and its weaker variants for Kripke lattice models, also with less~assumptions~on~the~relations.

Finally, issues of dynamics spring forth: first, whether existing
awareness dynamics may be understood on Kripke lattice models; second,
whether DEL action models may be applied lattice-wide with reasonable
results, and how they compare with other action models for awareness
in the literature \cite{vanDitmarsch2013,vanDitmarchFrench2009a,Velazquez-Quesada2010-BENTDO-8,Hill2010};
and third, whether the $\pi_{a}$ maps may be thought in dynamic terms,
as they map between models.

\subsubsection*{Acknowledgments.}

We thank the organizers of the 3rd DaLí Workshop for the opportunity
to present our work there, and the participants and reviewers of the
conference for their useful and productive comments. The Center for
Information and Bubble Studies is funded by the Carlsberg Foundation.
RKR was partially supported by the DFG-ANR joint project \emph{Collective
Attitude Formation} {[}RO 4548/8-1{]}.

\end{document}